\documentclass[preprint,twoside,11pt]{article}
\usepackage{jmlr2e}

\usepackage{amssymb}
\usepackage{mathtools}
\usepackage{caption, subcaption, graphicx}
\usepackage[dvipsnames]{xcolor}
\usepackage{algorithm}
\usepackage{algpseudocode} 
\usepackage{enumitem}
\usepackage{tikz}
\usepackage{adjustbox}
\usepackage{bm}
\usetikzlibrary{matrix,positioning,arrows.meta,calc,shadows,shapes,fit}
\usepackage{lineno}

\usepackage{hyperref}
\hypersetup{
  colorlinks   = true, 
  urlcolor     = blue, 
  linkcolor    = blue, 
  citecolor   = blue 
}
\usepackage{url}
\usepackage{cleveref}

\renewcommand{\eqref}[1]{\textup{(\ref{#1})}}

\newcommand{\LL}{\mathcal{L}}
\newcommand{\F}{\mathcal{F}}
\renewcommand{\P}{{\mathcal{P}}}

\newcommand{\mup}{\textup{m}}

\newcommand{\neweps}{\nu}

\newcommand{\RR}{\mathbb{R}}

\newcommand{\w}{\mathbf{w}}

\newcommand{\unif}{\textup{Unif}}

\newcommand{\ve}{\varepsilon}
\newcommand{\W}{\mathbb{W}}

\newcommand{\D}{D^\wedge}

\newtheorem{assumption}[theorem]{Assumption}

\ShortHeadings{Two-Time-Scale Learning Dynamics}{Borghi, Im and Pareschi}
\firstpageno{1}

\begin{document}


\title{Two-Time-Scale Learning Dynamics:\\ A Population View of Neural Network Training}

\author{\name Giacomo Borghi \email g.borghi@hw.ac.uk \\
      \addr Maxwell Institute for Mathematical Sciences and School of Mathematical and Computer Sciences\\
      Heriot--Watt University, Edinburgh, UK
      \AND
      \name Hyesung Im \email hi3001@hw.ac.uk \\
      \addr Maxwell Institute for Mathematical Sciences and School of Mathematical and Computer Sciences \\
      Heriot--Watt University, Edinburgh, UK
      \AND
      \name Lorenzo Pareschi \email l.pareschi@hw.ac.uk\\
      \addr Maxwell Institute for Mathematical Sciences and School of Mathematical and Computer Sciences\\
      Heriot--Watt University, Edinburgh, UK\\
      Department of Mathematics and Computer Science \\
      University of Ferrara, Italy}

\editor{Editors}

\maketitle
\begin{abstract}%
Population-based learning paradigms, including evolutionary strategies, Population-Based Training (PBT), and recent model-merging methods, combine fast within-model optimisation with slower population-level adaptation. Despite their empirical success, a general mathematical description of the collective dynamics remains incomplete. We propose a theoretical framework for neural network training based on two-time-scale multi-agent dynamics, where parameters follow fast noisy SGD/Langevin updates and hyperparameters evolve via slower selection–mutation dynamics.
 We prove the large-population limit for the joint distribution of parameters and hyperparameters and, under strong time-scale separation, derive a selection--mutation equation for the hyperparameter density. For each fixed hyperparameter, the fast parameter dynamics relaxes to a Boltzmann--Gibbs measure, inducing an effective fitness for the slow evolution. The averaged dynamics connects population-based learning with bilevel optimisation and classical replicator--mutator models, yields conditions under which the population mean moves toward the fittest hyperparameter, and clarifies the role of noise and diversity in balancing optimisation and exploration. Numerical experiments illustrate both the large-population regime and the reduced two-time-scale dynamics, and indicate that access to the effective fitness, either in closed form or through population-level estimation, can improve population-level updates. 
\end{abstract}

\begin{keywords}
Population-based training, hyperparameter optimisation, multi-agent system, propagation of chaos, multi-scale dynamics
\end{keywords}

\tableofcontents

\section{Introduction}

Modern large-scale machine learning increasingly resembles the evolution of model populations rather than the training of a single neural network. In reinforcement learning, hyperparameter optimisation, neural architecture search, and model evolution pipelines, one repeatedly encounters systems in which multiple agents (neural networks, trajectories, or strategies) coexist, interact, and adapt over time. Examples include evolutionary strategies for black-box optimisation~\citep{salimans2017Evolution,cui2018evolutionary}, population-based reinforcement learning as in AlphaStar~\citep{vinyals2019alphastar}, and recent population-level perspectives on collective learning dynamics~\citep{bouteiller2026sociodynamics}. Further examples arise in distributed AutoML frameworks and in more recent model-merging and evolution methods that improve networks by recombining existing ones~\citep{akiba2025evolutionary,wortsman2022model}. 
These developments suggest that modern AI training pipelines cannot be understood solely in terms of single-trajectory optimisation, but require a population-level perspective based on interaction rules and evolutionary mechanisms.

Within this broader context, hyperparameter optimisation (HPO) has long been recognised as a central component of efficient machine learning pipelines. Classical approaches such as grid search and random search~\citep{bergstra2012Random} explore fixed parameter configurations, while Bayesian optimisation methods~\citep{bergstra2011Algorithms,hutter2011Sequential} introduce surrogate models to balance exploration and exploitation. Multi-fidelity strategies, including successive halving and Hyperband~\citep{li2018Hyperband}, further reduce computational cost by adaptively pruning poor configurations. While effective in many settings, these methods typically operate on independent trials and do not explicitly model interaction or adaptation across agents.

Population-based methods form a distinct paradigm within HPO and meta-optimisation. Among them, Population-Based Training (PBT), originally introduced by DeepMind~\citep{jaderberg2017Population, li2019generalized}, maintains a population of neural networks that evolve through alternating phases of gradient-based learning and fitness-driven selection. Underperforming agents are periodically replaced by copies of better ones, often with perturbed hyperparameters, enabling the joint optimisation of model parameters and hyperparameters within a single training loop. Variants of this mechanism appear in evolutionary reinforcement learning, quality-diversity optimisation, and large-scale meta-optimisation, and are closely related to evolutionary strategies and population-driven metaheuristics~\citep{dalibard2021Faster,salimans2017Evolution,lee2025Evolving,zito2025Metaheuristics,hutter2019Automated}. Comprehensive surveys of HPO trends and methodologies can be found in~\citep{archambeau2024Hyperparameter}.

Beyond explicitly evolutionary algorithms, similar population-level dynamics arise implicitly in modern large-scale learning systems. In particular, the success of ensemble-based training and selection strategies in systems such as AlphaFold~\citep{jumper2021highly} highlights how effective learning can emerge from the interaction and aggregation of multiple trained models, even when the underlying algorithm is not formulated explicitly in evolutionary terms. More recently, classical PBT replacement mechanisms have been extended to act directly on model weights: instead of replace-and-mutate updates, models may be merged through learned recombination rules~\citep{akiba2025evolutionary} or by simple averaging of fine-tuned checkpoints~\citep{wortsman2022model}. 
Taken together, these examples reveal a gap in current theory, namely the lack of a mathematical framework able to explain and analyse the collective dynamics generated by interacting populations of models.

From an optimisation perspective, population-based training can also be interpreted as a particular instance of bilevel optimisation~\citep{franceschi2018Bilevel,trillos2024CB2O}, where an inner problem performs short gradient-based updates on model parameters, while an outer mechanism selects and adapts hyperparameters or structural traits. This viewpoint naturally induces a separation of time scales and clarifies the algorithmic role of selection and mutation, but still leaves open the question of how to describe such coupled dynamics in a principled continuous-time setting.

\begin{figure}[t]
  \centering
   \begin{adjustbox}{max width=\linewidth}
\begin{tikzpicture}[
  font=\normalsize,
  >=Stealth,
  label/.style={
    font=\normalsize,
    fill=white,
    fill opacity=1,
    text opacity=1,
    inner sep=2pt,
    rounded corners=2pt
  },
  basebox/.style={
    rounded corners=3mm,
    line width=1.5pt,
    fill=white,                 
    drop shadow={opacity=0.22, shadow xshift=0.9ex, shadow yshift=-0.9ex},
    inner sep=9pt,
    text width=6.2cm,
    align=left
  },
  boxBlue/.style={basebox, draw = ProcessBlue},
  boxOrange/.style={basebox, draw=BurntOrange},
  boxA/.style={boxBlue},
  boxC/.style={boxBlue},
  boxB/.style={boxOrange},
  boxD/.style={boxOrange},
  arr/.style={->, line width=1.5pt, draw=black!70}
]
\matrix (M) [
  matrix of nodes,
  row sep=19mm,
  column sep=24mm,
  nodes={anchor=north west}
]{
  |[boxA] (A)| {%
    {\large \bf\sffamily Neural Network agents}\\[-5pt]\rule{0.7\linewidth}{0.2pt}\\[2pt]
    $(\theta^i(t),h^i(t)),\, i = 1, \dots, N$\\ \bigskip
    $\theta^i(t)\gets$ parameters training  
    \\ \hspace{13mm} via SGD/Adam/\dots\\
    \medskip
    $h^i(t)\gets$ hyperparameters \\ \hspace{13mm}
    evolutionary update
  } &
  |[boxB] (B)| {%
    {\large \bf\sffamily Two-scale PDE}\\[-5pt]\rule{0.6\linewidth}{0.2pt}\\[2pt] 
    Density $f=f(t,\theta,h)$ \\ \vspace{3mm}
         $\mathcal L(\theta, h)$: loss function \\ \medskip
        $\mathcal F(\theta, h)$: fitness function \\ \medskip
    $\hspace{15mm}\displaystyle \frac{\partial f}{\partial t}=\frac{1}{\varepsilon}\,T_{\mathcal L}[f]\;+\;E_{\mathcal F}[f]$ \\ 
    \bigskip
  } \\
  |[boxC] (C)| {%
    {\large \bf\sffamily Reduced NNs dynamics}\\[-5pt]\rule{\linewidth}{0.2pt}\\[2pt]
    $h^i(t)\,,\; i = 1,\dots,N$\\ \bigskip
    $\theta^i(t) \sim \mu^{\infty}(\theta\,|\,h^i(t))$\\
    \hspace{12mm} sampling/time averaging\\ \medskip
    $h^i(t)\gets$ evolutionary update 
    \\ \hspace{13mm} with effective fitness  \\ \medskip
  } &
  |[boxD] (D)| {%
    {\large \bf\sffamily Averaged PDE}\\[-5pt]\rule{\linewidth}{0.2pt}\\[2pt]
    Ansatz $f(t,\theta,h)=\rho(t,h)\,\mu_\LL^{\infty}(\theta\,|\,h)$ \medskip \\
  $\overline{\mathcal F}(h)\! = \!\log\int e^{\mathcal F(\theta, h)} \mu_\LL^{\infty}(\theta|h) d\theta$:\,\,  \\ \medskip
  \hspace{15mm}  effective fitness
    \\ \bigskip
         {\hspace{20mm} $ \displaystyle \frac{\partial  \rho}{\partial t} =\overline{E}_{\overline{\mathcal F}}[\rho]$ }  \\ 

  } \\
};
%
\draw[arr] (A.east) -- (B.west)
  node[midway, label, yshift=17pt,align=center] {large $N$\\
  {\color{blue}[Section \ref{sec:2}]}};
\draw[arr] (C.east) -- (D.west)
  node[midway, label, yshift=11pt] {large $N$};
\draw[arr] (A.south) -- (C.north)
  node[midway, label, xshift=0pt] {two-scale training};
\draw[arr] (B.south) -- (D.north)
  node[midway, label, xshift=0pt, align=center] {two-scale separation ($\varepsilon \to 0$)\\
  {\color{blue}[Section \ref{sec:twoscale}]}};
\begin{scope}[overlay]
  \node[inner sep=0, anchor=north east]
    at ($(A.north east)+(-1mm,-2mm)$)
    {\includegraphics[width=19mm]{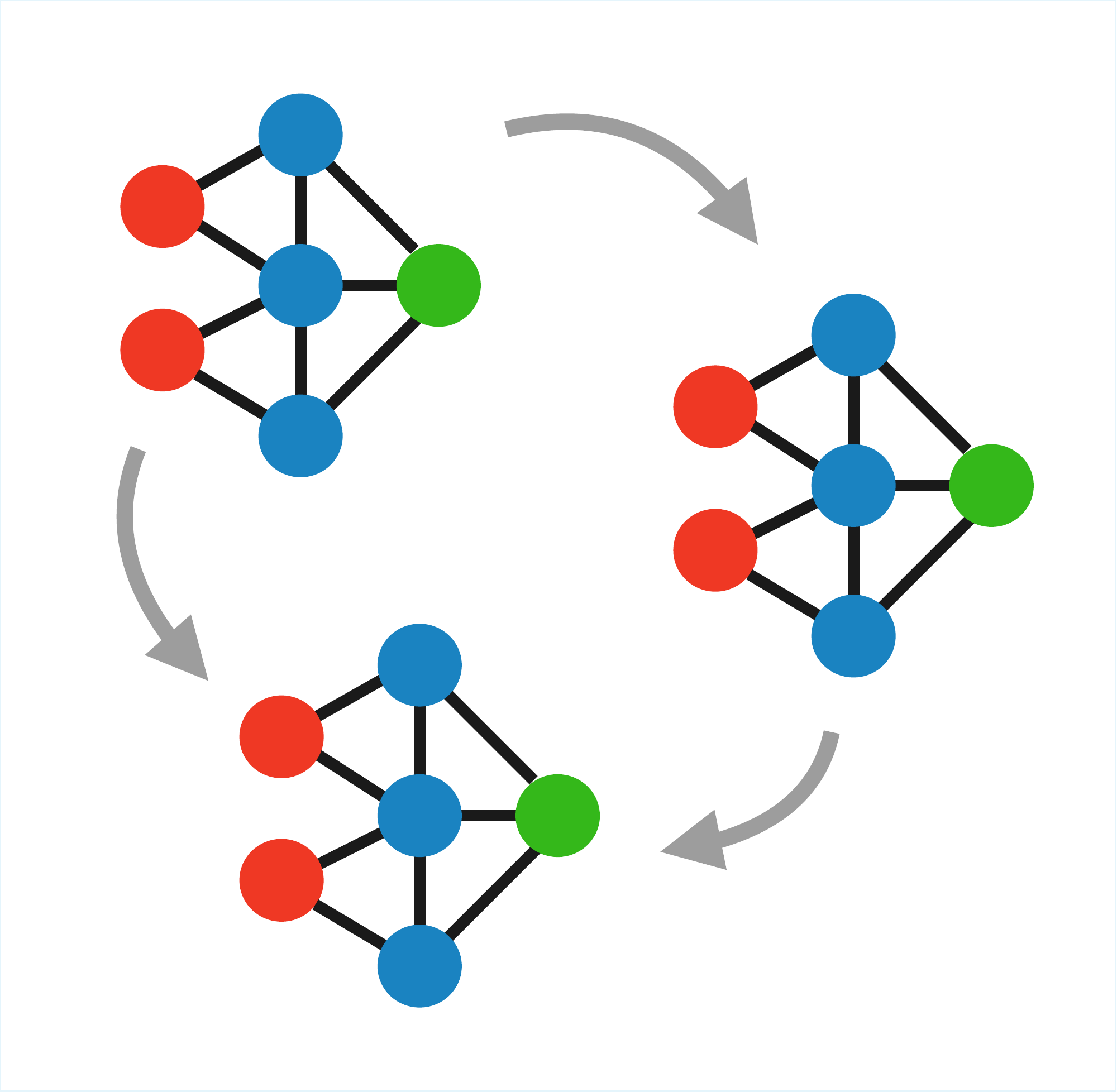}};
      \node[inner sep=0, anchor=north east]
    at ($(B.north east)+(-2mm,-3mm)$)
    {\includegraphics[width=23mm]{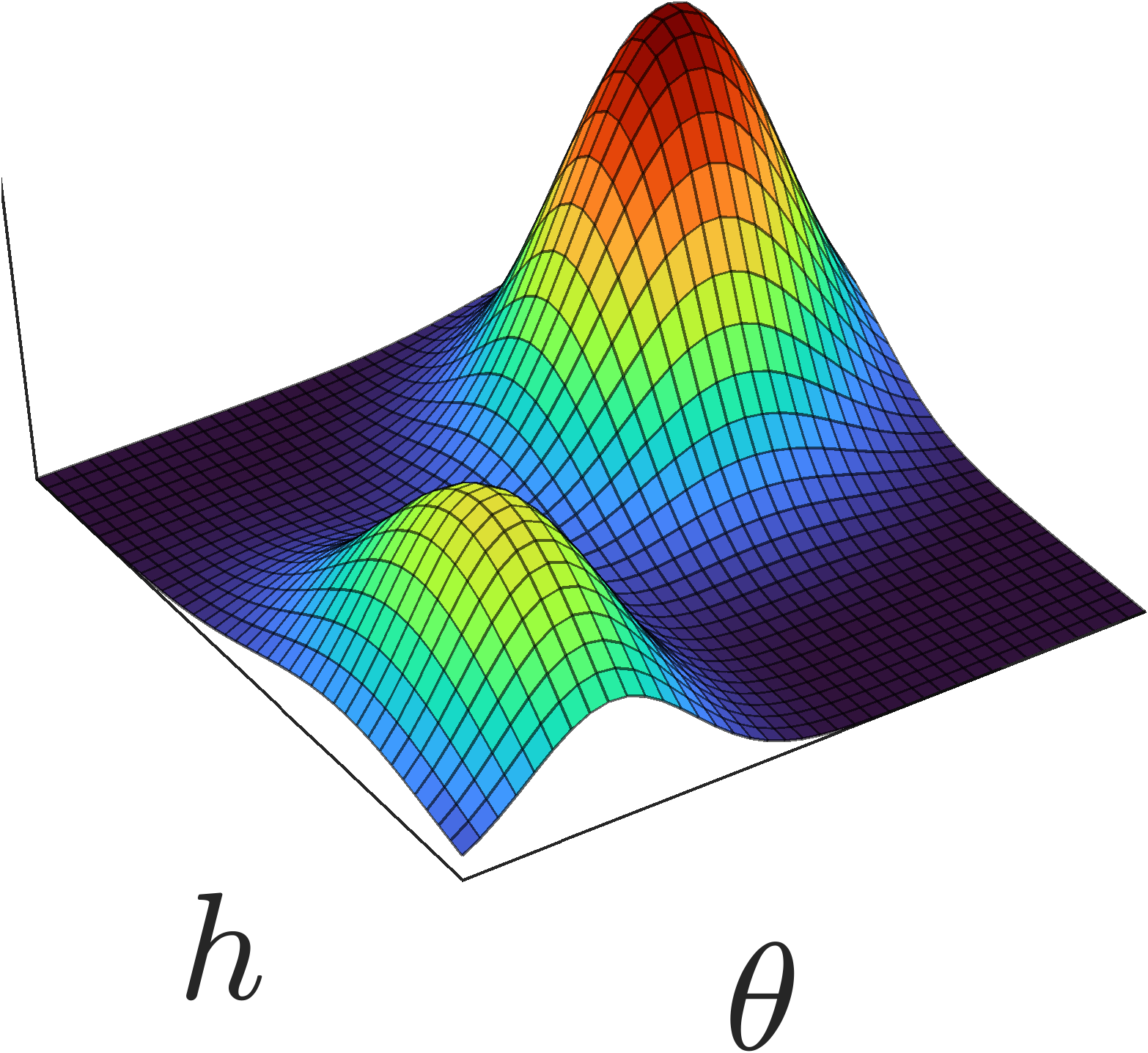}};
\end{scope}
\end{tikzpicture}%
\end{adjustbox}
  \caption{Two-scale modelling pipeline. Large-population limits ($N\to\infty$) connect the agent-based neural-network dynamics to a two-scale kinetic PDE for the joint density 
$f(t,\theta,h)$ over parameters and hyperparameters, in which 
a training operator and a selection--mutation 
evolutionary operator act on different time scales. 
A two-scale separation ($\varepsilon\to 0$) averages out the fast parameter dynamics, which relaxes to a Boltzmann--Gibbs equilibrium $\mu^\infty_{\mathcal{L}}(\theta|h)$, and yields a closed averaged PDE for the hyperparameter density $\rho(t,h)$ driven by the effective fitness $\overline{\mathcal{F}}(h)$.}
\end{figure}

Despite its conceptual appeal and practical impact, PBT still lacks a rigorous continuous mathematical formulation. The original method is algorithmic, heuristic, and strongly dependent on implementation details such as mutation schedules, population size, and replacement policies. Yet its underlying principles, selection, mutation, adaptation, and interaction, are deeply connected to well-established mathematical theories, including interacting particle systems, kinetic equations, and mean-field limits. Recent works have already begun to explore this direction for related swarm-based optimisation methods~\citep{duong2020mean,borghi2025kinetic,borghi2024unified,fornasier2021cbo,carrillo2024fedcbo,benfenati2022binary,carrillo2021cbo,LuTadmorZenginoglu2024Swarm}, showing that kinetic and mean-field modelling provides a powerful language for capturing high-dimensional optimisation and evolutionary search. This is different from the mean-field modelling considered in~\citep{mei2018mean,sirignano2020mean}, where the training of a single neural network is studied in the limit of infinitely many parameters.

The goal of this manuscript is to provide a rigorous continuous-time theory for neural populations undergoing joint optimisation and evolution. We introduce a continuous-time framework for large-population neural training in which agents undergo gradient learning, selection, and mutation.

From a theoretical point of view, minibatch stochastic gradient descent admits a well-known interpretation as a discretised Langevin diffusion~\citep{mandt2017sgd,li2019stochastic,Chizat2022}, whose stationary distribution is asymptotically Gibbsian~\citep{chaudhari2018sgd}. This observation provides a natural foundation for two-time-scale modelling approaches, in which fast stochastic gradient dynamics rapidly equilibrate model parameters, while slower population-level mechanisms act on hyperparameters or higher-level traits. At a macroscopic level, such dynamics are closely related to classical selection--mutation models and population transport equations~\citep{ackleh2016population,alfaro2019evolutionary,bench2022weak,delmoral2001Asymptotic,delmoral2001stability,perthame2007transport}. This separation enables a mathematically tractable description, clarifies the role of diversity and stability, and connects evolutionary heuristics to replicator--mutation dynamics and transport equations.

Our main contributions are:
\begin{itemize}
    \item \textbf{Large-population limit (Theorem~\ref{t:chaos}).} 
    We introduce a continuous-time 
interacting agent model for PBT in which 
network parameters evolve via Langevin/SGD dynamics and hyperparameters 
evolve via selection--mutation. We rigorously prove that the 
 multi-agent system converges to a two-scale kinetic PDE 
description as the population size $N \to \infty$ via coupling techniques and a truncated Wasserstein distance.

    \item \textbf{Two-time-scale reduction (Section~\ref{sec:twoscale}).} Under strong 
    time-scale separation ($\varepsilon \to 0$), we derive an averaged selection--mutation 
    PDE for the hyperparameter density $\rho(t,h)$, driven by an \emph{effective fitness} 
    $\overline{\mathcal{F}}(h)$ that averages the fitness against the Boltzmann--Gibbs 
    equilibrium of the fast parameter dynamics. We show this reduction connects 
    PBT to bilevel optimisation (Lemma~\ref{l:penalization}) and to 
    classical replicator--mutator equations from mathematical biology 
    (Section~\ref{sec:perthame}).

    \item \textbf{Convergence guarantee (Theorem~\ref{t:convergence}).} We prove that, 
    under a uniqueness assumption on the effective fitness maximiser, the population mean 
    converges toward the optimal hyperparameter configuration at an exponential rate, 
    provided the selection pressure is sufficiently large. The result makes the roles of 
    noise strength, population diversity, and selection pressure explicit and quantitative.

    \item \textbf{Algorithms and experiments (Section~\ref{sec:numerics}).} We propose a 
    reduced training algorithm (Algorithm~\ref{alg:macro}) in which the inner SGD loop is 
    replaced by sampling from, or time-averaging over, the parameter equilibrium. 
    Experiments on quadratic and Himmelblau bilevel problems, and on a CartPole deep 
    reinforcement learning task, validate the large-population limit, the two-time-scale 
    regime, and the practical benefit of effective fitness estimation.
\end{itemize}


The remainder of the paper is organised as follows. In Section \ref{sec:2}, we introduce the stochastic particle model describing a population of neural networks undergoing both training and evolutionary dynamics. We also derive the corresponding two-scale PDE model in the many-agent limit $N \to \infty$, first formally and then rigorously. In Section \ref{sec:twoscale}, we analyse the two-time-scale limit, leading to a reduced macroscopic selection--mutation dynamics for hyperparameters and its connection with bilevel optimisation and replicator--mutator models. In Section \ref{sec:4}, we study this reduced dynamics and identify assumptions under which the population mean moves toward the fittest hyperparameter configuration. Numerical examples are presented in Section \ref{sec:numerics}, and concluding remarks and perspectives are discussed in Section \ref{sec:outlook}.

\section{Modelling population-based training}
\label{sec:2}

\subsection{The collective training mechanism}

Population-Based Training (PBT) aims to find the best Neural Network (NN) for a given task by optimising both the set of parameters $\theta$ and of hyperparameters $h$ which we assume take values in $\Theta \subseteq \RR^{d_\theta}, H \subseteq \RR^{d_h}$ respectively. The number of parameters is typically of several order of magnitude larger than the number of hyperparameters, $d_\theta \gg d_h$.

PBT employs a set of $N\in \mathbb{N}$ NNs, which we will also call \textit{agents}
\[
(\theta^i, h^i)\in \Theta \times H  \qquad i = 1,\dots,N
\]
that are iteratively updated according to two mechanisms:
\begin{itemize}
\item Loss $\LL$ minimisation via independent learning dynamics;
\item Fitness $\F$ maximisation via collective interaction.
\end{itemize}

The loss minimisation problem is solved by gradient-based algorithms such as SGD, RMSprop or ADAM-type optimisers. While the learning dynamics is used only to update the parameters $\theta^i$,  it typically depends on the hyper-parameters $h^i$ since they could include, for instance, the learning step, regularisation or noise strength. To stress this dependence we consider a loss $\LL = \LL(\theta, h)$ which depends on both $\theta$ and $h$.

In the case of SGD training, the parameter dynamics can be approximated by a stochastic differential equation (SDE) of the form
\begin{equation} \label{eq:sgd}
d\theta^i = -\nabla_{\theta} \LL(\theta^i, h^i)\,dt + \sqrt{2\beta^{-1}(h^i)}\Sigma^{1/2}(\theta^i)\,dB^i_t,
\end{equation}
where $(B_t^i)_{t\geq 0}$ are independent $d_\theta$-dimensional Brownian motions. The matrix $\Sigma(\theta)$ is positive definite and captures the variance induced by stochastic loss sampling,
while $\beta(h)>0$ determines the noise strength, and typically depends on the hyperparameters via the learning rate \citep{cheng2020stochastic,li2019stochastic,li2021validity}. RMSprop and Adam adaptive training algorithms can be also modelled via SDE of type, by adding auxiliary random variables into the dynamics \citep{malladi2022sdes}, see Remark \ref{rmk:adam} for more details.

The fitness function $\F$ is conceptually different from the loss $\LL$, as it is not directly related to the training dynamics. For instance, while $\LL$ may depend on the training data, $\F$ typically evaluates the generalisation performance of an agent-NN on a validation dataset. Moreover, the fitness function does not need to be differentiable. In general, it depends on both variables: $\F = \F(\theta,h)$.

The fitness maximisation is addressed by PBT via genetic-type dynamics where the best performing agents (that is, the \textit{fittest}) are copied with a certain rate. To keep the number of agents limited, the worst performing ones are discarded, or they are discarded randomly. 
At a slower rate (with respect to the inner learning dynamics) each agent $(\theta, h)$ is subject to a genetic update of type
\begin{equation} \label{eq:coll}
\begin{split} 
\theta' &= \theta_*  \\
h' &= h_*  + \sigma \xi 
\end{split}
\end{equation}
where $(h_*,\theta_*)$ is an agent sampled among the best ones in the following way
\[
\mathbb{P}\left((\theta_*,h_*) = (\theta^i,h^i)\right) = \frac{\exp(\F(\theta^i,h^i))}{\sum_{j=1}^N \exp(\F(\theta^j,h^j))}\,.
\]
Note that the parameter $\theta$ is copied to $\theta_*$, and mutation acts only at the hyper-parameter level via a mutation vector $\xi \sim \mathcal{N}(0, I_d)$ with strength $\sigma>0$. If the search space $H$ is bounded and convex, an additional projection is added to ensure $h^i \in H$ at all times, by modifying $h'$ in \eqref{eq:coll}  as $h' \gets \textup{Project}_H(h_* +\sigma \xi)$.

The resulting algorithm can be seen as a continuum training of the parameters $\theta^i$, interrupted by genetic updates (or \textit{jumps}) where $(\theta^i, h^i)$ are resampled among the fittest agents. See Algorithm \ref{alg:pbt} for a high-level description of the PBT evolution.

In Algorithm \ref{alg:pbt} the parameter $\tau \in (0,1)$ determines the frequency of such evolutionary interruptions.  For $\tau = 1$ we have that all agents perform a genetic update simultaneously. For $\tau \ll 1$, interruptions are more frequent, but agents are less likely to undergo resampling. Note that for any $\tau \in (0,1]$ the expected total number of jumps per agent per unit time is always 1.

\begin{algorithm}[t]
\caption{Population-Based Training}
\label{alg:pbt}
\begin{algorithmic}
\State Initialize $N\in \mathbb{N}$ NNs with parameters $\theta^i\in \Theta$ and hyperparameters $h^i\in H$, $i = 1,\dots, N$
\State Set $\tau\in (0,1]$,$\sigma>0$, $t_{\max}>0$
\State $t \gets 0$
\While{$t < t_{\max}$}
\For{$i = 1,\dots, N$} 
\State Train $\theta^i$ via SGD \eqref{eq:sgd} for $[t, t + \tau)$
\Comment{parallel NN training}
\State $F^i \gets \F(\theta^i,h^i)$ \Comment{evaluate fitness}
\EndFor
    \For{$i = 1, \dots, N$}                      \Comment{population interaction} 
  \State with probability $\tau$:       
    \State \qquad select a NN $(\theta^j,h^j)$ proportionally to $\exp(F^j)$   
    \State \qquad sample random mutation vector $\xi$
     \State \qquad $\tilde \theta^i \gets \theta^j$  \Comment{copy parameter}
    \State \qquad $\tilde h^i \gets h^j + \sigma\xi$   \Comment{copy and perturb hyperparameter}
        \State \qquad $\tilde h^i \gets \textup{Project}_{H}(\tilde h^i)$   \Comment{project to search space}
    
 \EndFor
  \State $\theta^i\gets \tilde \theta^i, h^i \gets \tilde h^i$ for all $i=1,\dots,N$
  \State $t \gets t + \tau$
\EndWhile
\end{algorithmic}
\end{algorithm}

\subsection{Description in the many-agent limit} \label{sec:manyagent}

When dealing with a large interacting system of $N \gg 1$ agents $(\theta^i,h^i), i = 1,\dots,N$, it is convenient to provide a statistical description of the system. Rather than tracking the microscopic trajectories of each agent $(\theta^i, h^i)$, we aim to derive a model which quantifies the probability of finding a certain agent with features $(\theta,h)$ at a given time $t\geq 0$.
The model can be formally obtained by taking the many agent limit $N \to \infty$. 

We consider a probability distribution  $f = f(t, \theta, h)$ in the parameter/hyperparameter phase space $\Theta \times H \subseteq \RR^{d_\theta}\times \RR^{d_{h}}$ which we assume it well approximates the empirical distribution 
\begin{equation} \label{eq:chaos}
f (t, \theta, h) \;\approx \; f^N(t, \theta, h) = \frac1N \sum_{i=1}^N \delta(\theta - \theta^i(t))\delta(h - h^i(t))\qquad \textup{for}\quad N \gg 1\,,
\end{equation}
where $\delta(\cdot)$ is the Dirac delta distribution centred at 0.

When a random variable $\theta^i$ evolves according to a SDE of type \eqref{eq:sgd}, the evolution of its probability law is determined by a Fokker--Planck operator.
In \eqref{eq:sgd}, we note that only the parameters $\theta^i$ evolve, while the hyperparameters $h^i$ are fixed. Therefore, for $f = f(t, \theta, h)$, the Fokker--Planck operator associated with the training dynamics is given by
\begin{equation} \label{eq:opT}
T_{\LL}[f]
:= \nabla_\theta \cdot \bigl(\nabla_\theta \LL(\theta,h)\, f \bigr)
+ \beta^{-1}(h)  \nabla_\theta^2 : (\Sigma (\theta) \,f ) \,,
\end{equation}
see, for instance, \citep[Chapter 6]{bogachev2022fokker}. Above, $A:B = \mathrm{tr}(A^\top B)$ is the usual Frobenius inner product. For Adam-based training, a similar Fokker--Planck operator can be also obtained via the corresponding SDE formulation, see Remark \ref{rmk:adam}.


It is during the collective genetic-type jump that agents interact. Assumption \eqref{eq:chaos} can also be understood as a so-called \textit{propagation of chaos} assumption on the agents \citep{sznitman1991topics}. This means that for large population sizes the correlation between the agents become weaker and weaker, until they become independent as $N \to \infty$. As agents are indistinguishable, this also implies that they are identically distributed with law $f$. 

If a genetic update occurs at time $t$, the probability density of copying an agent
with state $(\theta,h)$ is then given by
\[
g(t,\theta,h)
=
\frac{e^{\F(\theta,h)}\,f(t,\theta,h)}
{\iint e^{\F(\theta',h')}\,f(t,\theta',h')\,d\theta'\,dh'} \,.
\]
Instead, the hyperparameter mutation in \eqref{eq:coll} corresponds to a convolution
with a probability kernel $K$ in the $h$ variable, namely
\[
(K_\sigma *_h g)(t,\theta,h) = \int g(t,\theta, h+\sigma\xi)\,K(d\xi)\,.
\]

The selection--mutation mechanism outlined in Algorithm~\ref{alg:pbt} therefore
corresponds, at the distribution level, to the reweighting--convolution operator
\begin{equation} \label{eq:opE}
E_\F[f] := K_\sigma *_h \left( \frac{e^{\F} f} {\int e^{\F} df } \right) - f\, .
\end{equation}
The term $-f$ is required because a new agent is created due to the jump, while the
pre-jump agent is deleted in the process. This ensures that the total amount of agents does not change during the computation.


As discussed earlier, the parameter $\tau \in (0,1]$ describes how frequently each agent has the possibility of undergoing a jump. It can be interpreted as a time-discretisation parameter for the collective evolutionary dynamics where, in the limit $\tau \to 0$, jumps may occur at any time and asynchronously.
As $N \to \infty$ and $\tau \to 0$, we formally obtain a continuum description of Algorithm~\ref{alg:pbt}, in which the time evolution of the distribution $f$ is determined by both the training and evolutionary operators
\begin{equation} \label{eq:fpde}
\frac{\partial f}{\partial t} = T_{\LL}[f] + E_\F[f] \,.
\end{equation}
Equations of this type are inspired by recent applications of kinetic theory \citep{PareschiToscani2013} and are 
closer in spirit to kinetic formulations of simulated annealing \citep{Pareschi2024, Chizat2022} and genetic algorithms \citep{borghi2025kinetic}.

\begin{remark} \label{rmk:adam}
The training operator for adaptive algorithms such as RMSprop or Adam can be equivalently derived from their continuous SDE approximations, as for instance in \citep{malladi2022sdes}. The main difference with respect to SGD is that the parameter state space must be augmented to account for the momentum variable $m$, or for other auxiliary variables appearing in the dynamics.

For instance, in the case of Adam the SDE reads
\[
\begin{cases}
    d\theta^i = - P^{-1}(h^i,u^i) m^i\,dt, \\
    d m^i = \alpha(h^i)\bigl(\nabla_{\theta} \LL(\theta^i, h^i) - m^i\bigr)\,dt + \sqrt{2\beta^{-1}(h^i)}\,\Sigma^{1/2}(\theta^i)\,dB^i, \\
    d u^i = \gamma(h^i)\bigl(\operatorname{diag}(\Sigma(\theta^i)) - u^i\bigr)\,dt,
\end{cases}
\]
where $\alpha(h)$ and $\gamma(h)$ are hyperparameters and $P(h,u)$ is a preconditioning matrix. At the PDE level, the associated distribution must also be extended and takes the form $f = f(t,\theta,h,m,u)$. As will be discussed in Section~\ref{sec:twoscale}, these auxiliary variables become irrelevant in the two-time-scale limit, simplifying the description.
\end{remark}

\begin{remark}
A related line of work studies the optimal design of training protocols by casting reduced learning dynamics as an optimal control problem, often in simplified teacher–student models where the dynamics closes on a few order parameters (yielding low-dimensional ODEs \citep{Mignacco2025}). Our approach is complementary in scope and object: rather than controlling a single reduced trajectory, we derive an averaged population dynamics for the hyperparameter distribution $\rho(t,h)$, obtained from the large population limit and a two-time-scale (fast training/slow selection–mutation) limit. This averaged population equation can be regarded as a state equation for future protocol-design problems, including but not limited to optimal control.
\end{remark}

\subsection{Propagation of chaos and convergence}

We show now that the connection between Algorithm \ref{alg:pbt} and the two-scale PDE model \eqref{eq:fpde} can be made rigorous. We will consider weak measure solutions to \eqref{eq:fpde}, see Definition \ref{def:sol} for a precise characterisation. We restrict to analysis to the case of SGD parameters training with isotropic noise.

\begin{assumption} \label{asm:chaos} $\,$
    \begin{enumerate}
    \item Parameters are trained with SGD via \eqref{eq:sgd} with $\beta>0$ fixed and $\Sigma(\theta) \equiv I$.
    \item The loss $\LL \in C^1(\RR^{d_\theta} \times \RR^{d_h})$ is globally Lipschitz continuous. 
    \item  The fitness $\F\in C(\RR^{d_\theta} \times \RR^{d_h})$ is globally Lipschitz continuous and bounded with $0< \inf{\F}<\sup\F<\infty$\,.
    \end{enumerate}
\end{assumption}

To compare the empirical distribution $f^{N,\tau}$ (we stress now also the dependence on $\tau\in(0,1)$) and the distribution $f$, we will employ the Bounded-Lipschitz (BL) norm between measures, also known as flat, or Dudley metric \citep{dudley2018real}. For any Borel measure $\mu\in \mathcal{M}(\RR^d)$ it is defined as 
\begin{equation}\label{eq:BL}
    \|\mu\|_{{\mathrm{BL}}} :=\sup \left \{ \int \phi\, d \mu \;  \middle |\; \| \phi\|_\infty < 1/2, \|\phi\|_{\textup{Lip}}<1   \right\} 
\end{equation}
where $\|\phi\|_\infty  := \sup_x|\phi(x)|$ and $\|\phi\|_\textup{Lip} := \sup_{x\neq y}|\phi(x) - \phi(y)|/|x-y|$.

\begin{theorem} \label{t:chaos}
Let Assumption \ref{asm:chaos} hold, $t_{\max}>0$ be a computational time horizon, and $f_0\in \mathcal{P}_q(\RR^{d_\theta}\times \RR^{d_h}), q>1$ an initial distribution. Consider the multi-agent system generated by Algorithm \ref{alg:pbt} $\{(\theta^i(t),h^i(t))_{t\in [0, t_{\max}]}\}_{i=1}^N$  
with $f^{N,\tau}(t)$ being the associated empirical measure and $(\theta^i(0), h^i(0))$ being independent and $f_0$-distributed.

Then, there exists a weak measure solution  $f\in C([0, t_{\max}], \mathcal{P}_q(\RR^{d_\theta}\times \RR^{d_h} ))$ to \eqref{eq:fpde} with initial data $f_0$, and it holds
\[
\mathbb{E} \| f^{N, \tau}(t) -  f(t)\|_{\mathrm{BL}} \longrightarrow 0 \qquad \textup{as} \quad  N \to \infty, \;\tau \to 0
\]
for all $t \in [0, t_{\max}]$.

\end{theorem}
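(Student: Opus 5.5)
We prove the statement by a coupling argument that compares Algorithm~\ref{alg:pbt} with an auxiliary system of independent nonlinear processes driven by $f$. The proof has four steps.

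\textbf{Step 1: well-posedness of the limit.} We first construct the weak measure solution $f$ of \eqref{eq:fpde} as the law of a McKean-type process $(\bar\theta(t),\bar h(t))$. Between jumps it follows \eqref{eq:sgd} with $\Sigma\equiv I$. At the jump times of an independent unit-rate Poisson clock, its state is replaced by a sample from
\[
g(t)\propto e^{\F}f(t),
\]
and the hyperparameter component is then perturbed by $\sigma\xi$.

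To obtain existence and uniqueness, we run a fixed-point argument in $C([0,t_{\max}],\mathcal P_q)$ with the BL (or truncated Wasserstein) metric. The key estimate is that $f\mapsto e^{\F}f/\int e^{\F}df$ is Lipschitz in $\|\cdot\|_{\mathrm{BL}}$. This holds because $\F$ is bounded, bounded away from $0$, and Lipschitz. Indeed, for test functions $\phi$ with $\|\phi\|_\infty\le 1/2$ and $\|\phi\|_{\mathrm{Lip}}\le 1$, the product $\phi e^{\F}$ is again bounded Lipschitz with constants depending on $\sup\F$ and $\|\F\|_{\mathrm{Lip}}$. The denominator is bounded below by $e^{\inf\F}$. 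The Langevin part is a Lipschitz SDE, so synchronous coupling controls it with a Gr\"onwall constant. Moment bounds of order $q$ are propagated because jumps copy existing states and add Gaussian noise.

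\textbf{Step 2: coupling the particles.} We build $N$ i.i.d.\ copies $(\bar\theta^i,\bar h^i)$ of the nonlinear process on the same probability space as the particles. They share the Brownian motions $B^i$, the initial data, the mutation vectors $\xi$, and the jump clocks. The discrete clock of Algorithm~\ref{alg:pbt}, with probability $\tau$ every $\tau$ units of time, is coupled to a Poisson clock of rate $1$; their jump times differ with probability $O(\tau)$ per unit time. At each jump, the selection indices are coupled by maximal coupling between the discrete law proportional to $e^{\F(\theta^j,h^j)}$ and the law $g(t)$. Concretely, we compare the particle selection law with the law obtained by selecting among the $\bar{\cdot}^j$ using weights $e^{\F(\bar\theta^j,\bar h^j)}$, and then with $g(t)$ itself.

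The coupled distance will be measured in a truncated metric such as
\[
\mathbb E\bigl[\min(1,|\theta^i-\bar\theta^i|+|h^i-\bar h^i|)\bigr],
\]
which is robust to the discontinuous resampling. Write $D(t)$ for this quantity averaged over $i$. We then aim for an inequality of the form
\[
D(t)\le C\int_0^t D(s)\,ds+ C\,\mathbb E\|\bar f^N(s)-f(s)\|_{\mathrm{BL}}+C\tau,
\]
where $\bar f^N$ is the empirical measure of the i.i.d.\ copies. The first term comes from Lipschitz drift and mismatched selection weights, the second from replacing the empirical selection by $g$, and the third from the time-discretisation mismatch of the jump clocks.

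\textbf{Step 3: the main obstacle.} The hard part is the jump step. When particle $i$ copies particle $j$ and the nonlinear copy copies an independent draw from $g$, the post-jump distance is not controlled by the pre-jump distance of particle $i$. It is instead controlled by a transport distance between the two selection measures, which is what forces the truncated metric. We handle this by using the optimal (truncated-Wasserstein) coupling between
\[
\frac{\sum_j e^{\F(\theta^j,h^j)}\delta_{(\theta^j,h^j)}}{\sum_j e^{\F(\theta^j,h^j)}}
\qquad\text{and}\qquad g(t).
\]
By the triangle inequality and the Lipschitz property from Step~1, this transport cost is bounded by $C\bigl(D(t)+\|\bar f^N(t)-f(t)\|_{\mathrm{BL}}\bigr)$. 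Since the copies are i.i.d.\ with $q$-th moments and $q>1$, the law of large numbers in BL/truncated-Wasserstein distance (quantitative results of Fournier--Guillin type) gives $\mathbb E\|\bar f^N-f\|_{\mathrm{BL}}\to0$.

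\textbf{Step 4: conclusion.} Gr\"onwall's lemma then gives $\sup_{t\le t_{\max}}D(t)\to0$ as $N\to\infty$ and $\tau\to0$. Finally, since the BL distance between empirical measures is bounded by the truncated coupling distance,
\[
\mathbb E\|f^{N,\tau}(t)-f(t)\|_{\mathrm{BL}}\le D(t)+\mathbb E\|\bar f^N(t)-f(t)\|_{\mathrm{BL}},
\]
and both terms on the right tend to zero, which proves the claim.
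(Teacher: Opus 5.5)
Your proof is sound in outline but takes a different route from the paper. The paper never couples Algorithm~\ref{alg:pbt} directly with the continuous-time limit. Instead it couples it with an intermediate McKean process that jumps at the same grid times $t_n=n\tau$, driven by the same Bernoulli variables $\tau^i_n$. This process is defined explicitly step by step, so no fixed point is needed and there is no clock mismatch. The coupling gives $\sup_t\|f^{N,\tau}(t)-f^\tau(t)\|_{\mathrm{BL}}\lesssim\epsilon(N)$ uniformly in $\tau$. The limit $\tau\to0$ is then handled by tightness of $(\overline{X}^\tau)$ in Skorokhod space, and limit points are identified as weak solutions through an $O(\tau)$ consistency error. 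Existence of $f$ is thus a by-product of compactness.

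Your single coupling with the Poisson-clock process costs you a well-posedness argument for the McKean jump diffusion. You also need to check, via Dynkin's formula, that its law solves \eqref{eq:fpde} weakly. In return you get uniqueness of the nonlinear law and a joint quantitative rate in $(N,\tau)$. You also get convergence of the whole family, not only along a subsequence $\tau_k\to0$; the compactness step alone yields only the subsequential statement unless uniqueness is proved separately.

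Two points need more care than you give them.

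\textbf{Clock coupling.} Bernoulli and Poisson clocks can only be coupled so that the \emph{number} of jumps in each cell $(t_n,t_{n+1}]$ agrees outside an event of probability $O(\tau^2)$. Even on the good event, the Poisson jump occurs at some $s<t_{n+1}$. You must therefore couple $G_\F[f(s)]$ with the empirical selection at $t_{n+1}^-$ and absorb the diffusion of the copy on $[s,t_{n+1}]$. You must also account for the $O(\tau)$ fraction of other copies that have already jumped inside the cell. Since $\|f(t)-f(s)\|_{\mathrm{BL}}\lesssim\sqrt{|t-s|}$, this gives an error $C\sqrt{\tau}$ rather than $C\tau$. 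That still suffices for the theorem.

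\textbf{Selection coupling.} The ``maximal coupling'' of Step~2 must be the optimal coupling for the cost $|x-y|\wedge 1$ from Step~3. A total-variation coupling between an atomic and a diffuse law has cost one, so it is useless here. The coupling must also be realised with fresh randomness, conditionally on the particle configuration. This keeps the copies i.i.d.\ with law $f$, so the Fournier--Guillin bound applies. This is exactly the role of the measurable map $X^*_n$ in the paper.
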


Since $f(t)$ is deterministic, Theorem \ref{t:chaos} implies that the stochastic multi-agent system becomes deterministic in the limit $N\to \infty$. Moreover, the result validates the ansatz $f^N\approx f$ \eqref{eq:chaos} used in the derivation of the kinetic PDE \eqref{eq:fpde}. We prove the theorem by first considering the limit $N \to \infty$, via a coupling technique and the optimal transport formulation of the BL norm. Then, the limit $\tau \to 0$ is considered by exploiting the stability both the training and evolutionary operators with respect to the BL norm. We refer to Appendix \ref{app:chaos} for the details.

\subsection{Modelling extensions}
\label{sec:extensions}

In genetic algorithms, many variants of selection procedures have been proposed, see \citep{khalid2013selection} for a review. In more sophisticated mechanisms, poorly performing agents are more likely to undergo a genetic update, or an agent’s quality is determined by its rank within the population rather than by its fitness value.

\subsubsection{Substitution of NNs performing poorly}
In Algorithm \ref{alg:pbt}, every NN is equally likely to be updated, with probability $\tau$ at every time $t_n = n \tau$. The genetic update frequency could also be made fitness-dependent by prescribing a higher replacement probability for agents that have poor fitness values as done in \citep{jaderberg2017Population}. This amounts, at the PDE level, to modifying the loss term in \eqref{eq:opE}. In this case, agents are no longer selected uniformly for removal, but rather with a bias favouring the least fit ones. A natural choice is to use the same exponential weighting as in the gain term, but applied to the negative fitness $-\F(\theta,h)$. This more elaborate algorithmic strategy would lead to the operator
\begin{equation*}
\widehat E_\F[f]
:= K_\sigma *_h \left( \frac{e^{\F} f}{\int e^{\F} df} \right)
- \frac{e^{-\F} f}{\int e^{-\F} df}\,.
\end{equation*}

\subsubsection{Rank-based selection}
In Algorithm  \ref{alg:pbt}, the probability of a NN agent $(\theta^i, h^i)$ to be selected for duplication is proportional to its fitness $ \F(\theta^i, h^i)$. Another common strategy consists of picking agents with respect to their rank among the ensemble. For instance, one can decide that only the top $20\%$ NN are copied when performing a genetic update, see \citep{jaderberg2017Population}. 

Mathematically, given a group of NNs $(\bm{\theta}, \bm{h}) = \{(\theta^i, h^i)\}_{i=1}^N$, the function rank is defined as
\[
\mathrm{Rank}(\theta^i, h^i\,|\, \bm{\theta}, \bm{h}) := \#\{ j \,|\, \F(\theta^j, h^j) \geq\F(\theta^i, h^i)  \}\,.
\]
This can be extended for an arbitrary probability distribution $f\in \mathcal{P}(\RR^{d_\theta}, \RR^{d_h})$ as
\[
\mathrm{Rank}(\theta, h \,|\, f) :=  \int_{\F(\tilde \theta, \tilde h)\geq \F( \theta,  h)} f( d\tilde h, d \tilde h)\,.
\]
Therefore, at least formally, it is possible to derive a similar two-scale PDE description of the collective training dynamics where rank-based selection is used, as done in Section \ref{sec:manyagent} for fitness-based selection. A rigorous convergence analysis is more delicate, and we leave it for future work. In the literature, rank-based particle interactions have been considered in \citep{bench2022weak,demircigil2025convergence}.

\section{Two-time-scale limit} 
\label{sec:twoscale}

The two processes, loss minimisation and fitness maximisation, happen at two different time scales. Typically, several training steps are performed (of order $10^4$ \citep{jaderberg2017Population}, for instance) before a single evolutionary update. This also reflects the dimensionality unbalance between the number of parameters and of hyperparameters.
To balance the two processes we introduce a scaling parameter $\ve>0$ and study the rescaled model 
\begin{equation} \label{eq:fpde:eps}
\frac{\partial f}{\partial t} = \frac1{\ve} T_{\LL}[f] + E_\F[f] \,.
\end{equation}
At the algorithmic level, this corresponds to rescaling the time of the parameter optimisation \eqref{eq:sgd}, making the loss training longer of a factor $1/\ve$.

We are interested in studying the evolution of the hyperparameter distribution
\[
\rho(t,h) = \int f(t,\theta,h) d \theta
\]
by averaging out the fast time evolution for $\ve\ll 1$.
In the limit $\ve \to 0$, equation \eqref{eq:fpde:eps} reduces to 
\[
T_{\LL}[f] = 0\,.
\]
which implies that $f$ should be at equilibrium with respect to $T_{\LL}$, taking the form 
\begin{equation}\label{eq:ansatz}
f(t,h,\theta) = \rho(t,h)\,\mu^\infty_\LL(\theta|h)\,.
\end{equation}
with $T_{\LL}[\mu^\infty(\cdot|h)] = 0$. This means that, given a certain hyperparameter $h\in H$, the distribution on the space of parameters is fixed in time, and given by the conditional probability $\mu^\infty_\LL(\theta|h)$. At this slow time scale, only the hyperparameters distribution $\rho = \rho(t,h)$ evolves.

The specific form of the equilibria $\mu^\infty_\LL$ depends on the training dynamics used. Since $T_{\LL}$ corresponds to the Fokker--Planck operator, they typically take the form of exponential distributions associated with a given potential. For instance, when considering SGD with isotropic noise ($\Sigma \equiv I_d$ in \eqref{eq:sgd}), the equilibrium is given by a Gibbs distribution associated with the loss function:
 \begin{equation} \label{eq:gibbs}
\mu^{\infty}_\LL(\theta|h) = \frac{e^{-\beta(h)\LL(\theta,h)}}{\int e^{-\beta(h)\LL(\theta,h)}d \theta}\,,
 \end{equation}
see, for instance, \citep[Proposition 4.6]{pavliotis2014chapter4}. Existence of an invariant distribution and convergence towards it is related for Fokker--Planck operator in more general settings have been investigated in \citep{ottobre2011asymptotic,achleitner2015FP,mandt2017sgd}.  Also, they might not have an explicit form due to the possible noise degeneracy.  Specific results for SGD, RMSProp, and Adam-type algorithms have been obtained in \citep{mandt2017sgd,suzuki2023adam}.

Under ansatz \eqref{eq:ansatz}, if we test the solution $f$ against $\phi=\phi(h)$ we obtain
$\langle \phi, T_{\LL}[f]\rangle = 0$ and
\begin{align*}
\langle \phi, E_\F[f]\rangle 
& = \iiint \left( \frac{e^{\F(\theta,h)}}{\int e^\F\, df}\, \phi(h + \sigma \xi ) - \phi( h) \right) K(\xi)\,
\rho(t,h)\,\mu^{\infty}_\LL(\theta| h)\,d \xi\, d \theta\, d h \\
& = \iint \left(\int \frac{e^{\F(\theta,h)}}{\int e^\F\, df}\,\mu^{\infty}_{\LL}(\theta|h)\,d \theta \right)\phi(h + \sigma \xi )\, K(\xi)\,\rho(t,h)\,d \xi\, d h
\;-\;
\int \phi(h)\,\rho(t,h)\,dh .
\end{align*}

Let $\overline{\F} = \overline{\F}(h)$ be what we call the effective Gibbs-averaged fitness
\begin{equation} \label{eq:Fbar}
    \overline{\F}(h):= \log \int e^{\F(\theta,h)}\mu^{\infty}_\LL(\theta|h) d \theta\,.
\end{equation}
Note that the normalizing constant remains unchanged
\begin{align*}
    \int e^\F\, df 
&= \iint e^{\F(\theta, h)}\,\rho(t,h)\,\mu^{\infty}_\LL(\theta|h)\,d\theta\,dh \\
&= \int \exp\!\left(\log\int e^{\F(\theta,h)}\mu^{\infty}_\LL(\theta|h)\,d \theta \right)\rho(t,h)\,dh 
= \int e^{\overline{\F}}\,d \rho\,,
\end{align*}
leading to 
\begin{equation}\label{eq:Ebar}
\langle \phi, E_\F[f]\rangle 
=
\langle \phi, \overline{E}_{\overline{\F}}[\rho]\rangle
:=
\iint \left(
\frac{e^{\overline{\F}(h)}}{\int e^{\overline{\F}} \, d \rho}\,\phi(h + \sigma \xi )
- \phi(h)
\right) K(\xi)\,\rho(t, h)\,d \xi\, d h .
\end{equation}

Altogether, we obtain in the limit $\ve \to 0$ the averaged equation for $\rho = \rho(t,h)$
\begin{equation} \label{eq:rhopde}
\frac{\partial \rho}{\partial t} = \overline{E}_{\overline{\F}}[\rho]\,.
\end{equation}
Therefore, in the two-time scale limit the dynamics reduces to a pure selection-mutation evolution for the effective fitness  $\overline{\F}$. Equations of this type have been studied in \citep{delmoral2000branching,delmoral2004book} to study the many agent limit of genetic-type algorithms via Feynman--Kac distribution flows. 

\begin{remark} \label{rmk:effectiveestensions}
The key point in the derivation of the averaged PDE \eqref{eq:rhopde} is the equilibrium ansatz \eqref{eq:ansatz}. The emergence of equilibrium in parameter space depends only on the scale separation between training and evolution, and its precise form depends on the specific algorithm used for parameter training. Therefore, the type of evolutionary strategy employed does not affect the emergence of the lower-dimensional PDE. The same reduction can thus be carried out, for instance, for rank-based selection or for the other extensions discussed in Section \ref{sec:extensions}.
\end{remark}

\begin{remark} 
For stochastic optimisation algorithms, the long-time behaviour may exhibit non-equilibrium features, including persistent probability currents \citep{chaudhari2018sgd,li2022what}. In \citep{li2022what}, the authors relate this behaviour to the implicit regularisation properties of SGD by showing the existence of slow dynamics along manifolds of minimizers. In our modelling framework, this corresponds to having an implicit dependence in the Fokker--Planck operator $T_{\LL}$ by $\ve$. In this case, more accurate macroscopic models can de derived by performing a Chapman--Enskog-type expansion \citep{bobylev2020chapter7} of the training operator.
\end{remark}

\subsection{Connections to bi-level optimisation problems}

We shed some light on the role of the effective fitness  $\overline{\F}$ by relating it to bi-level optimisation problems. The NN training, indeed, can be cast as a problem of type
\begin{equation} \label{eq:bilevel}
\begin{split}
  h^\star &\in  \underset{h \in H} {\arg\!\max}\, \F(\theta^\star(h),h )\\
 &\textup{subject to}\quad \theta^\star(h) \in \underset{\theta \in \Theta}{\textup{argmin}} \,\LL( \theta,h)
  \end{split}
\end{equation}
where one sequentially aims to minimize the loss $\LL$ and maximize the fitness $\F$ by tuning parameters and hyperparameters. For each fixed $h$, $\overline{\F}(h)$ fits  into the variational–inference viewpoint: the equilibrium measure $\mu^{\infty}_\LL$ acts as a prior, possibly concentrating on minimisers of $\LL(\cdot,h)$,
while $\F(\theta,h)$ plays the role of a log-likelihood.

Now, let us consider the case where the equilibrium is explicitly given by the Gibbs measure \eqref{eq:gibbs}, with fixed $\beta>0$. We stress the dependence on $\beta$ by writing the averaged loss as $\overline{\F}_\beta$.
Let $\mathcal{H}$ be the log-entropy, we rewrite the KL divergence as
\begin{align*}
\textup{KL}(\nu \,|\,\mu_{\LL}^{\infty}) & = \int \nu(\theta) \log(\nu(\theta))d \theta - \int\nu(\theta) \log\left(\frac{e^{-\beta\LL(\theta,h)}}{\int e^{-\beta \LL}d\theta} \right) d\theta \\
& = \mathcal{H}[\nu] + \beta \mathbb{E}_\nu[\LL(\theta,h)] + \log\int e^{-\beta \LL}d\theta
\end{align*}
Let us denote 
$ \overline{\LL}_{\beta}(h):=  -(1/\beta) \log  \int e^{-\beta \LL(\theta,h)}d \theta$ 
for which, by the Laplace principle \citep{dembo2010}, it holds
\begin{equation*} \label{eq:Lbeta}
\overline{\LL}_{\beta}(h)\longrightarrow \inf_\theta \LL(\theta,h)\qquad \textup{as} \quad \beta \to \infty\,.
\end{equation*}

We obtain a further characterisation of the effective fitness  as
\begin{equation*} \label{eq:Fbeta}
\overline{\F}_\beta(h)= \sup_{\nu}\Big(\mathbb{E}_\nu\left[\F(\theta,h) - \beta \left(  \LL(\theta,h) - \overline{\LL}_{\beta}(h)\right) \right]  - \mathcal{H}[\nu]\Big)\,.
\end{equation*}
which has a clear interpretation in terms of the bi-level optimisation problem \eqref{eq:bilevel}. 
Indeed, in view of \eqref{eq:Lbeta}, we can see $\overline{\F}_\beta(h)$ as an entropy regularized version of a penalized fitness 
\[
\P_\beta(h) := \F(\theta,h) - \beta \left(\LL(\theta,h) - \inf_\theta \LL(\theta,h) \right)
\]
where $\beta$ here plays the role of the penalisation parameter. This is a standard approach in solving the bi-level optimisation problem, where \eqref{eq:bilevel} is viewed as a constrained optimisation problem, see, for instance, \citep{wright2023fully}, and for which it holds $\max_h \P_\beta(h)\to \max_h\F(h,\theta^\star(h))$ as $\beta \to \infty$.
The following lemma quantifies this intuition and show that the same holds for the averaged problem $\max_h \overline{\F}_\beta(h)$. See Appendix \ref{app:addproofs} for the proof.

\begin{assumption}\label{asm:laplace}
Assume the hyperparameter space $H\subset\RR^{d_h}$ to be compact and the loss function $\LL = \LL(\theta, h)$ to satisfy:
\begin{enumerate}
\item For every $h\in H$, $\LL(\cdot, h)\in C^4(\RR^{d_\theta})$,
    \item For every $h\in H$, the map $\theta\mapsto \LL(\theta,h)$ admits a unique global minimizer $\theta^\star(h)\in\RR^{d_\theta}$ at which the Hessian is positive definite: there exists $\lambda>0$ such that 
    \[\nabla_\theta^2\LL(\theta^\star(h),h)\succeq \lambda I_{d_{\theta}} \qquad \textup{for all} \quad h \in H\,.\]
   
    \item For every $r>0$,
    \[
    \inf\Bigl\{\LL(\theta,h)-\LL(\theta^\star(h),h): h\in H,\ |\theta-\theta^\star(h)|\ge r\Bigr\}>0.
    \]
    
    \item The Gibbs weights have uniformly integrable first moments:
    \[
    \sup_{h\in H}\int_{\RR^{d_\theta}}|\theta| \,
    \exp\bigl(-(\LL(\theta,h)-\LL(\theta^\star(h),h))\bigr)\,d\theta<\infty.
    \]
\end{enumerate}
\end{assumption}

 \begin{lemma} \label{l:penalization}
 Let $\F, \LL$ satisfy Assumption \ref{asm:chaos} and \ref{asm:laplace}. Then, for all $\beta\ge 1$ the Gibbs-averaged fitness \eqref{eq:Fbeta} satisfies
 \[
 |\overline{\F}_\beta(h) - \F( \theta^\star(h),h)| \lesssim   \frac{1}{\sqrt{\beta}} \qquad \textup{for all}\quad  h \in H.
 \]
\end{lemma}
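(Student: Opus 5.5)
Since $\F$ is bounded and Lipschitz (Assumption \ref{asm:chaos}), we work directly with the definition $\overline{\F}_\beta(h)=\log\int e^{\F(\theta,h)}\mu^\infty_\LL(\theta|h)\,d\theta$ and reduce everything to a first-moment estimate for the Gibbs measure. Write $\theta^\star=\theta^\star(h)$. Then
\[
e^{\overline{\F}_\beta(h)}-e^{\F(\theta^\star,h)}=\int \bigl(e^{\F(\theta,h)}-e^{\F(\theta^\star,h)}\bigr)\mu^\infty_\LL(\theta|h)\,d\theta .
\]
On the range of $\F$, $e^x$ is Lipschitz with constant $e^{\sup\F}$, and $\log$ is Lipschitz with constant $e^{-\inf\F}$. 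Hence
\[
|\overline{\F}_\beta(h)-\F(\theta^\star,h)|\le e^{\sup\F-\inf\F}\,\|\F\|_{\mathrm{Lip}}\int|\theta-\theta^\star(h)|\,\mu^\infty_\LL(\theta|h)\,d\theta .
\]
It therefore suffices to prove $\sup_{h\in H}\mathbb{E}_{\mu^\infty_\LL(\cdot|h)}|\theta-\theta^\star(h)|\lesssim \beta^{-1/2}$ for $\beta\ge1$.

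For the moment bound, set $V(\theta)=\LL(\theta,h)-\LL(\theta^\star,h)\ge0$. We write the first moment as $N_\beta/Z_\beta$ with $N_\beta=\int|\theta-\theta^\star|e^{-\beta V}d\theta$ and $Z_\beta=\int e^{-\beta V}d\theta$. The argument then has three steps.

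\textbf{Lower bound on $Z_\beta$.} By Assumption \ref{asm:laplace}.1--2 and compactness of $H$, there is $r_0>0$, uniform in $h$, with $\tfrac{\lambda}{4}|\theta-\theta^\star|^2\le V(\theta)\le C|\theta-\theta^\star|^2$ on $B_{r_0}(\theta^\star)$. This follows from Taylor expansion with a uniform bound on third derivatives near the minimiser. We need these derivatives bounded uniformly in $h$; I would take this as implicit in the $C^4$ hypothesis together with compactness. The upper bound gives $Z_\beta\ge \int_{B_{r_0}}e^{-\beta C|\theta-\theta^\star|^2}\gtrsim \beta^{-d_\theta/2}$.

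\textbf{Inner part of $N_\beta$.} Using the lower quadratic bound, the change of variables $u=\sqrt\beta(\theta-\theta^\star)$ gives $\int_{B_{r_0}}|\theta-\theta^\star|e^{-\beta\lambda|\theta-\theta^\star|^2/4}\,d\theta\lesssim\beta^{-(d_\theta+1)/2}$.

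\textbf{Outer part of $N_\beta$.} By Assumption \ref{asm:laplace}.3 there is $\delta>0$ with $V\ge\delta$ outside $B_{r_0}$, uniformly in $h$. For $\beta\ge1$ we split $e^{-\beta V}=e^{-(\beta-1)V}e^{-V}\le e^{-(\beta-1)\delta}e^{-V}$. Then Assumption \ref{asm:laplace}.4 bounds $\int|\theta-\theta^\star|e^{-V}$ uniformly in $h$, using $|\theta^\star(h)|$ bounded, which follows from item 4 and item 3 together. So the outer contribution is $\lesssim e^{-(\beta-1)\delta}$, which is exponentially small.

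Dividing, $N_\beta/Z_\beta\lesssim \beta^{-1/2}+\beta^{d_\theta/2}e^{-\beta\delta}\lesssim\beta^{-1/2}$ for $\beta\ge1$, with constants uniform in $h$, which proves the lemma.

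The main obstacle is \textbf{uniformity in $h$} of the local quadratic bounds, i.e.\ of $r_0$, $C$ and $\lambda/4$. I would obtain it from compactness of $H$ together with continuity of $h\mapsto\theta^\star(h)$ and of the Hessian. Continuity of $\theta^\star$ follows from the uniform separation in item 3 and continuity of $\LL$. If the uniform third-derivative bound is not available, a weaker alternative is enough: item 3 plus the Hessian bound give $V\ge c\min(|\theta-\theta^\star|^2,1)$ near $\theta^\star$, and this suffices for the upper estimates. For the lower bound on $Z_\beta$, only local Lipschitz continuity of $\nabla^2_\theta\LL$ near $\theta^\star$ is needed.
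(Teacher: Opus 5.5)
Your reduction is the same as the paper's. Lipschitz bounds for $e^x$ and $\log$ on the range of $\F$ reduce the claim to $\sup_h\int|\theta-\theta^\star(h)|\,\mu^\infty_\LL(d\theta|h)\lesssim\beta^{-1/2}$, and this integral is exactly $\W_1(\mu^\infty_\LL(\cdot|h),\delta_{\theta^\star(h)})$. The paper reaches it through Kantorovich--Rubinstein duality, you through the only coupling with a Dirac mass.

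The genuine difference is the key estimate. The paper imports it from the quantitative Laplace principle of Hasenpflug, Rudolf and Sprungk (their Theorem 3.8). You prove it directly by a Laplace splitting: partition function $\gtrsim\beta^{-d_\theta/2}$, inner Gaussian part $\lesssim\beta^{-(d_\theta+1)/2}$, and tail $\lesssim e^{-(\beta-1)\delta}$ via items 3--4. Your route is elementary and self-contained. It also shows that, for a unique nondegenerate minimiser, you only need two-sided quadratic bounds on $V$ on a ball of fixed radius around $\theta^\star(h)$, plus items 3 and 4; $C^4$ is not used. The citation is shorter and extends to manifolds of minimisers, as the paper remarks.

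Two small corrections:
\begin{itemize}
\item The tail term is $\beta^{d_\theta/2}e^{-(\beta-1)\delta}$, which is harmless.
\item Boundedness of $\theta^\star(h)$ does not follow from items 3 and 4 alone. It follows from item 4 together with $V\le\|\LL\|_{\mathrm{Lip}}|\theta-\theta^\star(h)|$ (Assumption \ref{asm:chaos}), or from continuity of $\theta^\star$ and compactness of $H$.
\end{itemize}

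You correctly identify uniformity in $h$ as the crux, and the paper is no more careful there: it simply asserts that the cited constant is uniform by compactness of $H$ and the Hessian bound. Neither argument can get this from the stated hypotheses, because nothing gives continuity of $\nabla^2_\theta\LL$ in $h$.

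Your fallback is also false. The condition $\nabla^2_\theta\LL(\theta^\star(h),h)\succeq\lambda I$ gives a quadratic lower bound only on an $h$-dependent ball. Item 3 gives $V\ge\delta(r)$, with no control of $\delta(r)/r^2$ as $r\to0$.

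Here is a counterexample to the uniform bound. Take $d_\theta=1$, $H=[0,1]$, $\theta^\star(h)=0$ and Hessian $\lambda$ at $0$. One can build a jointly $C^1$, globally Lipschitz family whose quadratic core has radius $h^2$ and is followed by a shelf of slope $\lesssim h^3$ up to radius $h$.
\begin{itemize}
\item This family satisfies items 1--4, with item 3 holding for $\delta(r)\sim r^4$.
\item For $\beta=h^{-4}$ the Gibbs measure has density comparable to uniform on $[-h,h]$ and negligible mass outside $[-2h,2h]$, so its first moment is $\sim h=\beta^{-1/4}$.
\item With $\F=1+\min(|\theta|,1)$, this gives $\sup_{h\in H}|\overline{\F}_\beta(h)-\F(\theta^\star(h),h)|\gtrsim\beta^{-1/4}$, contradicting the claimed $\beta^{-1/2}$ rate.
\end{itemize}

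So the uniform local control you invoke must be added as an explicit hypothesis. One option is joint continuity of $\nabla^2_\theta\LL$; combined with continuity of $\theta^\star$ and compactness, this yields uniform $r_0$, $C$ and $\lambda/4$. Another is uniformly bounded third derivatives on a fixed neighbourhood of $\theta^\star(h)$. With either, your proof is complete.
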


The proof is based on the quantitative Laplace principle derived in \citep{hasenpflug2024wasserstein}, where the authors provide a rate for the limit $\mu^{\infty}_\LL(\cdot|h) \to \delta_{\theta^\star(h)}$, as $\beta \to \infty$, in terms on Wasserstein distances. Recall $\beta>0$ is the inverse of the temperature in the Gibbs measures $\mu^{\infty}_\LL$. For simplicity, we applied the result in the  case where $\LL(\cdot, h)$ attains a unique global minimum for each $h\in H$, but the assumptions can be made more general to include manifolds of solutions, see \citep{hasenpflug2024wasserstein} for more details.
Assumption \ref{asm:laplace} (4) ensures that the Gibbs equilibria are well-defined, while (3) ensures that there is no diverging sequence of local minima $\theta^k$ such that $\LL(\theta^k, h)\to \min\LL(\cdot, h)$ as $k \to \infty$.

\subsection{Diffusion–reaction approximation in the slow mutation regime}
\label{sec:perthame}

To relate \eqref{eq:rhopde} with the evolutionary PDEs typically used for biological systems  \citep{perthame2007transport,ackleh2016population,alfaro2019evolutionary}, we consider the following re-scaling for $\neweps>0$
\begin{equation} \label{eq:scaling}
t \to t/\neweps\,, \quad \sigma \to \sqrt{\neweps} \sigma\,, \quad \overline{\F} \to \neweps \overline{\F}\,.
\end{equation}

Under this rescaling of the fitness $\overline{\F}$, we obtain as $\neweps \to 0$
\[
e^{\neweps \overline{\F}} = 1 + \neweps \overline{\F} + \mathcal{O}(\neweps^2)
\]
and, by integrating, also
$
\int e^{\neweps \overline{\F}}d \rho =  1 + \neweps \int \overline{\F} d \rho + o(\neweps)\,.
$
Using the formula:
$
(1 + \neweps x)/(1 + \neweps y)  = 1 + \neweps(x - y) + o(\neweps)
$
we get that the exponential weights can be approximated as
\[
\frac{e^{\neweps \overline{\F}}}{\int e^{\neweps \overline{\F}}d\rho} = 1 + \neweps \left( \overline{\F} -   \langle \overline{\F}, \rho  \rangle \right) +o(\neweps)\,.
\]
The Taylor expansion for a smooth test function $\phi = \phi(h)$ instead gets us
\begin{align*}
\int \phi(h') K(d \xi)& = \int \phi(h + \sqrt{\neweps} \sigma\xi, \theta ) K(d \xi) \\
& = \int \left(\phi(h) + \sqrt{\neweps} \sigma \xi \cdot \nabla \phi(h) + \frac12 \neweps \sigma^2 \xi^\top \nabla^2 \phi(h)\xi \right)  K(d \xi) \\
&  = \phi(h) + \neweps \frac{\sigma^2}2 \Delta \phi(h) + o(\neweps)\,.
\end{align*}
where we used that $\mathbb{E}[\xi] = 0$, $\mathbb{E}[\xi \tilde{\xi}^\top ] = I$ for $\xi, \tilde\xi\sim K$.
Altogether we obtain 
\begin{align*}
 \frac{e^{\neweps \overline{\F}(h)}}{Z_{\neweps \overline{\F}}[\rho]} \int \phi(h+\sigma\xi)K(d \xi) 
 & = ( 1 + \neweps(\overline{\F}(h) - \langle \overline{\F}, \rho \rangle)\left(\phi(h) + \neweps \frac{\sigma^2}{2} \Delta \phi \right) + o(\neweps) \\
 & = \phi(h) - \neweps \left( \overline{\F}(h) - \langle \overline{\F}, \rho\rangle \right) \phi(h) + \neweps \frac{\sigma^2}2 \Delta \phi (h) + o(\neweps)\,,
\end{align*}
which leads, in turn, to
\[
\langle \phi, \overline{E}^\neweps_{\overline{\F}}[\rho] \rangle    = \left \langle K_{\sqrt{\neweps}\sigma} \ast \phi,\frac{e^{\neweps\overline{\F}}}{Z_{\neweps\overline{\F}}[\rho]} \rho \right \rangle  - \langle \phi, \rho\rangle 
 =   \neweps \left(\left\langle \phi, \overline{\F} \rho\rangle  - \langle \overline{\F}, \rho \rangle \langle \phi,  \rho \right \rangle\right)  + \neweps \frac{\sigma^2}2 \langle \Delta \phi, \rho \rangle  + o(\neweps)\,.
\]

Therefore, \eqref{eq:rhopde} under the slow selection rescaling \eqref{eq:scaling} formally leads in the limit $\neweps \to 0$ to a diffusion-reaction equation
\begin{equation} \label{eq:perthame}
\frac{\partial  \rho}{\partial t}  =   
\rho \left(\overline{\F}(h) - \langle\overline{\F},\rho \rangle \right)
+ \frac{\sigma^2}2\Delta\rho \,,
\end{equation}
 which has been largely studied in the context of evolutionary equations in mathematical biology \citep{perthame2007transport}.
The mutation mechanism is now described by a Laplacian term, while selection is described by a source term where particles at $h$ are created with rate $\overline{\F}(h) $ destroyed uniformly with a uniform rate $\langle\overline{\F}, \rho \rangle$. Note that the terms are balanced so that the total mass remains constant during the evolution.  Equations of type \eqref{eq:perthame} are known in the sampling literature also as birth-death or replicator-mutation dynamics \citep{lu2023birth}, and can be cast as gradient flows with respect to the Hellinger--Kantorovich (Wasserstein--Fisher--Rao) geometry when additional drift terms are present \citep{liero2018optimal,gallouet2019unbalanced}.
The slow mutation limit therefore reveals a remarkable connection between genetic-type algorithms and gradient flows.

\section{Mean convergence toward the fittest solution}
\label{sec:4}

In this section we study the evolution of the averaged PDE \eqref{eq:rhopde} to get an understanding on the hyperparameters evolution at large times. 
Existence of measure solutions to \eqref{eq:rhopde} have been studied in \citep{delmoral2000branching, delmoral2004book,borghi2026chaos}.
In \citep{delmoral2002stability} the authors show that averaged evolutionary operator \eqref{eq:Ebar} is stable, and solutions to \eqref{eq:rhopde} exponentially converge to the unique steady state, under suitable assumptions. This is achieved by using semigroup techniques and Dobrushin’s ergodic coefficient, but an explicit characterisation of the steady states is missing. 

Here, we are interested here in understanding whether we can expect the hyperparameters to converge towards the fittest solution with respect to the averaged objective
\begin{align} \label{eq:singleopt}
h^\star \in \underset{h\in \RR^{d_h}}{\mathrm{argmax}}\; \overline{\F}(h)\,.
\end{align}
To do so, we propose an analysis based on bounds on the moments evolutions, and a quantitative version of the Laplace principle. 

\subsection{Moments evolution}

Denote with $\mup(\rho)$ the mean of $\rho\in \mathcal{P}(\RR^{d_h})$,
$\mup(\rho) : = \int h \,\rho(dh)\,$, and the particle energy, or second moment, as 
$
\mathrm{En}(\rho): = (1/2)\int|h|^2\rho(dh)\,.
$
For notational convenience we also define the action of reweighing the agents distribution $\rho$ with respect to the average fitness $\overline{\F}$ as
\begin{equation}\label{eq:G}
G_{\overline{\F}}[\rho](dh) :=  \frac{e^{\overline{\F}(h)} \rho(dh)}{\int e^{\overline{\F}(h')}\rho(dh')} \,.
\end{equation}
We note that the evolutionary operator \eqref{eq:Ebar} can be compactly be written as $\overline{E}_{\overline{\F}}[\rho] = K_\sigma\ast G_{\overline{\F}}[\rho]$.

Along solutions $\rho = \rho(t, h)$ of the averaged PDE \eqref{eq:rhopde}, it holds
\begin{equation}\label{eq:meanevo}
\frac{d}{dt}\mup(\rho) = \mup\left(G_{\overline{\F}}[\rho] \right)  - \mup(\rho)\,.
\end{equation}
This can be shown by directly applying the weak formulation \eqref{eq:Ebar} of the evolutionary operator. Since the mutation vectors are mean-zero, mutation does not affect the mean evolution, which is fully determined by the selection mechanism. 

The evolution of the energy $\mathrm{En}(\rho)$ is instead given by
\begin{align*}
\frac{d}{dt} \mathrm{En}(\rho(t)) &  = \iint |h + \sigma\xi|^2  G_{\overline{\F}}[\rho(t)](dh)  - \mathrm{En}(\rho(t)) \\
& = \mathrm{En}\left( G_{\overline{\F}} [\rho(t)] \right)   - \mathrm{En}(\rho(t)) + d_h \sigma^2 \,,
\end{align*}
where we used that $\int |\xi|^2 K(d \xi ) = d_h$ since $K$ is the standard $d_h$-dimensional Gaussian measure.
From the moments' evolution, then, we understand that any equilibrium $\rho^\infty$ should satisfy the equations
\begin{equation} \label{eq:equilibrium}
\begin{split}
\mup(\rho^\infty) &= \mup\left(G_{\overline{\F}}[\rho^\infty]\right) \\
\mathrm{En}(\rho^\infty) &= \mathrm{En}\left(G_{\overline{\F}}[\rho^\infty]\right) +  d_h \sigma^2 \,.
\end{split}
\end{equation}
It is also interesting to note that there is a dimensionality dependence on the energy equation coming from the diffusion term.

\subsection{Mean convergence}

As it is clear from \eqref{eq:equilibrium}, the properties of the selection operator $G_{\overline{\F}}[\cdot]$ are crucial for the long time behaviour of the averaged PDE model \eqref{eq:rhopde}. At the algorithmic level, we can expect that if $\overline{\F}$ attains sharp maxima, few agents will be considered fit for being copied and mutated, leading to fast concentration. On the contrary, if $\overline{\F}$ is almost flat, all the agents will considered almost equally fit, leading to a more dissipative dynamics. 

In Genetic Algorithms, this property of the evolution is called \textit{selection pressure} \citep{cerf1998ga,haasdijk2018pressure}.
In our settings, since the weights in $G_{\overline{\F}}[\cdot]$  are proportional to $\exp(\overline{\F})$, the selection pressure can be tuned by introducing an additional parameter $\alpha$ to rescale the fitness as
\[
\overline{\F} \gets \alpha\overline{\F}\qquad \textup{with}\quad \alpha >0\,. 
\]
Large $\alpha$'s correspond to high selection pressure, while lower ones to a low selection pressure. 

In this section, we show that under suitable assumptions on $\overline{\F}$ (in particular, it attains a unique maximizer) and if the selection pressure $\alpha$ is sufficiently high, solutions to the averaged PDE models \eqref{eq:rhopde} converge towards the fittest point $h^\star$ of problem \eqref{eq:singleopt}.
The analysis technique mirrors the one proposed in \citep{borghi2025kinetic} for genetic algorithms. The only differences are that here we are considering a time-continuous dynamics, there is no crossover mechanism, and we are maximizing instead of minimizing.

\begin{assumption} \label{asm:fitness} The objective function $\overline{\F}:\RR^{d_h}\to \RR$ is continuous and satisfies:
\begin{enumerate}[label=(\roman*)]
\item \textit{(solution uniqueness)} there exists a unique global maximizer $h^\star$;
\item \textit{(growth conditions)} there exists $L_{\overline{\F}},c_u, c_l, R_l>0$ such that 
\begin{equation*}
\begin{cases}
    |\overline{\F}(h) - \overline{\F}(\tilde h)|\leq L_{\overline{\F}}( 1+ |h| + |\tilde h|)|h - \tilde h| & \forall\; h, \tilde h\in \RR^{d_h} \\
    \overline{\F}(h^\star) - \overline{\F}(h) \leq c_u (1 + |h|^2) & \forall\; h\in \RR^{d_h} \\
     \overline{\F}(h^\star) - \overline{\F}(h) \geq c_u |h|^2 & \forall \; h \,:\, |h|>R_l\,; 
\end{cases}
\end{equation*}
\item \textit{(inverse continuity)} there exists $c_p, p>0$, $R_p >0$ and an upper bound $\overline{\F}_\infty>0$ such that 
\begin{equation*}
\begin{cases}
    c_p|h - h^\star| \leq \overline{\F}(h^\star) - \overline{\F}(h)    & \forall\;h\,:\, |h|\leq R_p \\
      \overline{\F}_\infty \leq \overline{\F}(h^\star) - \overline{\F}(h)    & \forall\;h\,:\, |h|> R_p \,.
\end{cases}
\end{equation*}
\end{enumerate}
\end{assumption}

The first important estimate refer to the effect of the selection operator on the energy of a given distribution $\rho\in \mathcal{P}(\RR^{d_h})$. Under Assumption \ref{asm:fitness} (ii) the following estimate holds \citep[Lemma 3.3]{carrillo2021cbo}:
\begin{equation} \label{eq:energy}
\mathrm{En}\left(G_{\alpha \overline{\F}}[\rho] \right)
\leq b_1 + b_2 \mathrm{En}(\rho)\,,
\end{equation}
with 
\[
b_1 = R_l + b_2\,, \qquad b_2 = 2\frac{c_u}{c_l}\left( 1 + \frac{1}{\alpha c_l}\frac1{R_l^2} \right) \,.
\]
Note that the estimate does not  deteriorates
as the selection pressure $\alpha$ increases.
The next important result we will employ is a quantitative version of the Laplace principle derived in \citep[Proposition 21]{fornasier2024consensus}. Assume $h^\star \in \textup{supp}(\rho)$, and let $\overline{\F}_r:= \overline{\F}(h^\star) - \inf_{h\in B(h^\star, r)}\overline{\F}(h)$, for any $r\in (0,R_p]$ and $q>0$ such that $q +\overline{\F}_r < \overline{\F}_\infty$, we have
\begin{equation}\label{eq:laplace}
| \mup\left(G_{\alpha\overline{\F}}[\rho] \right) - h^\star| \leq c_p (q + \overline{\F}_r)^{1/p}  + \frac{\exp(-\alpha q)}{\rho\left(B(h^\star, r)\right)}\int|h - h^\star|\rho(dh)\,.
\end{equation}
The above estimate relates the mean of the reweighted probability $G_{\alpha \overline{\F}}[\rho]$ with the solution to the optimisation problem \eqref{eq:singleopt}. Under Assumption \ref{asm:fitness}, it is possible to take the $r>0$ very small such that the first term is smaller with respect to a given tolerance. Then, by taking a sufficiently large selection pressure $\alpha \gg 1$, the second term can also be made arbitrary small. The proof of the following convergence result is based on this intuition, see Appendix \ref{app:laplace} for the details.

\begin{theorem}
    \label{t:convergence}
Consider a weak measure solution $\rho \in C([0,t_{\max}],\mathcal{P}_2(\RR^{d_h}))$ to \eqref{eq:rhopde}, and let the effective fitness  $\overline{\F}$ satisfy Assumption \ref{asm:fitness}. If $h^\star\in \mathrm{supp}(\rho_0)$, then, for any tolerance $tol>0$, there exists a selection pressure $\alpha = \alpha(\overline{\F},\rho_0,t_{\max})>0$ sufficiently large such that 
\[
|\mup(\rho(t)) - h^\star|^2 \leq e^{- t}|\mup[\rho_0] - h^\star|^2 + (1 - e^{-t}) \frac{tol}2 \qquad \textup{for all} \quad t\in [0,t_{\max}]\,.
\]
Moreover,  if 
\[t_{\max} = t^\star \geq \log \left(\frac{2|\mup(\rho_0) - h^\star|^2}{tol} \right)  \]
it holds
\[ |\mup(\rho(t^\star)) - h^\star|^2 \leq  tol\,.\]
\end{theorem}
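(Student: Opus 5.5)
The plan is to differentiate $|\mup(\rho(t)) - h^\star|^2$ using the mean evolution \eqref{eq:meanevo}, with $\overline{\F}$ replaced by $\alpha\overline{\F}$. Writing $m(t) = \mup(\rho(t))$ and $m_G(t) = \mup(G_{\alpha\overline{\F}}[\rho(t)])$, we have
\[
\frac{d}{dt}\frac12|m - h^\star|^2 = (m - h^\star)\cdot(m_G - m) = -|m-h^\star|^2 + (m-h^\star)\cdot(m_G - h^\star).
\]
Young's inequality then gives
\[
\frac{d}{dt}|m-h^\star|^2 \le -|m-h^\star|^2 + |m_G-h^\star|^2 .
\]
Hence it suffices to guarantee $|m_G(t) - h^\star|^2 \le tol/2$ uniformly on $[0,t_{\max}]$. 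Grönwall's inequality then yields exactly $|m(t)-h^\star|^2 \le e^{-t}|m(0)-h^\star|^2 + (1-e^{-t})\,tol/2$. The second claim follows by choosing $t^\star$ so that $e^{-t^\star}|m(0)-h^\star|^2 \le tol/2$, which gives a total bound of at most $tol$.

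To bound $|m_G - h^\star|$ uniformly in time, I will apply the quantitative Laplace principle \eqref{eq:laplace} to $\rho(t)$. First, fix $r\in(0,R_p]$ and $q>0$ with $q+\overline{\F}_r<\overline{\F}_\infty$ and $c_p(q+\overline{\F}_r)^{1/p}\le \sqrt{tol/2}/2$. This is possible because $\overline{\F}_r\to 0$ as $r\to0$ by continuity. The second term in \eqref{eq:laplace} needs two a priori bounds that are \emph{independent of $\alpha$}:
\begin{enumerate}
\item a uniform bound on $\int|h-h^\star|\,d\rho(t)$. This follows from the energy equation and \eqref{eq:energy}, which give $\frac{d}{dt}\mathrm{En}(\rho)\le b_1 + d_h\sigma^2 + (b_2-1)\mathrm{En}(\rho)$, hence $\sup_{t\le t_{\max}}\mathrm{En}(\rho(t))\le C(\rho_0,t_{\max})$. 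Note that $b_2$ only improves as $\alpha$ grows, so we can restrict to $\alpha\ge1$.
\item a positive lower bound on the mass $\rho(t)(B(h^\star,r))$ over $[0,t_{\max}]$.
\end{enumerate}
Given these, choosing $\alpha$ large makes $e^{-\alpha q}C/\inf_t\rho(t)(B(h^\star,r))\le\sqrt{tol/2}/2$, which concludes the argument.

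The main obstacle is the mass lower bound in item 2. My approach is to use the mild (Duhamel) form of \eqref{eq:rhopde}:
\[
\rho(t) = e^{-t}\rho_0 + \int_0^t e^{-(t-s)} K_\sigma * G_{\alpha\overline{\F}}[\rho(s)]\,ds \ge e^{-t}\rho_0,
\]
which holds in the sense of measures because the gain term is nonnegative. Then $\rho(t)(B(h^\star,r))\ge e^{-t_{\max}}\rho_0(B(h^\star,r))>0$, since $h^\star\in\mathrm{supp}(\rho_0)$. This bound is independent of $\alpha$, which is crucial: otherwise the choice of $\alpha$ would be circular. To justify it rigorously for weak measure solutions, I would test the weak formulation against a nonnegative smooth cutoff $\phi$ approximating $\mathbf 1_{B(h^\star,r)}$ from below. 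This gives $\frac{d}{dt}\langle\phi,\rho\rangle\ge-\langle\phi,\rho\rangle$, which I then integrate. The ball must be shrunk slightly, say to radius $r/2$, with $r$ adjusted accordingly.
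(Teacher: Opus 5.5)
Your proposal is correct. Its skeleton is the same as the paper's: the mean evolution combined with Young and Gr\"onwall, the quantitative Laplace principle \eqref{eq:laplace} with $r,q$ fixed by the tolerance, and a priori bounds that do not depend on $\alpha$ once $\alpha\geq 1$. The genuine difference lies in the lower bound on $\rho(t)(B(h^\star,r))$.

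\textbf{The paper's route.} The paper proves Lemma~\ref{l:mass}. It uses Markov's inequality with the uniform bound \eqref{eq:unifmom} on $M_2(G_{\alpha\overline{\F}}[\rho(t)])$, together with a lower bound on the Gaussian density over a large ball. This shows that the gain term $K_\sigma * G_{\alpha\overline{\F}}[\rho(t)]$ puts mass at least $\delta(r,\overline{M})>0$ on $B(h^\star,r)$ at every time. Hence $\frac{d}{dt}\rho(t,B(h^\star,r))\geq \delta-\rho(t,B(h^\star,r))$ and $\rho(t,B(h^\star,r))\geq e^{-t}\rho_0(B(h^\star,r))+(1-e^{-t})\delta\geq \rho_0(B(h^\star,r))\wedge\delta$.

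\textbf{Your route.} You drop the gain term altogether and keep only $\rho(t)\geq e^{-t}\rho_0$. This is exactly the paper's estimate with $\delta$ set to zero.

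\textbf{What your route buys.} It is shorter. It uses only nonnegativity of the gain term, so it does not depend on the form of the mutation kernel or on $\sigma$. It also needs moment control only for the numerator $\int|h-h^\star|\,d\rho(t)$ in \eqref{eq:laplace}, not for the selected measure.

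\textbf{What the paper's route buys.} It adds a mutation-driven contribution $(1-e^{-t})\delta$ that does not decay in time. This expresses the exploratory role of mutation: mass near $h^\star$ is actively replenished rather than merely lost slowly. It would also give a horizon-independent mass bound whenever second moments are controlled uniformly in time. With the exponentially growing moment bound actually available here, however, $\delta$ is governed by Gaussian tails at radius of order $\sqrt{\overline{M}}$. So for this finite-horizon statement your bound $e^{-t_{\max}}\rho_0(B(h^\star,r/2))$ is entirely adequate. Note that under your route the hypothesis $h^\star\in\mathrm{supp}(\rho_0)$ carries all the weight at every time, not only near $t=0$.

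\textbf{A small simplification.} Your cutoff argument is the right way to make the Duhamel inequality rigorous for weak solutions. Once you integrate, you do not need to readjust $r$: with $0\le\phi\le \chi_{B(h^\star,r)}$ and $\phi=1$ on $B(h^\star,r/2)$, you directly get $\rho(t)(B(h^\star,r))\geq\langle\phi,\rho(t)\rangle\geq e^{-t}\rho_0(B(h^\star,r/2))>0$. This lower bound feeds straight into \eqref{eq:laplace} with the original radius $r$.
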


\begin{remark} 
In the present work we focus on the convergence scenario associated with a single optimum of the effective fitness,
which, intuitively, leads to a unimodal equilibrium distribution around the fittest solution. In more general settings, though, one can expect
a richer long-time behaviour, with potentially non-unique, multimodal steady states, and phase-transition-type phenomena as selection pressure varies. The relation discussed in Section \ref{sec:perthame}  with descent dynamics with respect to the Hellinger--Kantorovich (Wasserstein--Fisher--Rao) geometry suggests the availability of Lyapunov functionals, and a variational characterisation of steady states. A systematic analysis of these entropy-based mechanisms and possible phase transitions is beyond the scope of the present paper and will be addressed in future work.    
\end{remark}


\section{Numerical experiments}
\label{sec:numerics}

In this section, we numerically investigate the many-agent limit and the emergence of two time scales in the training-evolutionary dynamics. We begin with low-dimensional examples and then validate the proposed framework on a Reinforcement Learning (RL) task. For the illustrative examples, we consider a quadratic bi-level optimisation problem, for which the equilibria can be computed exactly, allowing us to simulate both the limiting dynamics as $N \to \infty$ and the two-time-scale regime $(\ve \to 0)$. We then consider the Himmelblau function, which exhibits a more complex landscape, to study the interplay between parameter training and hyperparameter evolution under Population-Based Training (PBT) updates. Finally, through the deep RL example of the CartPole problem, we illustrate the effect of the two-time-scale structure via effective fitness evaluations and the evolution of the hyperparameter distribution over time. Since sampling parameters from the equilibrium distribution of the parameters is not possible in this setting, we average the fitness over a time window. Assume we need to evaluate the effective fitness at a step $k$ of a neural network $(\theta_k^i, h^i_k)$, this can be approximated as
\begin{equation} \label{eq:timefitness}
 \overline{\F}(h_k) \approx \frac1m \sum_{s = 1}^m \F(\theta^i_{k-s}, h^i_{k})\,,\qquad \textup{for a time window} \quad m\geq 1\,,
\end{equation}
where $\theta^i_{k-s}, s = 1,\dots, m$ are the last iterations of the training algorithm (Adam optimizer in our case). Note that, for relatively small time windows $m$, we have $h^i_{k} = h^i_{k-s}$ since the hyperparameters are updated at a slower rate. Our experiments suggests that larger averaging windows yield more stable and accurate results, supporting the interpretation that the hyperparameter PBT update does not directly optimize the instantaneous fitness, but rather an effective fitness induced by the parameter training equilibrium.

\subsection{Illustrative examples}

We consider two bi-level optimisation problems in which the fitness function depends only on the parameters, namely $\F=\F(\theta)$ with $\theta\in\RR^2$. The hyperparameter space is also two-dimensional, with $h=(h_0,h_1)\in\RR^2$. The first hyperparameter, $h_0$, will be used to introduce a bias in the loss function $\LL = \LL(\theta, h_0)$ used for training. 
The second hyperparameter, $h_1$, determines the noise strength in the SGD training dynamics
\begin{equation}\label{eq:toy-langevin}
d\theta_t=-\nabla_\theta \mathcal{L}(\theta,h_0)\,dt+h_1\,dB_t\,.
\end{equation}
Under this choice, the optimal hyperparameters are given by $h_0=0$, while $h_1$ should decrease over the course of the computation. With these settings, the agents \(\{(\theta^i,h^i)\}_{i=1}^N\) evolve according to Algorithm~\ref{alg:pbt} with $\tau = 1$ and mutation noise $\sigma = 0.1$. Both parameters and  hyperparameters are initially sampled uniformly over $[-1,1]$.
In the PBT update of the hyperparameters, the agents to be copied are selected proportionally to $\exp(\alpha \F(\theta^i))$, with selection pressure $\alpha = 100$. This strategy is also known as Gibbs or softmax-based selection. The parameters evolution \eqref{eq:toy-langevin} is discretized via explicit Euler--Maruyama scheme with time step $\Delta t = 0.01$.

\subsubsection{Quadratic function}

We consider a quadratic fitness, similar to the one in \citep{jaderberg2017Population}
\begin{equation*}
\mathcal{F}(\theta)=1.2-(\theta_0^2+\theta_1^2)\,, \quad \theta=(\theta_0, \theta_1)\in \RR^2,
\label{eq:quadF}
\end{equation*}
which attains its maximum $1.2$ at $\theta=(0,0)$. The loss function corresponds to a bias fitness 
\begin{equation*}
\LL(\theta,h_0):= -\F(\theta -h_0 ) = -1.2+\bigl((\theta_0-h_0)^2+(\theta_1-h_0)^2\bigr)\,.
\end{equation*}
Note that $\LL(\cdot,h_0)$ is also quadratic, and so \eqref{eq:toy-langevin} admits a unique Gaussian invariant measure,
\begin{equation}\label{eq:toy-steady}
\mu^\infty_{\LL}(\theta\mid h) = \frac{e^{- 2\LL(\theta,h_0)/{h_1^2}}}{\int e^{-{2}\LL(\theta,h_0)/{h_1^2}}d\theta }
= \mathcal N\!\left(
\begin{pmatrix} h_0 \\ h_0 \end{pmatrix},
\frac{h_1^2}{4} I_2
\right).
\end{equation}


\begin{figure}[t]
    \centering
    \begin{subfigure}[t]{0.45\textwidth}
        \centering
        \includegraphics[width=\textwidth]{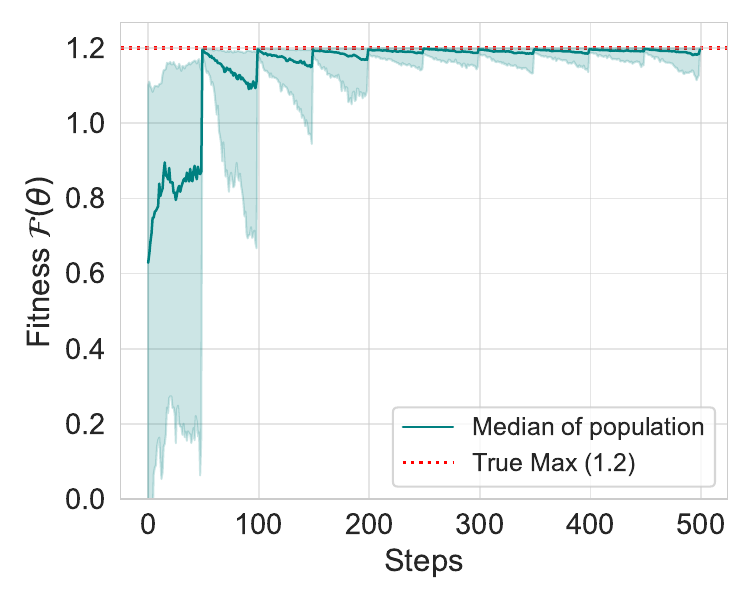}
        \caption{Fitness}
    \end{subfigure}
    \begin{subfigure}[t]{0.45\textwidth}
        \centering
        \includegraphics[width=\textwidth]{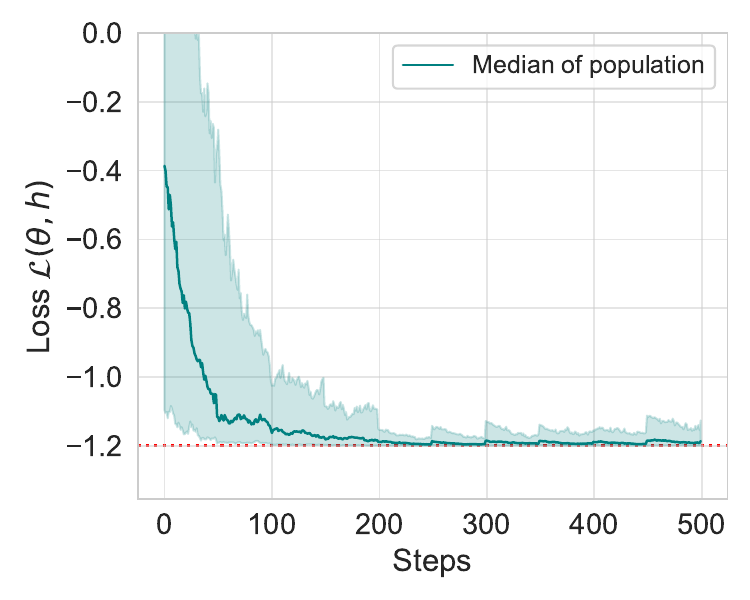}
        \caption{Loss}
    \end{subfigure}
    \caption{Quadratic objective fitness and loss.
    Median and $10\%-90\%$ quantiles of the population fitness $\F(\theta)$ and loss $\LL(\theta,h)$ over $100$ iterations of Algorithm \ref{alg:pbt}. PBT updates, where agents are resampled according to their fitness values, occur every 50 training steps and lead to a discontinuity in the fitness evolution.
 Population size is $N=100$.}
    \label{fig:objval}
\end{figure}

\begin{figure}[t]
    \centering
    \begin{subfigure}{\textwidth}
        \centering
        \includegraphics[trim = {0 4mm 0 0}, clip,width=0.9\textwidth]{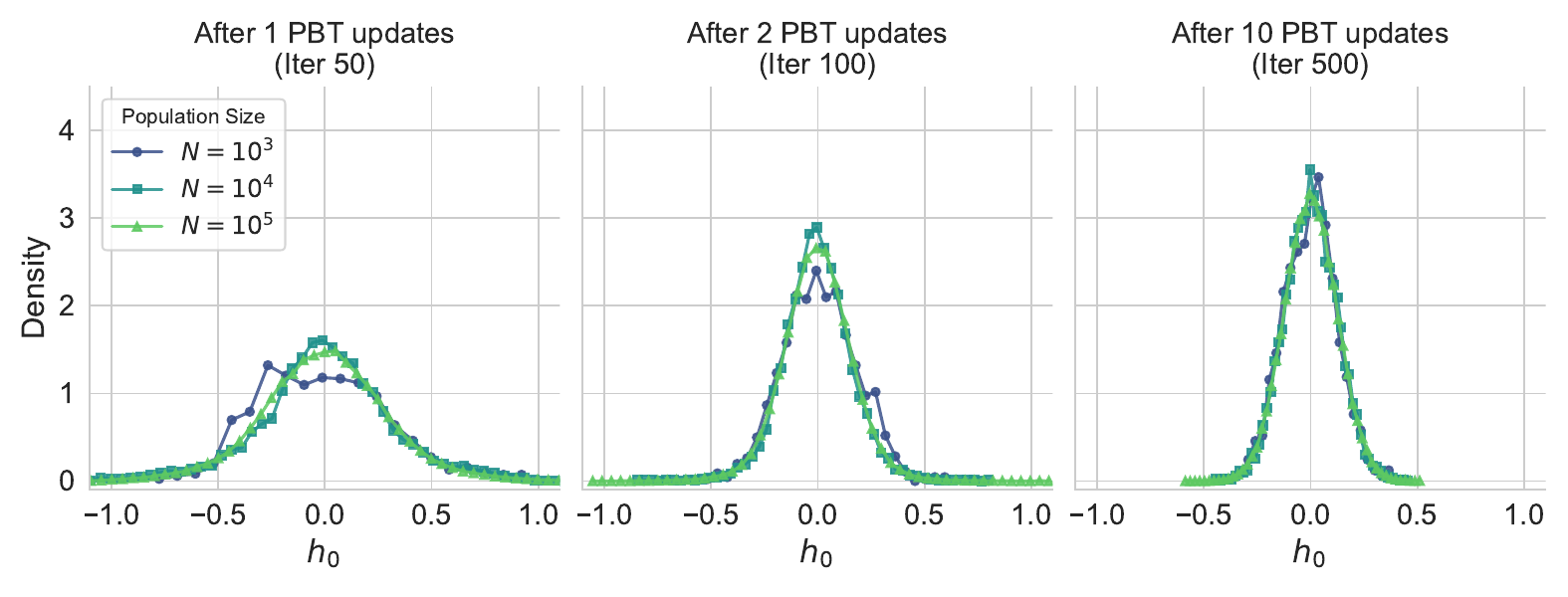}
        \caption{Evolution of $h_0$ marginal distribution}
        \label{fig:n-h0}
    \end{subfigure}
\smallskip

     \begin{subfigure}{\textwidth}
        \centering
        \includegraphics[trim = {0 4mm 0 0}, clip, width=0.9\textwidth]{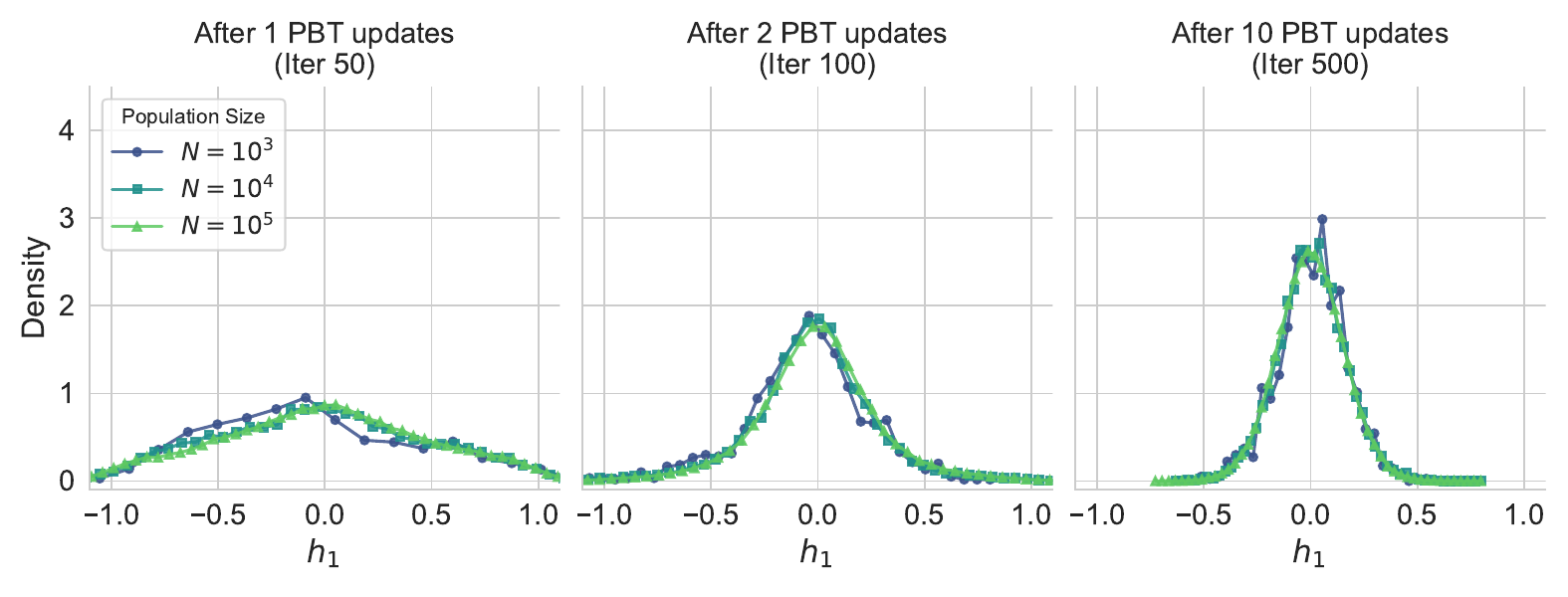}
        \caption{Evolution of $h_1$ marginal distribution}
        \label{fig:n-h1}
    \end{subfigure}
    \caption{Many-agent limit $N\to \infty$. Evolution of the hyperparameter distributions for different population sizes \(N=10^3,10^4,10^5\) and quadratic problem. Agents are initially sampled uniformly from $[-1,1]$ and evolved according with Algorithm \ref{alg:pbt}.  The total number of iterations is $500$, with selection jumps at every 50 training steps. The curves are constructed by connecting bin centres (markers) with straight lines and the number of bins is automatically determined.}
    \label{fig:N}
\end{figure}

\begin{figure}[t]
    \centering
    \begin{subfigure}{\textwidth}
        \centering
        \includegraphics[trim = {0 4mm 0 0}, clip,width=0.9\textwidth]{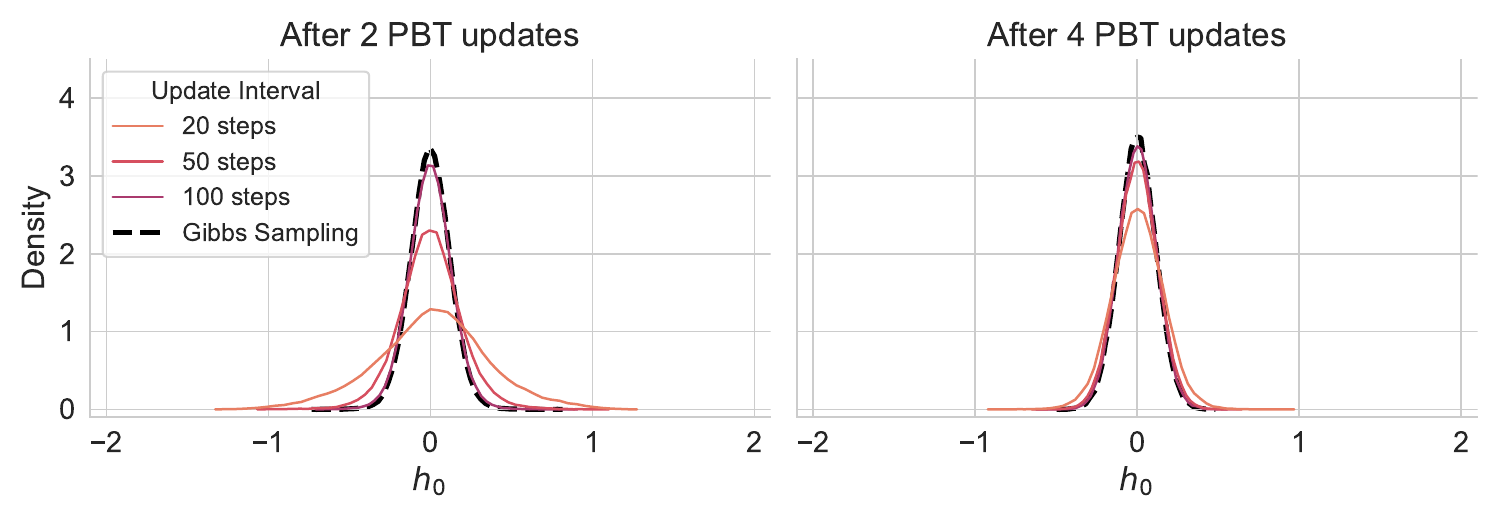}
        \caption{Evolution of $h_0$ marginal distribution}
        \label{fig:h0}
    \end{subfigure}
\smallskip

     \begin{subfigure}{\textwidth}
        \centering
        \includegraphics[trim = {0 4mm 0 0}, clip,width=0.9\textwidth]{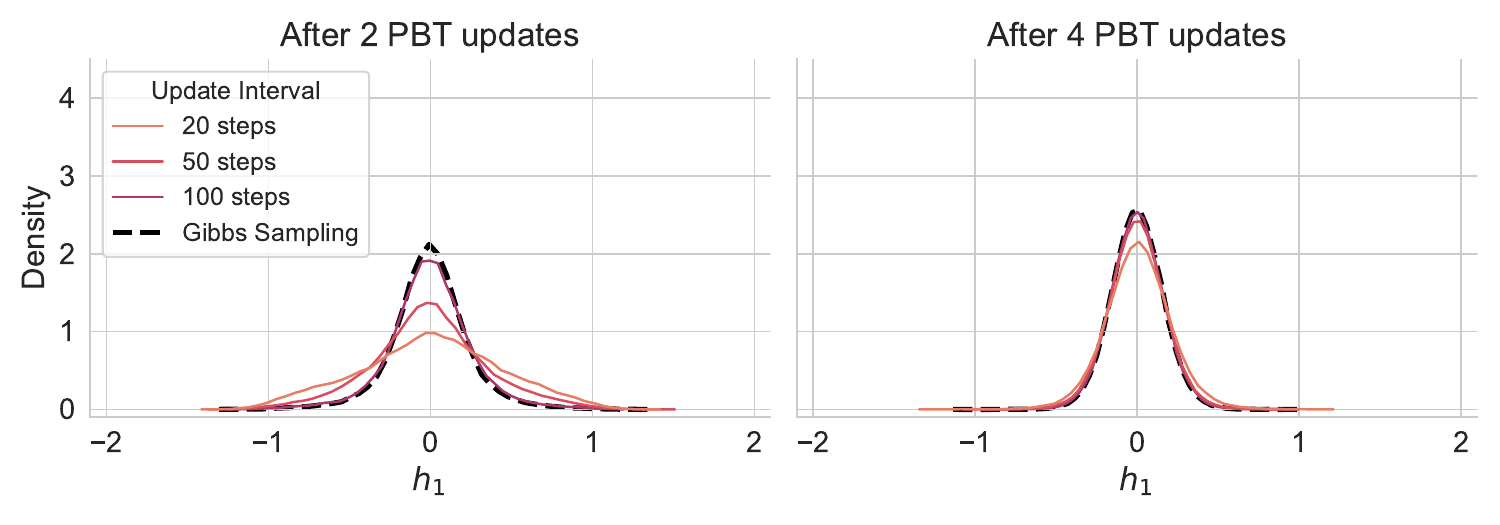}
        \caption{Evolution of $h_1$ marginal distribution}
        \label{fig:h1}
    \end{subfigure}
\caption{Fast training limit $\ve \to 0$.
Evolution of the hyperparameter distributions for different fast-training regimes. The limit $\ve \to 0$ in the PDE model \eqref{eq:fpde}  corresponds
in Algorithm~\ref{alg:pbt} to increasing the number of internal SGD steps performed between successive PBT hyperparameter updates. We consider 20, 50, and 100 inner training steps. The case $\ve = 0$ corresponds to directly sampling from the equilibrium distribution \eqref{eq:toy-steady} and evolving the agents according to Algorithm~\ref{alg:macro}, which simulates the averaged PDE model \eqref{eq:rhopde}. The population size is $N=10^5$, and the curves are obtained by connecting bin centers with straight lines using 50 bins.}
    \label{fig:two-time}
\end{figure}

Figure~\ref{fig:objval} shows how the population fitness and loss evolve during training and across PBT updates with $N = 100$ agents. Immediately after each PBT update, the median fitness exhibits a sharp jump toward the optimum, as the top-performing agents are copied. The fitness then gradually decreases until the next update. This occurs because the SGD/Langevin dynamics optimise the biased loss function. When exploration noise is added to the hyperparameters $h$, the Langevin dynamics drive the parameters $\theta$ toward these mutated, suboptimal targets, temporarily worsening the true fitness, while the biased loss continues to decrease.

Figure~\ref{fig:N} illustrates the many-agent limit of the PBT dynamics for $N = 10^3, 10^4, 10^5$. As predicted by Theorem~\ref{t:chaos}, the evolution becomes less noisy as $N$ increases, suggesting convergence toward the deterministic kinetic PDE model \eqref{eq:fpde}. Since the mutation strength does not decrease during the computation, the large-time distribution of the hyperparameters is approximately Gaussian and centered around the optimal values $h_0 = 0$ and $h_1 = 0$. This is consistent with the convergence analysis of Theorem~\ref{t:convergence}.

Figure~\ref{fig:two-time} presents the two-time-scale limit in which the SGD training dynamics is rescaled by a parameter $\ve>0$. At the algorithmic level, this corresponds to increasing the number of inner training steps performed between successive evolutionary hyperparameter updates. We consider 20, 50, and 100 training steps per PBT update, corresponding to the rescaled PDE evolution \eqref{eq:fpde} in the regime $\ve \to 0$. The limiting case $\ve = 0$, corresponding to the averaged PDE \eqref{eq:rhopde}, is simulated using Algorithm~\ref{alg:macro}, in which the iterative inner training loops are replaced by direct sampling from the invariant measure~\eqref{eq:toy-steady}.

Simulating the averaged equation significantly reduces the computational cost. Indeed, the dominant cost of PBT, the inner-loop training, is replaced by a single sampling step from a Gibbs distribution. Moreover, we observe that, as more training steps are performed between consecutive PBT updates, the hyperparameter distribution more closely matches the one produced by the reduced algorithm. Also, the two-time-scale separation seems not to be uniform in time as the difference between the distributions becomes less evident after 4 PBT updates, see Figure \ref{fig:two-time} (right).

\begin{algorithm}[t]
\caption{Training via averaged PDE \eqref{eq:rhopde}}
\label{alg:macro}
\begin{algorithmic}
\State Initialize $N\in \mathbb{N}$ NNs with hyperparameters $h^i\in H$, $i = 1,\dots, N$
\State Set $\tau\in (0,1]$,$\sigma>0$, $t_{\max}>0$
\State $t \gets 0$
\While{$t < t_{\max}$}
\For{$i = 1,\dots, N$} 
\State $\overline{F}^i \gets \F(\theta^i,h^i)$ with $\theta^i\sim\mu^\infty(\theta|h^i)$  \Comment{effective fitness via sampling}
\State OR:
\State $\overline{F}^i \gets \overline{\F}(h^i)$ approximated via \eqref{eq:timefitness}  \Comment{effective fitness via time averaging}

\EndFor
    \For{$i = 1, \dots, N$}                      \Comment{population interaction} 
  \State with probability $\tau$:       
    \State \qquad select a NN $(\theta^j,h^j)$ proportionally to $\exp(\overline{F}^j)$   
    \State \qquad sample random mutation vector $\xi$
     \State \qquad $\tilde \theta^i \gets \theta^j$  \Comment{copy parameter}
    \State \qquad $\tilde h^i \gets h^j + \sigma\xi$   \Comment{copy and perturb hyperparameter}
    
 \EndFor
  \State $\theta^i\gets \tilde \theta^i, h^i \gets \tilde h^i$ for all $i=1,\dots,N$
  \State $t \gets t + \tau$
\EndWhile
\end{algorithmic}
\end{algorithm}

\subsubsection{Himmelblau function}

To study parameter distributions in a non-convex setting, we consider the two-dimensional Himmelblau function,
\begin{equation}
\mathcal{F}(\theta)=(\theta_0^2+\theta_1-11)^2+(\theta_0+\theta_1^2-7)^2, \quad \theta=(\theta_0, \theta_1)\in \RR^2,
\end{equation}
with, as before, a biased loss for $h_0\in \RR$
\begin{equation}
\mathcal{L}(\theta,h_0)=((\theta_0-h_0)^2+\theta_1-11)^2+(\theta_0+(\theta_1-h_0)^2-7)^2,
\end{equation}
The Himmelblau function has four distinct global minima, each attaining $\mathcal{F}=0$ (see Figure~\ref{fig:himmel}). Since our framework assumes maximisation of the fitness we will consider $-\F$ when evaluating the agents' fitness.  

Figure~\ref{fig:theta-dist} illustrates the underlying mechanism of Algorithm \ref{alg:pbt}: agents first approach nearby minima via SGD dynamics, after which copying high-performing agents can relocate particles into different basins. Subsequent SGD steps refine the parameters within these basins, ultimately leading to concentration around the a single optimum located at $(3,2)$.

\begin{figure}[t]
    \centering
    \begin{subfigure}[h!]{0.25\textwidth}
        \centering
        \includegraphics[width=\textwidth]{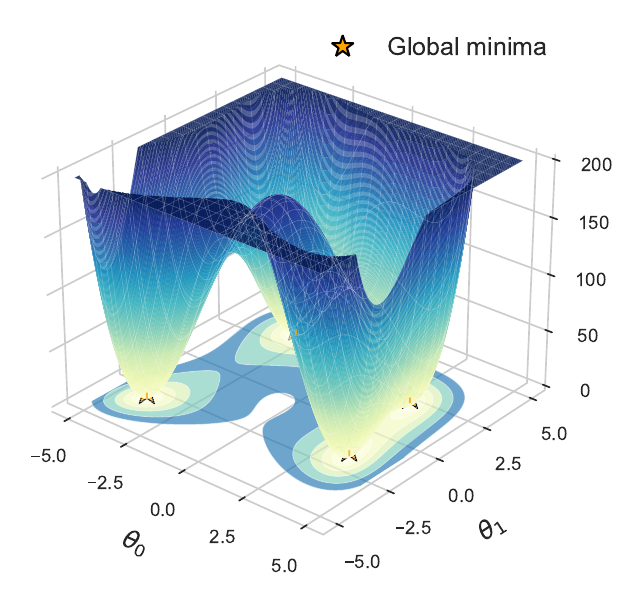}
        \caption{Himmelblau function}
        \label{fig:himmel}
    \end{subfigure}
    \hfill
    \begin{subfigure}[t]{0.73\textwidth}
        \centering
        \includegraphics[width=\textwidth]{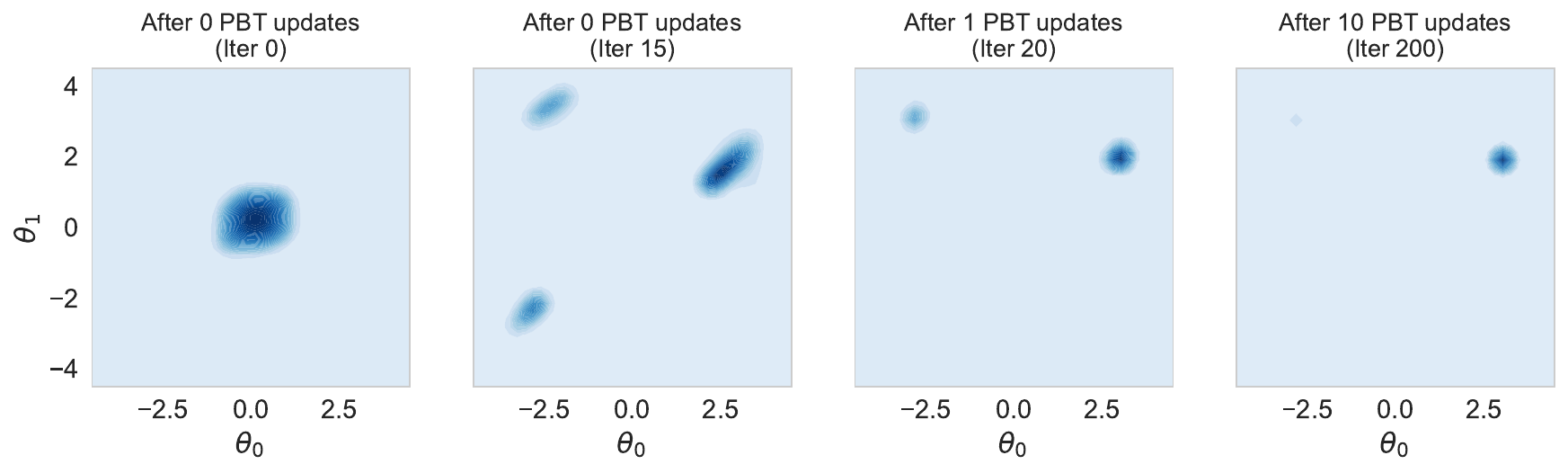} \\
        \includegraphics[width=\textwidth]{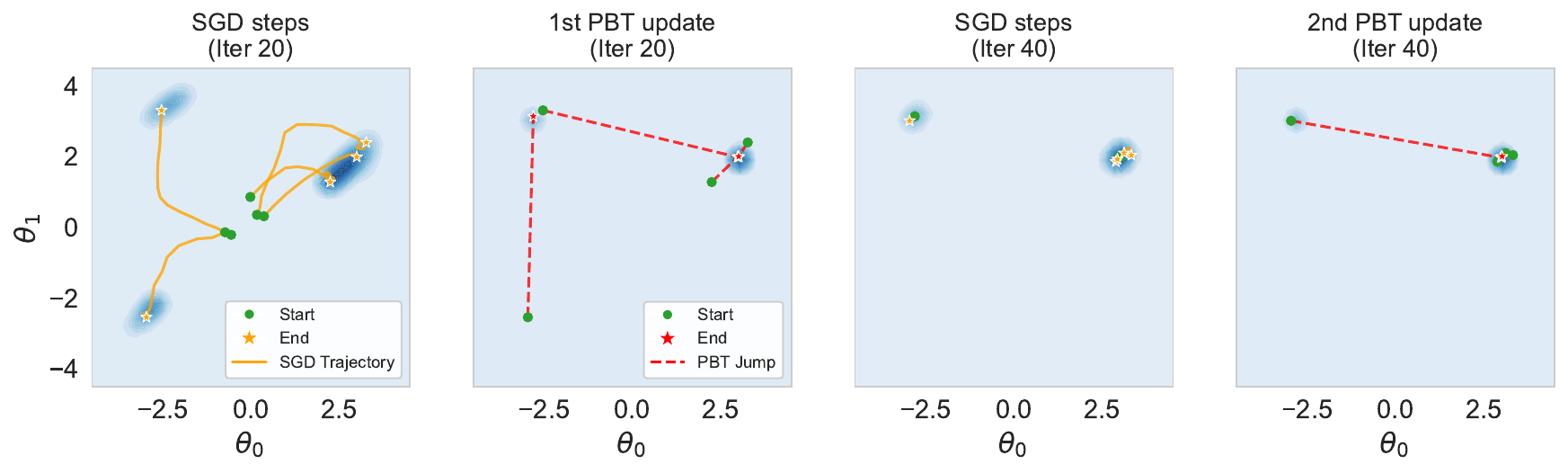}
        \caption{Evolution of the distribution of \(\theta\)}
        \label{fig:theta-dist}
    \end{subfigure}
    \caption{Himmelblau test, parameters' evolution. The population size is $N=10^5$, and the parameters $\theta$ are initially sampled uniformly from the box $[-0.5,0.5]^2$. The density is reconstructed using a 40-bin histogram in each dimension. The second row in (b) shows the trajectories of 5 randomly selected agents. The solid orange curves correspond to the SGD training dynamics, while the red dashed lines indicate the jumps induced by the PBT update after an inner training loop.}
    \label{fig:theta}
\end{figure}


\subsection{Deep Reinforcement Learning}

In this test, we show how the two-time-scale modelling framework extends beyond the simplified settings considered in our analysis. We consider a population of neural networks that learn collectively to control a  Cart-Pole system through the PBT algorithm. Training is carried out using the Adam optimiser, while the selection-mutation updates are performed via rank-based selection, in which the worst-performing neural networks are replaced with copies of the best-performing ones, see Section \ref{sec:extensions} for more details. We investigate the emergence of a two-time-scale separation by considering different numbers of inner training steps and by approximating the effective fitness through time averaging, as in \eqref{eq:timefitness}.


Controlling a Cart-Pole system is one of the classic benchmark problems in control theory and reinforcement learning \citep{barto1983Neuronlike,sutton1998Reinforcement}. In this environment, a pole is attached to a cart via an unactuated joint, and the cart can move along a frictionless track. The pole starts in an upright position, and the objective is to keep it balanced for as long as possible by applying forces to the cart either to the left or to the right.

The state at time steps $t = 0,1,2,\dots$ consists of four components:
\begin{equation*}
    s_t=(x_t, \dot{x}_t,\phi_t, \dot{\phi}_t),
\end{equation*}
which are, respectively, the cart position, cart velocity, pole angle, and pole angular velocity. The action space contains two discrete actions, $a_t$: left ($a_t = 0$) and right ($a_t = 1$). The cart position $x$ lies within the interval $(-4.8, 4.8)$, and an episode terminates early if the cart leaves the range $(-2.4, 2.4)$. Similarly, the pole angle $\phi$ is observable within $(-0.418, 0.418)$ radians, and the episode ends if $\phi \notin (-0.2095, 0.2095)$.

By default, the agent receives a reward $r_t=1$ at every time step, including the termination step, reflecting the objective of maximizing the duration for which the pole remains upright. An episode concludes either prematurely, when a failure condition is met (the pole falls or the cart goes out of bounds), or successfully, when the cumulative reward reaches a predefined maximum, which can be adjusted.

To investigate the two-time-scale dynamics of neural network training, we implement a Deep Q-Network (DQN) \citep{mnih2015Humanlevel}. The network architecture consists of an input layer with four state nodes, two hidden layers with 64 units each, and an output layer with two action nodes. Here, we use Adam \citep{kingma2014adam} for training, and PBT is employed to optimize three hyperparameters: the learning rate of the Adam optimiser, the greedy policy decay rate, and the experience replay batch size. To select the best agents to be copied, we use a rank-based strategy, also known as truncation selection (see Section~\ref{sec:extensions}). Agents are assigned a rank according to their fitness, and the top $20\%$ of the population is copied to replace the bottom $20\%$. Further algorithmic and hyperparameter details are provided in Appendix~\ref{app:numerics}.

\subsubsection{Results}
Our modelling analysis suggested that, in the presence of  a two-time-scale separation, the slow hyperparameter dynamics attempts to optimise a fitness that is averaged with respect to the parameter distribution. In this experiment, the fast time scale corresponds to the environment interactions (RL training steps), while the slow time scale corresponds to the evolutionary hyperparameter updates. Since sampling from equilibrium is not possible, we instead perform a time averaging of the fitness, which is here the episodic rewards.

We define a generation as a single PBT update interval consisting of $300, 500$, or $1000$ training steps. The maximum episodic reward is capped at $100$, so an agent completes at least $3,5$, and $10$ full episodes before evaluation, respectively. During the slow-scale update, an agent's fitness score is calculated as the moving average of its last $m$ episodic rewards. Figure~\ref{fig:dqn-tt} illustrates that increasing the moving-average window size $m$ effectively smooths the fast-scale performance noise. 

Figure~\ref{fig:dqn-h} shows that, after $30$ PBT updates, high-performing hyperparameter configurations concentrate in a localised region of the search space, supporting mean convergence toward the fittest solutions (Theorem~\ref{t:convergence}). The influence of the mutation noise $\sigma$ is also evident: larger values of $\sigma$ lead to a more dispersed distribution of hyperparameters across the population.


\begin{figure}[t]
    \centering
    \begin{subfigure}{0.32\textwidth}
        \centering
        \includegraphics[width=\textwidth]{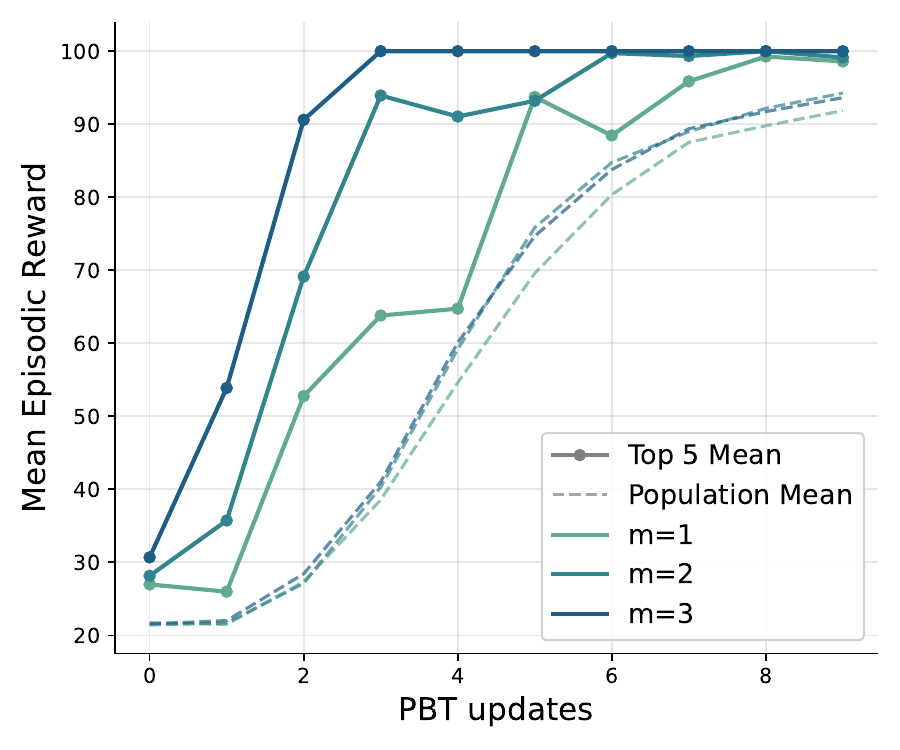}
        \caption{300 training steps}
    \end{subfigure}
    \hfill 
    \begin{subfigure}{0.32\textwidth}
    \centering
    \includegraphics[width=\textwidth]{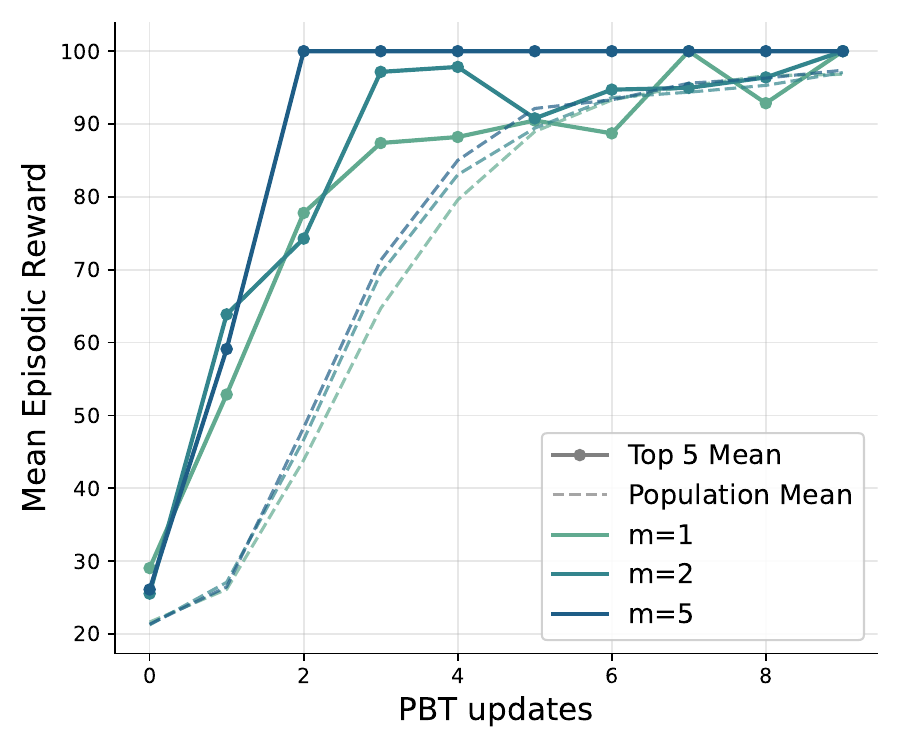}
    \caption{500 training steps}
    \end{subfigure}
    \hfill
    \begin{subfigure}{0.32\textwidth}
    \centering
    \includegraphics[width=\textwidth]{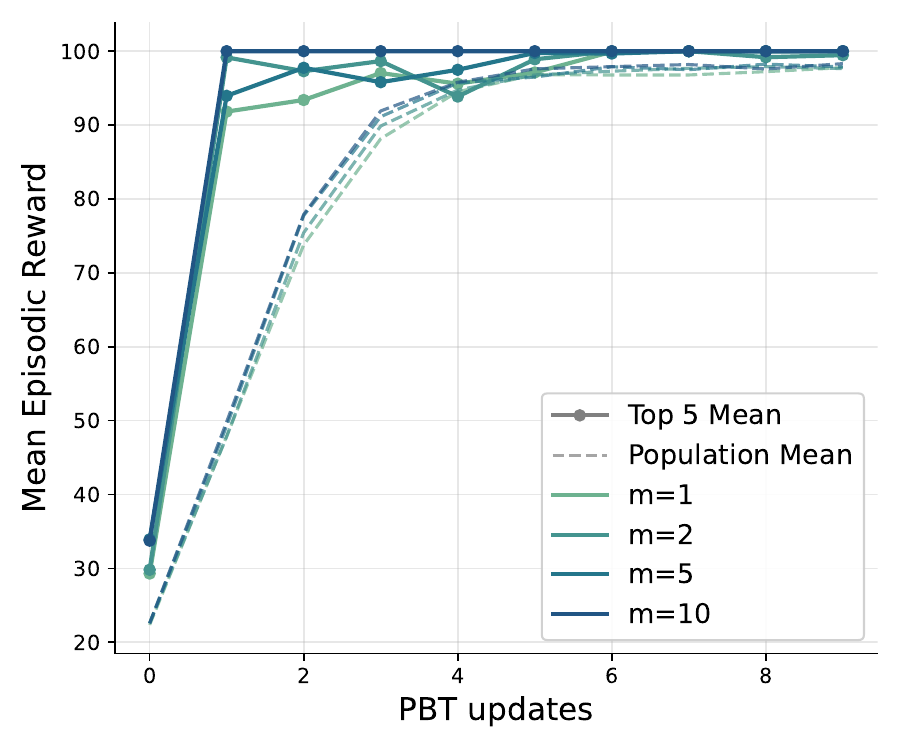}
    \caption{1000 training steps}  
    \end{subfigure}

    \caption{Cart–Pole control: two-time-scale dynamics. Mean episodic reward of top five agents (solid line with markers) and entire population (dashed line) across generations. We solve the RL problem with $N= 500$ NN agents via Population-Based Training evolution. 
    The parameter averaging window size indicates the number of recent episodes ($m$) averaged to compute the agent's fitness score. Larger window sizes yield more stable evaluations, reducing fast-scale noise and leading to faster convergence.}
    \label{fig:dqn-tt}
\end{figure}

\begin{figure}[t]
    \centering
    \begin{subfigure}{\textwidth}
    \centering
    \includegraphics[width=\textwidth]{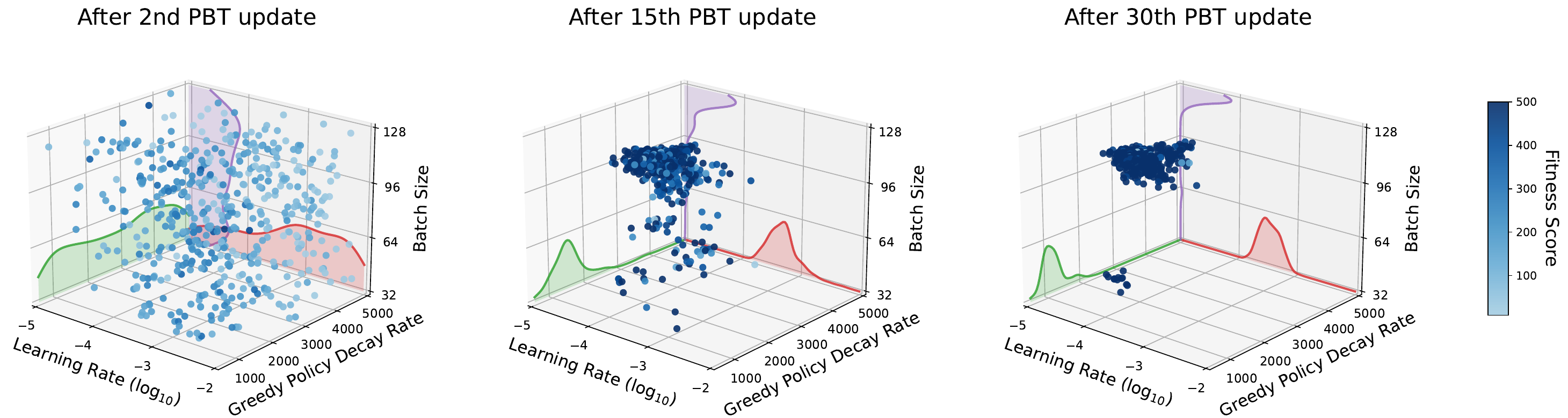}
    \caption{Low mutation noise, $\sigma=0.05$}
    \end{subfigure}
    \medskip
    
    \begin{subfigure}{\textwidth}
    \centering
    \includegraphics[width=\textwidth]{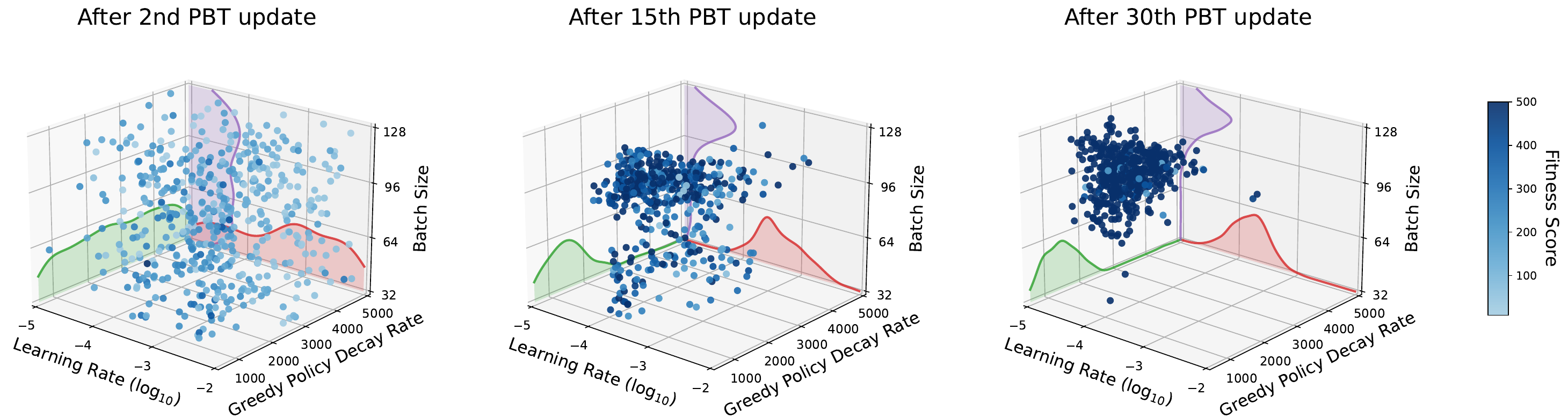}
    \caption{High mutation noise, $\sigma=0.1$}
    \end{subfigure}
    
    \caption{Cart--Pole control: hyperparameter distribution.
    For two different values of the mutation noise $\sigma$, we show the distribution of hyperparameters across $N=500$ agents, using $1000$ training steps and an averaging window of size $m=2$. The maximum episodic reward is $500$. The marginal distributions in each panel are estimated via Gaussian kernel density estimation.
    }
    \label{fig:dqn-h}
\end{figure}

\section{Conclusion and outlook} \label{sec:outlook}

The results of this work suggest a different way to understand, and possibly design, population-based neural training. By placing heuristic population methods within a continuous-time large-population framework, we move from viewing training as a collection of algorithmic rules toward a description in which learning is shaped by collective statistical dynamics. In the strong separation regime, the joint  evolution reduces to a closed selection--mutation equation for the hyperparameter density $\rho(t,h)$ driven by the Gibbs-averaged fitness $\overline{\F}(h)$. In this setting, population-level quantities, rather than individual optimisation trajectories, become the natural objects of analysis, comparison, and control.

A key consequence of this perspective is the emergence of an effective fitness as a macroscopic, statistically averaged quantity. In the asymptotic regime considered here, performance is no longer tied to the outcome of a single training run, but to properties of an equilibrium distribution induced by noisy learning dynamics. This suggests the possibility of surrogate training strategies in which explicit optimisation of individual networks is partially replaced by analytically or numerically accessible population-level criteria. While such ideas remain to be explored beyond the idealised regimes considered here, they point toward a principled route to reducing computational cost, improving stability, and increasing interpretability in complex training pipelines.

The framework developed here makes it possible to study population-based learning as a dynamical system, using questions familiar from kinetic theory and statistical physics, including stability, concentration, phase transitions, and the role of noise and interaction in shaping macroscopic behaviour. It also provides a natural basis for the systematic design of meta-learning and hyperparameter optimisation protocols, including optimal control formulations on the reduced dynamics.

Extending this approach to heterogeneous populations, structured interactions, crossover or model-merging mechanisms, finite-population effects, and non-equilibrium training scenarios is a natural direction for future research, with potential implications for distributed, federated, and large-scale ensemble learning.

The replicator–mutator structure of the averaged dynamics suggests that the effectiveness of PBT is not entirely heuristic: the convergence guarantee to the optimal hyperparameters configuration provides a principled basis for the selection pressure and mutation noise design choices made in practice.
Developing these ideas into a quantitative theory may help make population-based training a more systematic and interpretable component of modern machine learning.

\subsection*{Aknowledgments}

The work of Lorenzo Pareschi and Giacomo Borghi was supported by the Royal Society under the Wolfson Fellowship “Uncertainty quantification, data-driven simulations and learning of multiscale complex systems governed by PDEs". Lorenzo Pareschi also acknowledges the partial support of the FIS2023-01334 Advanced Grant “Tackling complexity: advanced numerical approaches for
multiscale systems with uncertainties" (ADAMUS).

\bibliography{ref}

@inproceedings{li2019generalized,
author = {Li, Ang and Spyra, Ola and Perel, Sagi and Dalibard, Valentin and Jaderberg, Max and Gu, Chenjie and Budden, David and Harley, Tim and Gupta, Pramod},
title = {A Generalized Framework for Population Based Training},
year = {2019},
isbn = {9781450362016},
publisher = {Association for Computing Machinery},
address = {New York, NY, USA},
url = {https://doi.org/10.1145/3292500.3330649},
doi = {10.1145/3292500.3330649},
booktitle = {Proceedings of the 25th ACM SIGKDD International Conference on Knowledge Discovery \& Data Mining},
pages = {1791–1799},
numpages = {9},
location = {Anchorage, AK, USA},
series = {KDD '19}
}

@article{lu2023birth,
  title={Birth--death dynamics for sampling: global convergence, approximations and their asymptotics},
  author={Lu, Yulong and Slep{\v{c}}ev, Dejan and Wang, Lihan},
  journal={Nonlinearity},
  volume={36},
  number={11},
  pages={5731--5772},
  year={2023},
  publisher={IOP Publishing}
}

@article{gallouet2019unbalanced,
  title={An unbalanced optimal transport splitting scheme for general advection-reaction-diffusion problems},
  author={Gallou{\"e}t, Thomas and Laborde, Maxime and Monsaingeon, Leonard},
  journal={ESAIM: Control, Optimisation and Calculus of Variations},
  volume={25},
  pages={8},
  year={2019},
  publisher={EDP Sciences}
}

@incollection{bobylev2020chapter7,
  author    = {Bobylev, Alexander V.},
  title     = {Boltzmann Equation and Hydrodynamics beyond {N}avier---{S}tokes},
  booktitle = {Boltzmann Equation, {M}axwell Models, and Hydrodynamics beyond {N}avier--{S}tokes},
  chapter   = {7},
  pages     = {195--234},
  publisher = {De Gruyter},
  address   = {Berlin and Boston},
  year      = {2020},
  doi       = {10.1515/9783110550986-008},
  url       = {https://doi.org/10.1515/9783110550986-008},
  volume = {1},
  isbn      = {9783110550986},
  urldate   = {2026-03-01}
}

@inproceedings{delmoral2002stability,
  title={On the stability of nonlinear {F}eynman-{K}ac semigroups},
  author={Del Moral, Pierre and Miclo, Laurent},
  booktitle={Annales de la Facult{\'e} des sciences de Toulouse: Math{\'e}matiques},
  volume={11},
  number={2},
  pages={135--175},
  year={2002}
}

@article{cerf1998ga, title={Asymptotic convergence of genetic algorithms}, volume={30}, DOI={10.1239/aap/1035228082}, number={2}, journal={Advances in Applied Probability}, author={Cerf, Raphaël}, year={1998}, pages={521–550}}

@Inbook{pavliotis2014chapter4,
author="Pavliotis, Grigorios A.",
title="The {F}okker--{P}lanck Equation",
bookTitle="Stochastic Processes and Applications: Diffusion Processes, the {F}okker--{P}lanck and {L}angevin Equations",
year="2014",
publisher="Springer New York",
address="New York, NY",
pages="87--137",
isbn="978-1-4939-1323-7",
doi="10.1007/978-1-4939-1323-7_4",
url="https://doi.org/10.1007/978-1-4939-1323-7_4"
}

@article{liero2018optimal, 
  title={Optimal entropy-transport problems and a new Hellinger--Kantorovich distance between positive measures},
  author={Liero, Matthias and Mielke, Alexander and Savar{\'e}, Giuseppe},
  journal={Inventiones mathematicae},
  volume={211},
  number={3},
  pages={969--1117},
  year={2018},
  publisher={Springer}
}

@book{bogachev2022fokker,
  title={{F}okker--{P}lanck--{K}olmogorov Equations},
  author={Bogachev, Vladimir I and Krylov, Nicolai V and R{\"o}ckner, Michael and Shaposhnikov, Stanislav V},
  volume={207},
  year={2022},
  publisher={American Mathematical Society}
}

@article{demircigil2025convergence,
  title={Convergence and Wave Propagation for a System of Branching Rank-Based Interacting {B}rownian Particles},
  author={Demircigil, Mete and Tomasevic, Milica},
  journal={arXiv preprint arXiv:2505.08563},
  year={2025}
}

@article{bench2022weak,
author = {Oumaima Bencheikh and Benjamin Jourdain},
title = {Weak and strong error analysis for mean-field rank-based particle approximations of one-dimensional viscous scalar conservation laws},
volume = {32},
journal = {The Annals of Applied Probability},
number = {6},
publisher = {Institute of Mathematical Statistics},
pages = {4143 -- 4185},
year = {2022},
doi = {10.1214/21-AAP1776},
URL = {https://doi.org/10.1214/21-AAP1776}
}

@article{barto1983Neuronlike,
  title = {Neuronlike Adaptive Elements That Can Solve Difficult Learning Control Problems},
  author = {Barto, Andrew G. and Sutton, Richard S. and Anderson, Charles W.},
  year = {1983},
  journaltitle = {IEEE Transactions on Systems, Man, and Cybernetics},
  shortjournal = {IEEE Trans. Syst., Man, Cybern.},
  volume = {SMC-13},
  number = {5},
  pages = {834--846},
  issn = {0018-9472, 2168-2909},
  doi = {10.1109/TSMC.1983.6313077},
  url = {http://ieeexplore.ieee.org/document/6313077/},
  urldate = {2025-12-09}
}

@article{LuTadmorZenginoglu2024Swarm,
  title   = {Swarm-Based Gradient Descent Method for Non-Convex Optimization},
  author  = {Lu, Jingcheng and Tadmor, Eitan and Zenginoğlu, Anil},
  journal = {Communications of the American Mathematical Society},
  volume  = {4},
  year    = {2024},
  pages   = {159--195},
  doi     = {10.1090/cams/30}
}

@book {villani2009,
    AUTHOR = {Villani, C\'edric},
     TITLE = {Optimal transport},
    SERIES = {Grundlehren der mathematischen Wissenschaften [Fundamental
              Principles of Mathematical Sciences]},
    VOLUME = {338},
      NOTE = {Old and new},
 PUBLISHER = {Springer-Verlag, Berlin},
      YEAR = {2009},
     PAGES = {xxii+973},
      ISBN = {978-3-540-71049-3},
   MRCLASS = {49-02 (28A75 37J50 49Q20 53C23 58E30)},
  MRNUMBER = {2459454},
MRREVIEWER = {Dario\ Cordero-Erausquin},
       DOI = {10.1007/978-3-540-71050-9},
}

@article{Mignacco2025,
  title          = {A statistical physics framework for optimal learning},
  author         = {Francesca Mignacco and Francesco Mori},
  journal        = {arXiv:2507.07907},
  year           = {2025},
}

@article{mnih2015Humanlevel,
  title = {Human-Level Control through Deep Reinforcement Learning},
  author = {Mnih, Volodymyr and Kavukcuoglu, Koray and Silver, David and Rusu, Andrei A. and Veness, Joel and Bellemare, Marc G. and Graves, Alex and Riedmiller, Martin and Fidjeland, Andreas K. and Ostrovski, Georg and Petersen, Stig and Beattie, Charles and Sadik, Amir and Antonoglou, Ioannis and King, Helen and Kumaran, Dharshan and Wierstra, Daan and Legg, Shane and Hassabis, Demis},
  year = {2015},
  journaltitle = {Nature},
  shortjournal = {Nature},
  volume = {518},
  number = {7540},
  pages = {529--533},
  issn = {0028-0836, 1476-4687},
  doi = {10.1038/nature14236},
  url = {https://www.nature.com/articles/nature14236},
  urldate = {2025-12-09},
  langid = {english}
}

@article{cui2018evolutionary,
  title={Evolutionary stochastic gradient descent for optimization of deep neural networks},
  author={Cui, Xiaodong and Zhang, Wei and T{\"u}ske, Zolt{\'a}n and Picheny, Michael},
  journal={Advances in neural information processing systems},
  volume={31},
  year={2018}
}

@article{watkins1992Qlearning,
  title = {Q-Learning},
  author = {Watkins, Christopher J. C. H. and Dayan, Peter},
  date = {1992-05},
  journaltitle = {Machine Learning},
  shortjournal = {Mach Learn},
  volume = {8},
  number = {3--4},
  pages = {279--292},
  issn = {0885-6125, 1573-0565},
  doi = {10.1007/BF00992698},
  url = {http://link.springer.com/10.1007/BF00992698},
  urldate = {2025-11-18},
  langid = {english}
}

@book{sutton1998Reinforcement,
  title = {Reinforcement Learning: An Introduction},
  shorttitle = {Reinforcement Learning},
  author = {Sutton, Richard S. and Barto, Andrew G.},
  series = {Adaptive Computation and Machine Learning},
  publisher = {MIT Press},
  location = {Cambridge, Mass},
  isbn = {978-0-262-19398-6},
  langid = {english},
  pagetotal = {322},
  year = {1998}
}

@article{jaderberg2017Population,
  title={Population Based Training of Neural Networks},
  author={Jaderberg, Max and Dalibard, Valentin and Osindero, Simon and Czarnecki, Wojciech M and Donahue, Jeff and Razavi, Ali and Vinyals, Oriol and Green, Tim and Dunning, Iain and Simonyan, Karen and Fernando, Chrisantha and Kavukcuoglu, Koray},
  journal={Preprint arXiv:1711.09846},
  year={2017}
}

@article{mandt2017sgd,
  title={Stochastic gradient descent as approximate Bayesian inference},
  author={Mandt, Stephan and Hoffman, Matthew D and Blei, David M},
  journal={Journal of Machine Learning Research},
  volume={18},
  number={134},
  pages={1--35},
  year={2017}
}

@article{li2019stochastic,
  title={Stochastic Modified Equations and Dynamics of Stochastic Gradient Algorithms I: Mathematical Foundations},
  author={Li, Qianxiao and Tai, Kaiqi and E, Weinan},
  journal={Journal of Machine Learning Research},
  volume={20},
  number={40},
  pages={1--47},
  year={2019}
}

@article{jumper2021highly,
  title   = {Highly accurate protein structure prediction with {AlphaFold}},
  author  = {Jumper, John and Evans, Richard and Pritzel, Alexander and Green, Tim and Figurnov, Michael and Ronneberger, Olaf and Tunyasuvunakool, Kathryn and Bates, Russ and {\v{Z}}{\'\i}dek, Augustin and Potapenko, Anna and others},
  journal = {Nature},
  volume  = {596},
  number  = {7873},
  pages   = {583--589},
  year    = {2021},
  publisher = {Nature Publishing Group}
}

@article{ottobre2011asymptotic,
  title={Asymptotic analysis for the generalized {L}angevin equation},
  author={Ottobre, Michela and Pavliotis, Grigorios A},
  journal={Nonlinearity},
  volume={24},
  number={5},
  pages={1629},
  year={2011},
  publisher={IOP Publishing}
}

@inproceedings{chaudhari2018sgd,
  title={Stochastic Gradient Descent Performs Variational Inference, Converges to Limit Cycles and Fixed Points},
  author={Chaudhari, Pratik and Soatto, Stefano},
  booktitle={Advances in Neural Information Processing Systems},
  volume={31},
  year={2018}
}

@article{alfaro2019evolutionary,
  title={Evolutionary branching via replicator--mutator equations},
  author={Alfaro, Matthieu and Veruete, Mario},
  journal={Journal of Dynamics and Differential Equations},
  volume={31},
  number={4},
  pages={2029--2052},
  year={2019},
  publisher={Springer}
}

@article{hasenpflug2024wasserstein,
  title={Wasserstein convergence rates of increasingly concentrating probability measures},
  author={Hasenpflug, Mareike and Rudolf, Daniel and Sprungk, Bj{\"o}rn},
  journal={The Annals of Applied Probability},
  volume={34},
  number={3},
  pages={3320--3347},
  year={2024},
  publisher={Institute of Mathematical Statistics}
}

@inproceedings{wright2023fully,
  title={A fully first-order method for stochastic bilevel optimization},
  author={Kwon, Jeongyeol and Kwon, Dohyun and Wright, Stephen and Nowak, Robert D},
  booktitle={International Conference on Machine Learning},
  pages={18083--18113},
  year={2023},
  organization={PMLR}
}

@incollection{sznitman1991topics,
	Author = {Sznitman, Alain-Sol},
	Booktitle = {Ecole d'\'{e}t\'{e} de probabilit\'{e}s de {Saint-Flour} {XIX}---1989},
	Pages = {165--251},
	Publisher = {Springer},
	Title = {Topics in propagation of chaos},
	Year = {1991}}

@book{delmoral2004book,
  title={Feynman-Kac formulae: genealogical and interacting particle systems with applications},
  author={Del Moral, Pierre},
  year={2004},
  publisher={Springer},
doi = {10.1007/978-1-4684-9393-1}
}

@article{ackleh2016population,
title = {Population dynamics under selection and mutation: Long-time behavior for differential equations in measure spaces},
journal = {Journal of Differential Equations},
volume = {261},
number = {2},
pages = {1472-1505},
year = {2016},
issn = {0022-0396},
doi = {https://doi.org/10.1016/j.jde.2016.04.008},
url = {https://www.sciencedirect.com/science/article/pii/S002203961630033X},
author = {Azmy S. Ackleh and John Cleveland and Horst R. Thieme}
}

@book{dembo2010,
	Author = {Dembo, Amir and Zeitouni, Ofer},
	Publisher = {Springer-Verlag Berlin Heidelberg},
	Title = {Large Deviations Techniques and Applications},
	Volumes = {38},
	Year = {2010}}

@inproceedings{bergstra2011Algorithms,
  title = {Algorithms for Hyper-Parameter Optimization},
  booktitle = {Advances in Neural Information Processing Systems},
  author = {Bergstra, James and Bardenet, Rémi and Bengio, Yoshua and Kégl, Balázs},
  editor = {Shawe-Taylor, J. and Zemel, R. and Bartlett, P. and Pereira, F. and Weinberger, K.Q.},
  year = {2011},
  volume = {24},
  publisher = {Curran Associates, Inc.},
  url = {https://proceedings.neurips.cc/paper_files/paper/2011/file/86e8f7ab32cfd12577bc2619bc635690-Paper.pdf}
}

@book{perthame2007transport,
  title={Transport equations in biology},
  author={Perthame, Beno{\^\i}t},
  year={2007},
  publisher={Springer}
}

@article{borghi2025kinetic,
  title = {Kinetic Description and Convergence Analysis of Genetic Algorithms for Global Optimization},
  author = {Borghi, Giacomo and Pareschi, Lorenzo},
  year = {2025},
  journal = {Communications in Mathematical Sciences},
  volume = {23},
  number = {3},
  pages = {641--668},
  issn = {15396746, 19450796},
  doi = {10.4310/CMS.250208214404},
  url = {https://link.intlpress.com/JDetail/1888222908206624770},
  urldate = {2025-05-08},
  langid = {english}
}

@article{fornasier2021cbo,
  author    = {Fornasier, Massimo and Huang, Hui and Pareschi, Lorenzo and S{\"u}nnen, Philippe},
  title     = {Consensus-based Optimization on the Sphere: Convergence to Global Minimizers and Machine Learning},
  journal   = {Journal of Machine Learning Research},
  volume    = {22},
  number    = {237},
  pages     = {1--55},
  year      = {2021}
}

@article{fornasier2024consensus,
  title={Consensus-based optimization methods converge globally},
  author={Fornasier, Massimo and Klock, Timo and Riedl, Konstantin},
  journal={SIAM Journal on Optimization},
  volume={34},
  number={3},
  pages={2973--3004},
  year={2024},
  publisher={SIAM}
}

@article{carrillo2024fedcbo,
  title={Fed{CBO}: {R}eaching group consensus in clustered federated learning through consensus-based optimization},
  author={Carrillo, Jos{\'e} A and Trillos, Nicolas Garcia and Li, Shiyi and Zhu, Yifei},
  journal={Journal of Machine Learning Research},
  volume={25},
  number={214},
  pages={1--51},
  year={2024}
}

@article{borghi2024unified,
      title={Kinetic models for optimization: a unified mathematical framework for metaheuristics}, 
      author={Giacomo Borghi and Michael Herty and Lorenzo Pareschi},
      year={2024},
      eprint={2410.10369},
      journal={Preprint arXiv:2410.10369},
      primaryClass={math.OC},
      url={https://arxiv.org/abs/2410.10369}, 
}

@article{trillos2024CB2O,
  title={{CB}$^2${O}: {{Consensus-Based Bi-Level Optimization}}},
  author={Trillos, Nicol{\'a}s and Li, Sixu and Riedl, Konstantin and Zhu, Yuhua},
  journal={Preprint arXiv:2411.13394},
  year={2024},
  url={https://arxiv.org/abs/2411.13394}
}

@inproceedings{franceschi2018Bilevel,
  title={Bilevel Programming for Hyperparameter Optimization and Meta-Learning},
  author={Franceschi, Luca and Frasconi, Paolo and Salzo, Saverio and Grazzi, Riccardo and Pontil, Massimiliano},
  booktitle={Proceedings of the 35th International Conference on Machine Learning},
  pages={1568--1577},
  year={2018},
  editor={Dy, Jennifer and Krause, Andreas},
  volume={80},
  series={Proceedings of Machine Learning Research},
  publisher={PMLR}
}

@article{bergstra2012Random,
  title = {Random Search for Hyper-Parameter Optimization},
  author = {Bergstra, James and Bengio, Yoshua},
  year = 2012,
  journal = {Journal of Machine Learning Research},
  volume = {13},
  number = {10},
  pages = {281--305}
}

@inproceedings{
li2022what,
title={What Happens after {SGD} Reaches Zero Loss? --A Mathematical Framework},
author={Zhiyuan Li and Tianhao Wang and Sanjeev Arora},
booktitle={International Conference on Learning Representations},
year={2022},
url={https://openreview.net/forum?id=siCt4xZn5Ve}
}

@article{achleitner2015FP,
  author  = {Achleitner, Franz and Arnold, Anton and St{\"u}rzer, Dominik},
  title   = {Large-time behavior in non-symmetric {F}okker--{P}lanck equations},
  journal = {Rivista di Matematica dell'Universit{\`a} di Parma},
  volume  = {6},
  number  = {1},
  year    = {2015},
  pages   = {1--68},
}

@article{khalid2013selection,
author = {Jebari, Khalid},
year = {2013},
month = {12},
pages = {333-344},
title = {Selection Methods for Genetic Algorithms},
volume = {3},
journal = {International Journal of Emerging Sciences}
}

@article{malladi2022sdes,
  title={On the {SDE}s and scaling rules for adaptive gradient algorithms},
  author={Malladi, Sadhika and Lyu, Kaifeng and Panigrahi, Abhishek and Arora, Sanjeev},
  journal={Advances in Neural Information Processing Systems},
  volume={35},
  pages={7697--7711},
  year={2022}
}

@inproceedings{cheng2020stochastic,
  title={Stochastic gradient and {L}angevin processes},
  author={Cheng, Xiang and Yin, Dong and Bartlett, Peter and Jordan, Michael},
  booktitle={International Conference on Machine Learning},
  pages={1810--1819},
  year={2020},
  organization={PMLR}
}

@article{li2021validity,
  title={On the validity of modeling {SGD} with stochastic differential equations ({SDE}s)},
  author={Li, Zhiyuan and Malladi, Sadhika and Arora, Sanjeev},
  journal={Advances in Neural Information Processing Systems},
  volume={34},
  pages={12712--12725},
  year={2021}
}

@article{mei2018mean,
  title={A mean field view of the landscape of two-layer neural networks},
  author={Mei, Song and Montanari, Andrea and Nguyen, Phan-Minh},
  journal={Proceedings of the National Academy of Sciences},
  volume={115},
  number={33},
  pages={E7665--E7671},
  year={2018},
  publisher={National Academy of Sciences}
}

@article{sirignano2020mean,
  title={Mean field analysis of neural networks: A central limit theorem},
  author={Sirignano, Justin and Spiliopoulos, Konstantinos},
  journal={Stochastic Processes and their Applications},
  volume={130},
  number={3},
  pages={1820--1852},
  year={2020},
  publisher={Elsevier}
}

@misc{suzuki2023adam,
      title={Adam-like Algorithm with Smooth Clipping Attains Global Minima: Analysis Based on Ergodicity of Functional SDEs}, 
      author={Keisuke Suzuki},
      year={2023},
      eprint={2312.02182},
      archivePrefix={arXiv},
      primaryClass={cs.LG},
      url={https://arxiv.org/abs/2312.02182}, 
}

@article{zito2025Metaheuristics,
  title = {Metaheuristics in Automated Machine Learning: {{Strategies}} for Optimization},
  shorttitle = {Metaheuristics in Automated Machine Learning},
  author = {Zito, Francesco and Talbi, El-Ghazali and Cavallaro, Claudia and Cutello, Vincenzo and Pavone, Mario},
  year = 2025,
  month = jun,
  journal = {Intelligent Systems with Applications},
  volume = {26},
  pages = {200532},
  issn = {26673053},
  doi = {10.1016/j.iswa.2025.200532},
  urldate = {2025-11-07},
  langid = {english}
}

@book{duong2020mean,
  title={Mean Field Type Control Theory},
  author={Duong, Minh H and Peletier, Mark A and Zimmer, Johannes},
  publisher={Springer},
  year={2020}
}

@article{bouteiller2026sociodynamics,
title={Sociodynamics of Reinforcement Learning},
author={Yann Bouteiller and Karthik Soma and Giovanni Beltrame},
journal={Transactions on Machine Learning Research},
issn={2835-8856},
year={2026},
url={https://openreview.net/forum?id=Ro6Ylnx8se}
}

@article{borghi2026chaos,
author = {Borghi, Giacomo},
title = {Chaos propagation in genetic algorithms: {A}n optimal transport approach},
journal = {Bulletin of the London Mathematical Society},
volume = {58},
number = {3},
pages = {e70333},
doi = {10.1112/blms.70333},
url = {https://londmathsoc.onlinelibrary.wiley.com/doi/abs/10.1112/blms.70333},
eprint = {https://londmathsoc.onlinelibrary.wiley.com/doi/pdf/10.1112/blms.70333},
year = {2026}
}

@book{PareschiToscani2013,
  author    = {Pareschi, Lorenzo and Toscani, Giuseppe},
  title     = {Interacting Multiagent Systems: Kinetic Equations 
               and Monte Carlo Methods},
  publisher = {Oxford University Press},
  address   = {Oxford},
  year      = {2013},
  isbn      = {978-0-19-966551-2}
}

@article{fournier2015rate,
	Author = {Fournier, Nicolas and Guillin, Arnaud},
	Journal = {Probability Theory and Related Fields},
	Number = {3-4},
	Pages = {707--738},
	Publisher = {Springer},
	Title = {On the rate of convergence in {W}asserstein distance of the empirical measure},
	Volume = {162},
	Year = {2015}}

@article{piccoli2016properties,
  title={On properties of the generalized {W}asserstein distance},
  author={Piccoli, Benedetto and Rossi, Francesco},
  journal={Archive for Rational Mechanics and Analysis},
  volume={222},
  number={3},
  pages={1339--1365},
  year={2016},
  publisher={Springer}
}

@articleInfo{diez2022chaos2,
title = {Propagation of chaos: A review of models, methods and applications. {II}. Applications},
journal = {Kinetic and Related Models},
volume = {15},
number = {6},
pages = {1017-1173},
year = {2022},
issn = {1937-5093},
doi = {10.3934/krm.2022018},
url = {https://www.aimsciences.org/article/id/62948a7a2d80b70dfd2582aa},
author = {Louis-Pierre Chaintron and Antoine Diez}
}

@article{haasdijk2018pressure,
  title={Quantifying selection pressure},
  author={Haasdijk, Evert and Heinerman, Jacqueline},
  journal={Evolutionary computation},
  volume={26},
  number={2},
  pages={213--235},
  year={2018},
  publisher={MIT Press}
}

@articleInfo{diez2022chaos1,
title = {Propagation of chaos: A review of models, methods and applications. {I}. Models and methods},
journal = {Kinetic and Related Models},
volume = {15},
number = {6},
pages = {895-1015},
year = {2022},
issn = {1937-5093},
doi = {10.3934/krm.2022017},
url = {https://www.aimsciences.org/article/id/631fd3b64cedfd0007ce7600},
author = {Louis-Pierre Chaintron and Antoine Diez}
}

@book{mao2011sde,
  title     = {Stochastic Differential Equations and Applications},
  author    = {Mao, Xuerong},
  year      = {2011},
  edition   = {Second Edition},
  publisher = {Woodhead Publishing},
  isbn      = {978-1-904275-34-3},
  doi       = {10.1533/9780857099402},
  url       = {https://www.sciencedirect.com/book/9781904275343/stochastic-differential-equations-and-applications}
}

@article{hairer2008spectral,
author = {Martin Hairer and Jonathan C. Mattingly},
title = {{Spectral gaps in {W}asserstein distances and the {2D} stochastic {N}avier–{S}tokes equations}},
volume = {36},
journal = {The Annals of Probability},
number = {6},
publisher = {Institute of Mathematical Statistics},
pages = {2050 -- 2091},
year = {2008},
doi = {10.1214/08-AOP392},
URL = {https://doi.org/10.1214/08-AOP392}
}

@article{salimans2017Evolution,
  title={Evolution Strategies as a Scalable Alternative to Reinforcement Learning},
  author={Salimans, Tim and Ho, Jonathan and Chen, Xi and Sutskever, Ilya},
  journal={Preprint arXiv:1703.03864},
  year={2017}
}

@incollection{hutter2011Sequential,
  title = {Sequential {{Model-Based Optimization}} for {{General Algorithm Configuration}}},
  booktitle = {Learning and {{Intelligent Optimization}}},
  author = {Hutter, Frank and Hoos, Holger H. and Leyton-Brown, Kevin},
  editor = {Coello, Carlos A. Coello},
  year = {2011},
  volume = {6683},
  pages = {507--523},
  publisher = {Springer Berlin Heidelberg},
  location = {Berlin, Heidelberg},
  doi = {10.1007/978-3-642-25566-3_40},
  url = {http://link.springer.com/10.1007/978-3-642-25566-3_40},
  urldate = {2025-12-01},
  isbn = {978-3-642-25565-6 978-3-642-25566-3}
}

@article{kingma2014adam,
  title={Adam: {A} method for stochastic optimization},
  author={Kingma, Diederik P and Ba, Jimmy},
  journal={arXiv preprint arXiv:1412.6980},
  year={2014}
}

@article{akiba2025evolutionary,
  title   = {Evolutionary Optimization of Model Merging Recipes},
  author  = {Akiba, Takuya and Shing, Miqing and Tang, Yanping and Sun, Qi and Ha, David},
  journal = {Nature Machine Intelligence},
  volume  = {7},
  year    = {2025},
pages={195–-204},
}

@article{Chizat2022,
  author  = {Chizat, Lenaic},
  title   = {Mean-field {L}angevin dynamics: Exponential convergence 
             and annealing},
  journal = {Transactions on Machine Learning Research},
  volume  = {8},
  year    = {2022},
  url     = {https://openreview.net/forum?id=0syRaGFIuO}
}

@article{carrillo2021cbo,
  title   = {A Consensus-Based Global Optimization Method for High-Dimensional Machine Learning Problems},
  author  = {Carrillo, José A. and Jin, Shi and Li, Lei and Zhu, Yuhua},
  journal = {ESAIM: Control, Optimisation and Calculus of Variations},
  volume  = {27},
  pages   = {S5},
  year    = {2021},
  doi     = {10.1051/cocv/2020046}
}

@article{Pareschi2024,
  author  = {Pareschi, Lorenzo},
  title   = {Optimization by linear kinetic equations and 
             mean-field {L}angevin dynamics},
  journal = {Mathematical Models and Methods in Applied Sciences},
  volume  = {34},
  number  = {12},
  pages   = {2191--2216},
  year    = {2024},
  doi     = {10.1142/S0218202524500428}
}

@article{benfenati2022binary,
  title   = {Binary Interaction Methods for High Dimensional Global Optimization and Machine Learning},
  author  = {Benfenati, Alessandro and Borghi, Giacomo and Pareschi, Lorenzo},
  journal = {Applied Mathematics \& Optimization},
  volume  = {86},
  number  = {9},
  pages   = {1--41},
  year    = {2022},
  doi     = {10.1007/s00245-022-09898-1},
  publisher = {Springer}
}

@inproceedings{wortsman2022model,
  title     = {Model Soups: Averaging Weights of Multiple Fine-Tuned Models Improves Accuracy Without Increasing Inference Time},
  author    = {Wortsman, Mitchell and Ilharco, Gabriel and Kim, Jong Wook and Li, Yi and Kornblith, Simon and Roelofs, Rebecca and Menon, Aditya K and Koumoutsakos, Petros and Farhadi, Ali and Yosinski, Jason and Carmon, Yair and Schmidt, Ludwig},
  booktitle = {Proceedings of the 39th International Conference on Machine Learning},
  pages     = {23965--23998},
  year      = {2022},
  editor    = {Chaudhuri, Kamalika and Jegelka, Stefanie and Song, Shuang and Banerjee, Arindam and Yilmaz, Yasin and Hsieh, Cho-Jui and Kumar, Aarti},
  volume    = {162},
  series    = {Proceedings of Machine Learning Research},
  publisher = {PMLR},
  url       = {https://proceedings.mlr.press/v162/wortsman22a.html}
}

@book{dudley2018real,
  title={Real analysis and probability},
  author={Dudley, Richard M},
  year={2018},
  publisher={Chapman and Hall/CRC}
}

@article{vinyals2019alphastar,
  title   = {Grandmaster level in {StarCraft II} using multi-agent reinforcement learning},
  author  = {Vinyals, Oriol and Babuschkin, Igor and Czarnecki, Wojciech M. and Mathieu, Micha{\"e}l and Georgiev, Petko and Oh, Junhyuk and Horgan, Dan and Kroiss, Manuel and Danihelka, Ivo and Dudzik, Andrew and Chung, Junyoung and Choi, David H. and Powell, Richard and Ewalds, Timo and Huang, Aja and Sifre, Laurent and Cai, Trevor and Agapiou, John P. and Jaderberg, Max and Vezhnevets, Alexander S. and Leblond, R{\'e}mi and Pohlen, Tobias and Dalibard, Valentin and Budden, David and Sulsky, Yury and Molloy, James and Paine, Tom L. and Gulcehre, Caglar and Wang, Ziyu and Pfaff, Tobias and Wu, Yuhuai and Ring, Roman and Yogatama, Dani and W{\"u}nsch, Dario and McKinney, Katrina and Smith, Oliver and Schaul, Tom and Lillicrap, Timothy and Kavukcuoglu, Koray and Hassabis, Demis and Apps, Chris and Silver, David},
  journal = {Nature},
  volume  = {575},
  number  = {7782},
  pages   = {350--354},
  year    = {2019},
  doi     = {10.1038/s41586-019-1724-z}
}

@book{hutter2019Automated,
  title = {Automated {{Machine Learning}}: {{Methods}}, {{Systems}}, {{Challenges}}},
  shorttitle = {Automated {{Machine Learning}}},
  editor = {Hutter, Frank and Kotthoff, Lars and Vanschoren, Joaquin},
  year = {2019},
  series = {The {{Springer Series}} on {{Challenges}} in {{Machine Learning}}},
  publisher = {Springer International Publishing},
  location = {Cham},
  doi = {10.1007/978-3-030-05318-5},
  url = {http://link.springer.com/10.1007/978-3-030-05318-5},
  urldate = {2025-11-12},
  isbn = {978-3-030-05317-8 978-3-030-05318-5},
  langid = {english}
}

@article{li2018Hyperband,
  title   = {Hyperband: A Novel Bandit-Based Approach to Hyperparameter Optimization},
  author  = {Li, Lisha and Jamieson, Kevin and DeSalvo, Giulia and Rostamizadeh, Afshin and Talwalkar, Ameet},
  journal = {Journal of Machine Learning Research},
  volume  = {18},
  number  = {185},
  pages   = {1--52},
  year    = {2018}
}

@article{archambeau2024Hyperparameter,
  title = {Hyperparameter {{Optimization}} in {{Machine Learning}}},
  author = {Luca Franceschi and Michele Donini and Valerio Perrone and Aaron Klein and Cédric Archambeau and Matthias Seeger and Massimiliano Pontil and Paolo Frasconi},
  year = {2025},
journal={Foundations and Trends in Machine Learning},
volume={18},
number={6}, 
pages={1054-1201},
  langid = {english}
}

@article{dalibard2021Faster,
  title = {Faster {{Improvement Rate Population Based Training}}},
  author = {Dalibard, Valentin and Jaderberg, Max},
  date = {2021-09-28},
  eprint = {2109.13800},
  journal = {Preprint arXiv:2109.13800},
  eprintclass = {cs},
  doi = {10.48550/arXiv.2109.13800},
  url = {http://arxiv.org/abs/2109.13800},
  year = {2021},
  pubstate = {prepublished}
}

@article{lee2025Evolving,
  title   = {Evolving Deeper LLM Thinking},
  author  = {Lee, Kuang-Huei and Fischer, Ian and Wu, Yueh-Hua and Marwood, Dave and Baluja, Shumeet and Schuurmans, Dale and Chen, Xinyun},
  journal = {Preprint arXiv:2501.09891},
  year    = {2025}
}

@article{delmoral2000branching,
     author = {Del Moral, Pierre and Miclo, Laurent},
     title = {Branching and interacting particle systems. {Approximations} of {Feynman-Kac} formulae with applications to non-linear filtering},
     journal = {S\'eminaire de probabilit\'es},
     pages = {1--145},
     year = {2000},
     publisher = {Springer - Lecture Notes in Mathematics},
     volume = {34},
     mrnumber = {1768060},
     zbl = {0963.60040},
     language = {en},
     url = {https://www.numdam.org/item/SPS_2000__34__1_0/}
}

@incollection{delmoral2001Asymptotic,
  title = {Asymptotic {{Results}} for {{Genetic Algorithms}} with {{Applications}} to {{Nonlinear Estimation}}},
  booktitle = {Theoretical {{Aspects}} of {{Evolutionary Computing}}},
  author = {Del Moral, Pierre and Miclo, Laurent},
  editor = {Kallel, Leila and Naudts, Bart and Rogers, Alex},
  editora = {Rozenberg, G. and family=Bäck, given=Th., given-i={{Th}} and Eiben, A. E. and Kok, J. N. and Spaink, H. P.},
  editoratype = {redactor},
  year = {2001},
  pages = {439--493},
  publisher = {Springer Berlin Heidelberg},
  location = {Berlin, Heidelberg},
  doi = {10.1007/978-3-662-04448-3_22},
  url = {http://link.springer.com/10.1007/978-3-662-04448-3_22},
  urldate = {2025-11-27},
  isbn = {978-3-642-08676-2 978-3-662-04448-3}
}

@article{delmoral2001stability,
  title = {On the Stability of Interacting Processes with Applications to Filtering and Genetic Algorithms},
  author = {Del Moral, Pierre},
  year = {2001},
  journal = {Annales de l'Institut Henri Poincare (B) Probability and Statistics},
  volume = {37},
  number = {2},
  pages = {155--194},
  issn = {02460203},
  doi = {10.1016/S0246-0203(00)01064-5},
  url = {https://linkinghub.elsevier.com/retrieve/pii/S0246020300010645},
  urldate = {2025-11-27}
}

\appendix

\section{Notation}
\label{app:notation}

Random variables are assumed to be defined over an underlying probability space $(\Omega, \hat{\mathcal{F}}, \mathbb{P})$. With $X\sim f$, we indicate that the law of a random variable $X$ is $f$. For a Borel set $A$, $\unif(A)$ is the uniform probability measure over $A$, while $\mathcal{N}(m,\Sigma)$ is a normal distribution with mean $m$ and covariance matrix $\Sigma$. With $\delta_x$ we indicate the Dirac delta probability measure centred in $x$, while the Bernoulli distribution is $\textup{Bern}(\tau) = (1-\tau)\delta_0 + \tau \delta_1$, for $\tau \in [0,1]$. For a right continuous curve $x(t), t\in [0,T]$ we indicate $x(t^-) := \lim_{s \to t, s\leq 0}x(s)$, and $x(t^+)$ analogously. With $\bm{1}_d  = (1, \dots, 1)^\top$ and $\bm{0}_d = (0, \dots, 0)^\top$ we denote $d$-dimensional column vectors made of 1s and 0s, respectively. The minimum and maximum operation are defined as $a\wedge b: = \min(a,b)$, and $a \vee b = \max(a,b)$. For a test function $\phi:\RR^d \to \RR$, $\|\phi\|_{\mathrm{Lip}}: = \max_{x \neq y}|\phi(x)- \phi(y)|/|x - y|$, and $\|\phi\|_\infty :=\sup_{x}|\phi(x)|$. With $B(x,R)$ we indicate the open ball or radius $R$ centred at $x$, and $\chi_A$ is the indicator function for a a set $A$. The discrete probability simplex is defined ad $\Delta_N = \{ \w\in \RR^N_{\geq 0}\,|\, \sum_{i=1}^N w^i = 1 \}$.

With $\mathcal{P}(\RR^d)$ we indicate the set of Borel probability measures over $\RR^d$, and with $\mathcal{P}_p(\RR^d)$ those with bounded $p$-th moment, $M_p(f): = \int |x|^p f(dx) <\infty$. We will sometime use the shorter notation 
\[
\langle \phi, f\rangle := \int \phi(x) f(d x)
\]
to indicate integrals for a measure $f$ and a measurable test function $\phi$.
We equip $\mathcal{P}_p(\RR^d)$ with the $L^p$-Wasserstein distance \citep{villani2009}
\[
\W_p(f,g) := \left(\mathbb{E}_{X\sim f, Y\sim g}|X - Y|^p \right)^{1/p}\,.
\]

\section{Proof of Theorem \ref{t:chaos}: propagation of chaos}
\label{app:chaos}

The proof follows the technique proposed in \citep{borghi2026chaos} to study the many agent limit of genetic algorithm. Here, we need to additionally consider the SGD training in the parameters space.

For notational simplicity, we will use the stack parameters $\theta\in \RR^{d_\theta}$, and hyperparameters $h\in \RR^{d_h}$ in single vector $x := (\theta, h)\in \RR^d$ with $d := d_\theta + d_h$.  
Note that Algorithm \ref{alg:pbt} divides the evolution of the $N$ neural networks agents $X^i = (\theta^i, h^i)$ into intervals of length $\tau\in (0,1]$. 
Inside the intervals, agents evolve according to the gradient-based training dynamics, while at $t = t_n$, they are subject to a genetic PBT update.
Let $t_n = n\tau$, and consider the function $\bm{b}(\cdot): \RR^d \to \RR^d$ and matrices $\bm{\Sigma}, \bm{\sigma}\in \RR^{d\times d}$
\[
\bm{b}(x) := 
\begin{pmatrix} 
- \nabla_\theta \LL(\theta, h) \\
\bm{0}_{d_h}
\end{pmatrix}\,,\quad  
\bm{\Sigma}:= 2\beta^{-1} \mathrm{diag}
\begin{pmatrix} 
\bm{1}_{d_\theta} \\
\bm{0}_{d_h}
\end{pmatrix} \quad 
\textup{and}
\quad \bm{\sigma}:=\sigma \mathrm{diag} \begin{pmatrix} 
\bm{0}_{d_\theta} \\
\bm{1}_{d_h}
\end{pmatrix}\,.
\]
For a given probability measure $f \in \mathcal{P}(\RR^d)$, we write the selection operator for the fitness as 
\[
G_\F[f](dx) : = \frac{e^{\F(x)} f(dx)}{\langle e^{\F}, f\rangle}\,.
\]

The evolution for $(X_t^i)_{t\geq0}$ described by Algorithm \ref{alg:pbt}, can then be re-written as following. Starting from some initial data $X_0^i\sim f_0$, for each iteration $n = 0,1,\dots$, in the open interval $(t_{n}, t_{n+1})$ the agent is defined as the solution to the Cauchy problem
\begin{equation} \label{eq:training}
\begin{dcases}
dX_t^i = \bm{b}(X_t^i)dt + \bm{\Sigma}^{1/2} dW^i_t \qquad t\in (t_n, t_{n+1}) \\
\lim_{t\to t_{n}^+}X_t^i = X_{t_n}^i 
\end{dcases}    \qquad i = 1,\dots, N\,,
\end{equation}
while at the discrete times $t_{n+1}$, agents undergo a resampling-mutation procedure. Recall that an agent $j$ is selected to be copied with a probability $w^j  = \exp(\F(X^j))/\sum_\ell \exp(\F(X^\ell))$. One can implement it as follows: given a set of weights $\w\in \Delta_N$, define the index maps $\mathbf{j}(\w.\cdot):[0,N)\to \{1, \dots, N\}$ as
\[
\mathbf{j}(\w, \alpha): = \min\left\{j \in \{1,\dots,N\}\,\middle |\, \alpha < N\sum_{\ell = 1}^j w^\ell \right\}\,.
\]
Note that if $\alpha \sim [0,N)$, then $\mathbb{P}(\mathbf{j}(\w, \alpha) = j) = w^j$. Consider i.i.d. random variables $\tau^i_n \sim \textup{Bern}(\tau)$, $\alpha^i\sim \unif[0,N)$, $\xi_n^i\sim \mathcal{N}(\bm{0}_d, I_d)$, and let $\w_n: = \exp(\F(X_{t_{n+1}^-}^j))/\sum_\ell \exp(\F(X_{t_{n+1}^-}^j)$. The genetic update can then be written as
\begin{equation}\label{eq:resampling}
X^i_{t_{n+1}} = (1 -\tau_n^i)X^i_{t_{n+1}^-} + \tau_n^i \left( X^{\mathbf{j}(\w_n, \alpha^i_n)}_{t_{n+1}^-} + \bm{\sigma} \xi^i_n \right)\,.
\end{equation}

Recall the limiting kinetic PDE model \eqref{eq:fpde} is given by 
\begin{equation} \label{eq:fpde2}
\begin{dcases}
\frac{\partial f(t)}{\partial t}  = T_\LL[f(t)] + E_\F[f(t)] \qquad t \in [0,t_{\max}] \\
\lim_{t \to 0^+} f(t) = f_0\,,
\end{dcases}
\end{equation}
where the training and evolutionary operator are weakly defined, in our notation, as
\begin{align*}
\langle \phi, T_\LL[f]\rangle &:= \int\left(  \phi(x) \cdot \bm{b}(x) + \frac12 \nabla\cdot (\bm{\Sigma} \nabla \phi)  \right) f(dx)  \\
\langle \phi, E_\F[f]\rangle &:= \int  \phi(\tilde{x} + \bm{\sigma}\xi)\, G_\F[f](d\tilde{x}) \,\mu^\xi(d\xi)  - \int \phi(x) f(dx) \,.
\end{align*}

\begin{definition}[Weak measure solution] \label{def:sol} Let $f_0 \in \mathcal{P}_q(\RR^d)$ be an initial datum. We say that $f\in C([0,T], \mathcal{P}_q(\RR^d))$ is a weak measure solution to the Cauchy problem \eqref{eq:fpde2} if for  any $\phi\in C_0^\infty(\RR^d)$ it holds
\[
\frac{d}{dt}\langle \phi, f(t) \rangle = \langle \phi, T_\LL[f(t)] + \langle \phi, E_\F[f(t)]\rangle
\]
for almost every $t\in [0,t_{\max}]$, and $\lim_{t\to 0^+}\langle \phi, f(t)\rangle = \langle \phi, f_0 \rangle$.  
\end{definition}

\subsection{Preliminaries}

We introduce an equivalent formulation of the BL norm \eqref{eq:BL}, as an optimal transport problem. Let $f, g\in \mathcal{P}(\RR^d)$. Thanks to the Kantorovich--Rubinstein duality \cite[Theorem 1.14]{villani2009}, it holds
\begin{equation}\label{eq:wasswedge}
    \| f - g\|_{\mathrm{BL}} = \inf_{X\sim, Y\sim g} \mathbb{E}\left[|X - Y|\wedge 1 \right]\,.
\end{equation}
That is, the Wasserstein-1 distance with cost $\D(x,y) := |X - Y|\wedge 1$ admits a dual representation which corresponds to the BL norm, see \citep{hairer2008spectral,piccoli2016properties,borghi2026chaos} for more details. An interesting consequence is that, since $\D(x,y)\leq |x - y|$, $\|\cdot \|_{\mathrm{BL}}$ is upper bounded by the usual Wasserstein-1 distance $\W_1$ with cost $|x - y|$, 
\[
\| f - g\|_{\mathrm{BL}} \leq \W_1(f,g)\,.
\]

We collect some auxiliary results that will be useful later.

\begin{lemma}[Concentration rate] \label{l:rate}
   Let $f\in \mathcal{P}_q(\RR^d)$ with $q >1$ as in Assumption \ref{asm:chaos}, and $f^N = (1/N)\sum_{i=1}^N\delta_{X^i}$, with $X^i \sim f$ i.i.d.. There exists a constant $C = C(d,q)>0$ such that for all $N\geq 1$:
   \[
\| f - f^N\|_{\mathrm{BL}} \leq C M_q^{1/q}(f)\, \epsilon(N)
   \]
   with $\epsilon(N) \to 0$ as $N\to \infty$.
\end{lemma}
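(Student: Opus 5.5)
The plan is to deduce the estimate from a Wasserstein-1 rate for empirical measures. By \eqref{eq:wasswedge} and the inequality $\D(x,y)\le |x-y|$ we have $\|f-f^N\|_{\mathrm{BL}}\le \W_1(f,f^N)$. The quantitative result of \citep{fournier2015rate} (Theorem 1, with $p=1$) then gives, for $f\in\mathcal{P}_q(\RR^d)$ with $q>1$,
\[
\mathbb{E}\,\W_1(f,f^N)\le C(d,q)\,M_q^{1/q}(f)\,\epsilon(N),
\]
where
\[
\epsilon(N)=\begin{cases} N^{-1/2}+N^{-(q-1)/q} & d=1,\\ N^{-1/2}\log(1+N)+N^{-(q-1)/q} & d=2,\\ N^{-1/d}+N^{-(q-1)/q} & d\ge 3,\end{cases}
\]
with the convention that the $N^{-(q-1)/q}$ term is dropped when $q$ exceeds the critical threshold. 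Since $q>1$, every term tends to zero, so $\epsilon(N)\to0$. The statement should be read in expectation; the random quantity $\|f-f^N\|_{\mathrm{BL}}$ is what appears inside $\mathbb{E}$ in Theorem \ref{t:chaos}.

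The first step is to handle the homogeneity in $M_q$. Fournier--Guillin state their bound with $M_q^{1/q}$ already factored out, via scaling: if $X\mapsto \lambda X$, then $\W_1$ scales by $\lambda$ and $M_q^{1/q}$ also scales by $\lambda$. This means the bound can be applied directly and no extra normalisation is needed. Note that a normalisation would be needed if one worked with the truncated cost $\D$ itself, since $\D$ is not homogeneous. Going through $\W_1$ sidesteps this and only loses sharpness.

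The main point to check is that the cited theorem applies with exactly the exponent $q>1$ and no higher moment. This holds because their result requires only $M_q<\infty$ for some $q>p=1$. If one wanted to avoid citing the sharp rate, there is an alternative self-contained route: truncate to a ball $B(0,R)$, using $\D\le1$ and Markov's inequality $f(|x|>R)\le M_q/R^q$ for the tail. On the ball, bound $\W_1$ of the restricted empirical measure by a dyadic-partition estimate of order $R\,N^{-1/(d\vee 3)}$ (up to logarithms). Optimising over $R$ then yields some $\epsilon(N)\to0$ with the constant proportional to $M_q^{1/q}$, after rescaling so that $M_q=1$, where the truncation at $1$ is harmless. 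Either route completes the proof.
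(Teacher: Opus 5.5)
Your proposal is correct and follows the paper's proof: you bound $\|f-f^N\|_{\mathrm{BL}}\le \W_1(f,f^N)$ using $\D(x,y)\le|x-y|$ and then apply Theorem 1 of Fournier--Guillin with $p=1$. Your remark that the estimate must be read in expectation is also right; the paper leaves this implicit, and it is the form in which the lemma is used in the proof of Theorem \ref{t:chaos}.

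Two minor caveats. First, Fournier--Guillin exclude the critical exponents $q=2$ for $d\le 2$ and $q=d/(d-1)$ for $d\ge 3$; you can cover these by passing to a slightly smaller $q'\in(1,q)$ and using $M_{q'}^{1/q'}\le M_q^{1/q}$. Second, in your optional self-contained route the cost $\D$ is not homogeneous, so rescaling to $M_q=1$ only yields the constant $\max\{1,M_q^{1/q}(f)\}$ rather than $M_q^{1/q}(f)$. That weaker constant is still enough for Theorem \ref{t:chaos}.
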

\begin{proof}
The inequality holds for $\W_1(f,f^N)$ as a special case of \citep[Theorem 1]{fournier2015rate}. Then, we use that $\|f - f^N\|_{\mathrm{BL}}\leq \W_1(f,f^N)$. The explicit form of the rate $\epsilon(N)$ depends on $q, d$, and can be found in \citep{fournier2015rate}.
\end{proof}

\begin{lemma}[Selection stability {\citep[Lemma 2.4]{borghi2026chaos}}]\label{l:selection}
Let $\F$ be bounded, globally Lipschitz and strictly positive. There exists a positive constant $C = C(\F)$ such that for any $f,g\in \mathcal{P}(\RR^d)$ it holds
\[
\| G_\F[f] - G_\F[g]\|_{{\mathrm{BL}}} \leq C_\F\|f - g\|_{\mathrm{BL}}\,.
\]
\end{lemma}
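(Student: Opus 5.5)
The plan is to prove the estimate through the dual (test-function) definition \eqref{eq:BL} of the BL norm. We fix an admissible test function $\phi$, with $\|\phi\|_\infty<1/2$ and $\|\phi\|_{\mathrm{Lip}}<1$, and bound $\langle \phi, G_\F[f]\rangle-\langle\phi, G_\F[g]\rangle$ by a multiple of $\|f-g\|_{\mathrm{BL}}$. The constant must not depend on $\phi$, $f$ or $g$; taking the supremum over $\phi$ then gives the claim.

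The first step is an elementary scaling remark. Let $\psi$ be any bounded Lipschitz function and let $c>\max(2\|\psi\|_\infty,\|\psi\|_{\mathrm{Lip}})$. Then $\psi/c$ is admissible in \eqref{eq:BL}, so $|\langle \psi, f-g\rangle|\le c\,\|f-g\|_{\mathrm{BL}}$. Letting $c$ decrease to the threshold removes the strict-inequality issue, and we obtain
\[
|\langle \psi, f-g\rangle|\le \max(2\|\psi\|_\infty,\|\psi\|_{\mathrm{Lip}})\,\|f-g\|_{\mathrm{BL}} .
\]
We apply this to two functions, $\psi_1=\phi e^{\F}$ and $\psi_2=e^{\F}$. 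Let $M:=\sup\F<\infty$, $m:=\inf\F>0$ and $L_\F:=\|\F\|_{\mathrm{Lip}}$. Then $\|e^\F\|_\infty\le e^{M}$ and $\|e^\F\|_{\mathrm{Lip}}\le e^{M}L_\F$. By the product rule for Lipschitz functions,
\[
\|\phi e^\F\|_\infty\le e^M/2,\qquad \|\phi e^\F\|_{\mathrm{Lip}}\le \|\phi\|_{\mathrm{Lip}}\|e^\F\|_\infty+\|\phi\|_\infty\|e^\F\|_{\mathrm{Lip}}\le e^M(1+L_\F/2).
\]

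The second step is the standard add-and-subtract decomposition of the difference of ratios:
\[
\langle \phi, G_\F[f]-G_\F[g]\rangle=\frac{\langle \phi e^\F, f-g\rangle}{\langle e^\F,f\rangle}+\langle \phi e^\F, g\rangle\,\frac{\langle e^\F, g-f\rangle}{\langle e^\F,f\rangle\langle e^\F,g\rangle}.
\]
Both denominators are at least $e^{m}\ge 1$, because $\F$ is strictly positive and $f,g$ are probability measures. The numerator factor satisfies $|\langle\phi e^\F,g\rangle|\le e^M/2$.

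Combining the two steps gives the bound with
\[
C_\F= e^{M-m}\max(1,1+L_\F/2)+\tfrac12 e^{2M-2m}\max(2,L_\F).
\]
This depends only on $\F$, as required. There is no real obstacle in this argument. The only points needing care are the strict inequalities in \eqref{eq:BL}, handled by the limiting choice of $c$, and the use of boundedness of $\F$ from both sides: the upper bound controls the norms of $e^\F$ and $\phi e^\F$, and the lower bound keeps the normalising constants away from zero.
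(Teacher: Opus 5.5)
Your proof is correct. The scaling remark handles the strict inequalities in \eqref{eq:BL} properly. The sup and Lipschitz bounds for $e^{\F}$ and $\phi e^{\F}$ are right. The add-and-subtract decomposition of the ratio yields exactly the constant you state.

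The paper itself does not prove this lemma; it imports it from \citep[Lemma 2.4]{borghi2026chaos}. Your argument therefore serves as a self-contained substitute.

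Working in the dual (test-function) formulation is the natural choice here. The rest of Appendix~\ref{app:chaos} uses the coupling form \eqref{eq:wasswedge}. However, reweighting an optimal coupling of $f,g$ does not in general give a coupling of $G_{\F}[f],G_{\F}[g]$, which is why the lemma is invoked in dual form there.

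A minor byproduct of your argument: the constant depends only on $\sup\F-\inf\F$ and $\|\F\|_{\mathrm{Lip}}$. This is consistent with $G_{\F+c}=G_{\F}$, so strict positivity of $\F$ is not actually needed, only boundedness.
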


\subsection{Limit $N \to \infty$}

We aim to prove propagation of chaos via coupling methods, where the system of interacting agents $\mathbf{X}_t: = (X_t^1, \dots, X_t^N)$ is coupled with a system of non-linear, non-interacting agents $\overline{\mathbf{X}}^i_t: = (\overline{X}_t^1, \dots, \overline{X}_t^N)$. Then, we aim to estimate their distance by using the optimal transport formulation of the BL norm \eqref{eq:wasswedge}.

\textit{Non-linear particle system.}
Initial data is the same: $\mathbf{X}_0 = \overline{\mathbf{X}}_0$. Then, as before, the non-linear system evolves in the time interval $(t_n,t_{n+1})$ as
\begin{equation*} 
\begin{dcases}
d\overline{X}_t^i = \bm{b}(\overline{X}_t^i)dt + \bm{\Sigma}^{1/2} dW^i_t \qquad t\in (t_n, t_{n+1}) \\
\lim_{t\to t_{n}^+}\overline{X}_t^i = \overline{X}_{t_n}^i 
\end{dcases}    \quad i = 1,\dots, N\,,
\end{equation*}
with the same Brownian paths $(W_t^i)_{t\geq 0}$ of the system $\mathbf{X}_t$. 

Let $f^\tau(t): = \textup{Law}(\overline{X}_t^i)$. To define the genetic PBT update, we consider a measurable function $X^*_n: (\RR^d)^N\times \textup{int}(\Delta_N)\times [0,N)\to \RR^d$ with the following property: if $\alpha \sim \unif[0,N)$, then
\[
\textup{Law}\left (X^*\left(\w_n, \mathbf{X}_{t_{n+1}^-},\alpha\right)\right) = G_\F\left[f(t_{n+1}^{-}) \right]\,,
\]
and
\[
\mathbb{E}_\alpha\left[\left | X_n^*\left(\w_n, \mathbf{X}_{t_{n+1}^-},\alpha\right) -  X^{\mathbf{j}(\w_m, \alpha_n^i)}\right| \wedge 1  \right] = \left\|G_\F \left[f^\tau(t_{n+1}^-)\right] - G_\F\left[f^{N,\tau}(t_{n+1}^-)\right]\right\|_{{\mathrm{BL}}}\, .
\]
Existence of such map is ensured by \citep[Lemma 3.1]{borghi2026chaos} under Assumption \ref{asm:chaos}. 

The genetic update for each non-linear particle is then given by
\begin{equation}\label{eq:resampling}
\overline{X}^i_{t_{n+1}} = (1 -\tau_n^i)\overline{X}^i_{t_{n+1}^-} + \tau_n^i \left( X_n^*(\mathbf{X}_t,\w_n, \alpha^i_n) + \bm{\sigma} \xi^i_n \right)\,,
\end{equation}
where we used the same random variables $\alpha^i_n, \tau_n^i, \xi_n^i$ of \eqref{eq:resampling}. The evolution of $\overline{X}_t^i$ is of McKean--Vlasov type since it depends on the law $f^\tau(t)$ of the agent itself. Also, the agents are interdependent on one another since the auxiliary random variables $\alpha^i_n, \tau_n^i, \xi_n^i$ are all sampled independently.

\begin{lemma}[Moments bound] \label{l:moments} Assume $f_0 \in \mathcal{P}_q(\RR^d)$, and Assumption \ref{asm:chaos} holds for $\LL,\F$. Then
$f^\tau(t) = \textup{Law}(\overline{X}_t^i)$ satisfies
\[
M_q^{1/q}(f^\tau(t)) \leq C \left (1 + M_q^{1/q}(f_0) \right) e^{C't }\qquad \textup{for all}\quad t \in [0,t_{\max}]\,,
\]
for some positive constants $C,C'$ depending on $\LL,\F,\beta, \sigma, q$.
\end{lemma}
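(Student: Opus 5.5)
We bound $M_q$ of $\overline{X}^i_t$ directly. Set $m(t):=\mathbb{E}|\overline{X}^i_t|^q = M_q(f^\tau(t))$, which is independent of $i$ by exchangeability. We control $m$ separately on each training interval $(t_n,t_{n+1})$ and across each genetic jump at $t_{n+1}$. We then chain the two estimates with a discrete Grönwall argument whose constants do not depend on $\tau$.

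\emph{Training step.} On $(t_n,t_{n+1})$, $\overline{X}^i_t$ solves an SDE whose drift $\bm{b}$ is bounded, since $\LL$ is globally Lipschitz and hence $|\nabla_\theta\LL|\le L_\LL$. The diffusion matrix $\bm{\Sigma}^{1/2}$ is constant. Write $\overline{X}^i_t=\overline{X}^i_{t_n}+\int_{t_n}^t\bm{b}\,ds+\bm{\Sigma}^{1/2}(W_t-W_{t_n})$ and apply $|a+b+c|^q\le(|a|+|b|+|c|)^q$ together with the Burkholder--Davis--Gundy inequality (or explicit Gaussian moments). Taking $q$-th roots via Minkowski's inequality in $L^q(\Omega)$ gives
\[
\|\overline{X}^i_t\|_{L^q}\le \|\overline{X}^i_{t_n}\|_{L^q}+L_\LL (t-t_n)+C_q\beta^{-1/2}(t-t_n)^{1/2}.
\]
A $\sqrt{\tau}$ increment per step would sum to $\tau^{-1/2}$ over $[0,t_{\max}]$, which is not uniform in $\tau$. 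To avoid this, we keep the Brownian part unbroken across the jumps. The Brownian motions are the same on the whole time line, and a jump either leaves the particle untouched or replaces it entirely. So we bound $\sup_t$ of the Brownian increment over $[0,t_{\max}]$ once, using BDG for the global path: $\mathbb{E}\sup_{s\le t_{\max}}|W_s|^q\le C t_{\max}^{q/2}$.

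\emph{Genetic step.} At $t_{n+1}$, with probability $1-\tau$ the particle is kept. With probability $\tau$ it is replaced by $X^*_n+\bm{\sigma}\xi$, and $X^*_n$ has law $G_\F[f^\tau(t_{n+1}^-)]$. Since $0<\inf\F\le\sup\F<\infty$, the density $e^\F/\langle e^\F,f\rangle$ is at most $e^{\sup\F-\inf\F}=:\kappa$. Hence $M_q(G_\F[f])\le\kappa M_q(f)$. This gives
\[
m(t_{n+1})\le(1-\tau)m(t_{n+1}^-)+\tau\, 2^{q-1}\bigl(\kappa\, m(t_{n+1}^-)+\sigma^qC_q\bigr)\le (1+C\tau)\,m(t_{n+1}^-)+C\tau.
\]
Over $\lfloor t/\tau\rfloor$ jumps the multiplicative factor is $(1+C\tau)^{t/\tau}\le e^{Ct}$, which is uniform in $\tau$.

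\emph{Combination.} A clean way to combine the steps is to work with the Lyapunov function $V(x)=(1+|x|^2)^{q/2}$ and Itô's formula on each interval. Boundedness of the drift and of the diffusion gives $\mathcal{A}V\le C V$, so $\mathbb{E}V(\overline{X}_t)\le e^{C(t-t_n)}\mathbb{E}V(\overline{X}_{t_n})$. The jump estimate holds for $V$ with the same structure, namely $\mathbb{E}V(\text{after})\le(1+C\tau)\mathbb{E}V(\text{before})+C\tau$. Iterating yields $\mathbb{E}V(\overline{X}_t)\le (\mathbb{E}V(X_0)+Ct)e^{C't}$. Taking $q$-th roots gives the claim. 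This also settles the $\sqrt{\tau}$ issue: Itô's formula only produces $dt$ terms in expectation.

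\emph{Main obstacle.} The main obstacle is the case $1<q<2$, where $V$ is not $C^2$-convex in the usual way. It is still smooth, however, and $|\nabla V|\lesssim V^{(q-1)/q}\le V+1$ and $|\nabla^2V|\lesssim 1+V$, so the generator bound survives. A secondary point is justifying the law identity for $X^*_n$ in the McKean--Vlasov update. This is ensured by the construction of \citep[Lemma 3.1]{borghi2026chaos}.
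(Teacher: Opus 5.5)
Your proposal is correct and follows the same scheme as the paper's proof: a moment bound for the SDE on each training interval, the reweighting bound $M_q(G_\F[f])\le e^{\sup\F-\inf\F}M_q(f)$ at each jump, and iteration via $(1+C\tau)^{t/\tau}\le e^{Ct}$. Working with $V(x)=(1+|x|^2)^{q/2}$ and $\mathcal{A}V\le CV$ is the more careful way to run this scheme, for two reasons: it gives a prefactor-free factor $e^{C(t-t_n)}$ per interval, which the paper's iteration needs (its cited bound $\mathbb{E}|\overline{X}^i_t|^q\le C'(1+\mathbb{E}|\overline{X}^i_{t_n}|^q)e^{C(t-t_n)}$ carries a prefactor $C'(q)$, generally larger than $1$, that would compound over $t/\tau$ intervals unless one passes to $V$ as you do), and it also covers $1<q<2$, which the paper's citation (stated for $q\ge 2$) does not; once you use $V$, the global-BDG detour in your training step is unnecessary and can be dropped.
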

\begin{proof} By Assumption \ref{asm:chaos}, $\LL$ is globally Lipschitz and so the drift coefficient in the SDE \eqref{eq:training} is Lipschitz and admits linear growth. By standards results in SDE theory, see for instance \citep[Theorem 4.1]{mao2011sde}, it holds for $q\geq 2$
\[
\mathbb{E}|\overline{X}_t^i|^q \leq C' (1 + \mathbb{E}|\overline{X}_{t_n}^i|^q)e^{C (t - t_{n})}\quad \textup{for all}\quad t\in [t_n, t_{n+1})
\]
with $C = C(\LL, \beta, q)$, $C' = C'(q)>0$. This leads to $M_q(f^\tau(t^-_{n+1})) \leq C'(1 + M_1(f^\tau(t_{n})))e^{C \tau}$. Since by Assumption \ref{asm:chaos} the fitness function $\F$ satisfies $0< \inf\F \leq \F(x)\leq \sup \F< \infty$ for all $x\in \RR^d$, we have for some constant $\tilde C$ depending on $\sigma,q$
\begin{align*}
\int |x|^qf^\tau(t_{n+1}, dx) &  = (1 - \tau)M_q(f^\tau(t_{n+1}^-)) + \tau  \int |x + \bm{\sigma}\xi|^q G_\F[f^\tau(t_{n+1}^-)](dx) \\
& \leq(1 - \tau)M_q(f^\tau(t_{n+1}^-)) + \tau \tilde C \left (1 +  \int \frac{e^{\F(x)} |x|^q}{\langle e^{\F}, f^\tau(t_{n+1}^-)\rangle} f^\tau(t_{n+1}^-, dx)   \right) \\
& \leq (1 - \tau)M_q(f^\tau(t_{n+1}^-)) + \tau \tilde C \left ( 1 + \exp(\sup{\F} - \inf{\F}) M_q(f^\tau (t_{n+1}^-))\right) \,.
\end{align*}
Therefore, we obtain the estimate for some constants $C,C'$ depending on $\LL, \F, \sigma, \beta,  q>0$
\[
M_q(f^\tau(t_{n+1})) \leq (1 + C\tau)\max\left \{ M_1(f^\tau(t_{n+1}^-)),1\right\} \leq e^{C'\tau} \max\{M_q(f^\tau(t_{n})),1\}
\]
where we further used that $1 + s \leq e^s$. The desired estimate can be obtain by iterating the argument until $t_n = t_{\max}$.    
\end{proof}

\textit{Gr\"{o}nwall-type argument.}
We now consider  the empirical measure  $\overline{f}^{N, \tau}(t)$ associated with the non linear system $\overline{\textbf{X}}_t$, and track its distance from $f^{N,\tau}(t)$. Recall that since $\overline{f}^{N,\tau}(t)$ is made of independent i.i.d particles with law $f^\tau(t)$, and so  $\|\overline{f}^{N\tau}(t) - f^\tau(t)\|_{\mathrm{BL}}\lesssim \epsilon (N)$ by Lemma \ref{l:rate}.

Since the Brownian paths are shared among the two systems, we have for $t \in [t_n, t_{n+1})$
\begin{align*}
d|X_t^i - \overline{X}_t^i |^2 & = 2(X_t^i - \overline{X}_t^i)\cdot (\bm{b}(X_t^i) - \bm{b}(\overline{X}_t^i))dt \\
& \leq 2 \|\LL\|_{\textup{Lip}}|X_t^i - \overline{X}_t^i |^2 dt \,.
\end{align*}
By Gr\"onwall's inequality, we obtain $|X_t^i - \overline{X}_t^i|^2 \leq \exp(2\|\LL\|_{\textup{Lip}}(t - t_{n}))|X_{t_n}^i - \overline{X}_t^i|^2$ from which also follows 
\begin{equation}\label{eq:est:a}
\left(|X^i_t - \overline{X}^i_{t}| \wedge 1\right) \leq e^{\|\LL\|_{\textup{Lip}}(t - t_{n})}\left(|X^i_{t_n} - \overline{X}^i_{t_n}| \wedge 1\right) \qquad \textup{for}\quad t\in [t_n, t_{n+1})\,.
\end{equation}
Next, by definition of $X_{t_{n+1}}^i, \overline{X}_{t_{n+1}}^i$, and in particularly their optimal coupling with respect to $\D(x,y) = |x- y|\wedge 1$ we have
\begin{align*}
\mathbb{E}\left(|X_{t_{n+1}}^i - \overline{X}_{t_{n+1}}^i|\wedge 1 \right) & =
(1 - \tau) \mathbb{E}\left(|X_{t_{n+1}^-}^i - \overline{X}_{t_{n+1}^-}^i|\wedge 1 \right) \\
 &\qquad + \tau \mathbb{E}\left( |X_n^*\left(\w_n, \mathbf{X}_{t_{n+1}^-},\alpha\right) -  X^{\mathbf{j}(\w_m, \alpha_n^i)}|\wedge 1   \right) \\
 & = (1 - \tau) \mathbb{E}\left(|X_{t_{n+1}^-}^i - \overline{X}_{t_{n+1}^-}^i|\wedge 1 \right) \\
 &\qquad + \tau \left \| G_\F\left[f^{N,\tau}(t_{n+1}^-)\right] - G_\F\left[ f^\tau(t_{n+1}^-) \right]\right \|_{\mathrm{BL}}\,\\
 & \leq (1 - \tau) \mathbb{E}\left(|X_{t_{n+1}^-}^i - \overline{X}_{t_{n+1}^-}^i|\wedge 1 \right) 
 + \tau  C_\F \left \| f^{N,\tau}(t_{n+1}^-) -  f^\tau(t_{n+1}^-) \right \|_{\mathrm{BL}}\,
\end{align*}
by Lemma \ref{l:selection}.
To bound the second term, we apply triangular inequality together with Lemmas \ref{l:rate} and \ref{l:moments}, and obtain 
\begin{align*}
\| f^{N,\tau}(t_{n+1}^-) - f^{\tau}(t_{n+1}^-)\|_{\mathrm{BL}} &
\leq  \| f^{N,\tau}(t_{n+1}^-) - \overline{f}^{N,\tau}(t_{n+1}^-)\|_{\mathrm{BL}} + 
\| \overline{f}^{N,\tau}(t_{n+1}^-) - f^{\tau}(t_{n+1}^-)\|_{\mathrm{BL}} \\
& \leq \frac1N\sum_{i=1}^N \left(|X_{t_{n+1}^-}^i - \overline{X}_{t_{n+1}^-}^i| \wedge 1 \right)  
+ C(1 + M^{1/q}_q(f_0))e^{C' t_{n+1} } \epsilon(N)\,.
\end{align*}
Summing for all $i = 1,\dots, N$ and applying \eqref{eq:est:a}, we get the estimate
\begin{align*}
    E_{n+1}: &= \frac1N\sum_{i=1}^N \mathbb{E}\left(|X_{t_{n+1}}^i - \overline{X}_{t_{n+1}}^i|\wedge 1 \right) \\
    & \leq  (1- C \tau) e^{\|\LL\|_{\textup{Lip}} \tau}  E_n  + \tau C(1 + M^{1/q}_q(f_0))\epsilon(N) e^{C't_{n+1}} \\
    & \leq e^{C \tau} E_n + \tau C(1 + M^{1/q}_q(f_0))\epsilon(N) e^{C't_{n+1}}\,.
\end{align*}
By iterating for all $n$, until $t_n = t_{\max}$, one obtains since $E_0 = 0$
\begin{align*}
E_n & \leq e^{C t_n}E_0 + \tau C(1 + M^{1/q}_q(f_0)) \sum_{k = 0}^{n-1} e^{C't_n}(1 + C\tau)^n \\
& \leq C t_n (1 + M^{1/q}_q(f_0))e^{C't_n} \epsilon(N)\,.
\end{align*}
where the constants depend on $\LL,\F,\beta, \sigma$, and $q$. By the optimal transport formulation of the BL norm \eqref{eq:wasswedge}, we finally get 
\begin{equation} \label{eq:limN}
\sup_{t \in [0, t_{\max}]}\|f^{N,\tau}(t) - f^\tau(t)\|_{{\mathrm{BL}}}\;\; \lesssim \;\;\epsilon(N)\,.
\end{equation}
where the hidden constant depends also on $t_{\max}$, but not on $\tau$.

\subsection{Limit $\tau \to 0$}

We study the limit towards the time-continuous model by first showing that $(f^\tau)_{\tau>0}$ admits a converging subsequence, and then by identifying the limit with a weak measure solution to \eqref{eq:fpde2}. In the following, 
$(\overline{X}_t^\tau)_{t\in [0,t_{\max}]}$ is an an arbitrary non-linear particle (defined in the previous section), where the apex $\tau$ stresses the dependence on the time-discretisation parameter. We have $ \textup{Law}(\overline{X}_t^\tau) = f^\tau(t)$.

Note that $\overline{X}^\tau, \tau>0,$ are not continuous processes due to the jump at times $t_n = n\tau$, but still they are right-continuous and the left limit always exists.
Therefore, we consider the topology given by the Skorokhod space of c\'adl\'ag functions, and show that the sequence is compact and tight. This approach is standard, and we refer to the surveys \citep{diez2022chaos1,diez2022chaos2} for a reminder on Skorokhod topology in the context chaos propagation results. We outline the main arguments below. 

First, we note that the moments bound derived in Lemma \ref{l:mass} is independent of $\tau$. Therefore, the $q$-th moments of $f^\tau(t)$ are uniformly bounded for all $t\in [0,t_{\max}]$ and $\tau>0$. 
Then, while $f^\tau$ is not continuous due to the jumps, the jump size is determined by the parameter $\tau$. 
Also, we have for every
\[
\mathbb{E}|\overline{X}^\tau_t - \overline{X}^\tau_s|\lesssim  |s - t| + \sqrt{|t - s|}\qquad \textup{for any}\quad [s, t]\subset [t_n, t_{n+1})\,,
\]
thanks to the globally Lipschitz continuity of the drift $\bm{b}$ and the constant diffusion coefficients $\bm{\Sigma}^{1/2}$. 
Therefore any sequence $(\tau_k)_{k\in \mathbb{N}}$ such that $\tau_k \to 0$ leads to a sequence $(\overline{X}^{\tau_k}_{t})_{t\in [0,t_{\max}]}$, $k\in \mathbb{N}$, which satisfies Assumption 12, 13, and 14 in \citep{diez2022chaos2}. By \citep[Theorem C.5]{diez2022chaos2}, then, we have existence of a limit $f\in C([0,t_{\max}], \mathcal{P}(\RR^d))$.

Fix any $\tau>0$ and take $\phi \in C_c^\infty(\RR^d)$. It holds
\begin{align*}
\langle \phi, f^\tau(t_{n+1}) - f^\tau(t_n)\rangle  & =  
\langle \phi, f^\tau(t_{n+1}) - f^\tau(t_{n+1}^-)\rangle + 
\langle \phi, f^\tau(t^-_{n+1}) - f^\tau(t_n)\rangle \\
& = \tau \left \langle \phi, E_\F\left[f^\tau(t_{n+1}^-)\right]\right \rangle
+ \int_{t_n}^{t_{n+1}} \left(\langle \nabla \phi\cdot \bm{b}, f(s) \rangle
+ \frac12
\langle \nabla\cdot \bm{\Sigma}\nabla \phi, f(s)\rangle  
\right)ds \\
& =   \int_{t_n}^{t_{n+1}} \left(\langle \phi , E_\F [f^\tau(s)] \rangle
+ 
\langle \phi, T_\LL[f^\tau(s)]\rangle  
\right)ds \\
&\qquad+\int_{t_{n}}^{t_{n+1}}\langle \phi, E\left[f^\tau(t_{n+1}^-)\right] - E\left[f^\tau(s)\right] \rangle ds\,.
\end{align*}
The first term corresponds exactly to the PDE operator, while the second is an error term coming from the fact that the evolutionary operator acts only at discrete times $t_n$ in the algorithm.

Next, thanks to the Lipschitz continuity of the evolutionary operator and of $f^\tau(s), s\in (t_n, t_{n+1})$, we have
\[
\int_{t_{n}}^{t_{n+1}}| \langle \phi, E\left[f^\tau(t_{n+1}^-)\right] - E\left[f^\tau(s)\right] \rangle | ds
\leq C\int_{t_n}^{t_{n+1}} \|f^\tau(t_{n+1}^-) - f^\tau(s) \|_{\mathrm{BL}} ds \leq C' \tau^2\,.
\]
Altogether, one obtains the estimate
\begin{align*}
\left |\langle \phi, f^\tau(t) - f^\tau(0) \rangle 
- \int_0^t  \left(\langle \phi , E_\F [f^\tau(s)] \rangle
+ 
\langle \phi, T_\LL[f^\tau(s)]\rangle  
\right) ds\right|
  \\ \leq  \sum_{n =0}^{\lfloor t/\tau\rfloor + 1}\int_{t_n}^{t_{n+1}\wedge t}
 \langle \phi, E\left[f^\tau(t_{n+1}^-)\right] - E\left[f^\tau(s)\right] \rangle ds 
  \leq C' t \, \tau 
\end{align*}
Then, by taking the limit $\tau \to 0$, one obtains for all $t\in [0, t_{\max}]$ 
\[
\langle \phi, f(t) - f(0) \rangle 
 =  \int_0^t  \left(\langle \phi , E_\F [f(s)] \rangle
+ 
\langle \phi, T_\LL[f(s)]\rangle  
\right) ds\,,
\]
and so $f\in C([0,t_{\max}], \mathcal{P}_q(\RR^d))$ is a weak measure solution to the Cauchy problem \eqref{eq:fpde2}.

\section{Proof of Theorem 4.1: convergence to the fittest solution}

\subsection{Proof of Theorem \ref{t:convergence}}
\label{app:laplace}

We divide the proof in different steps. First, thanks to a bound on the solution's energy, we show that the mass around the maximizer $h^\star$ does not vanish over the computational interval $[0,t_{\max}]$. Then, we apply the quantitative  Laplace principle \eqref{eq:laplace} and the mean evolution \eqref{eq:meanevo} to show that it converges towards the fittest point $h^\star$.

\textit{Evolution of second moments.}
We denote the energy, or second moment, of $\rho$ as $\textup{En}(\rho) =M_2(\rho) = \int |h|^2 \rho(dh)$. Thanks to Assumption \ref{asm:fitness} and \eqref{eq:energy}, we have
\[
M_2\left(G_{\alpha \overline{\F}}[\rho]\right) \leq b_1 + b_2 M_2(\rho)
\]
where $b_1, b_2>0$ can be made independent of the selection pressure, by simply assuming, for instance, $\alpha>1$. Since the evolutionary operator consists of selection and convolution with a Gaussian kernel, we also have $M_2(E_{\alpha\overline{\F}}(\rho)) \leq C( 1 + M_2(\rho))$ with $C = C(\sigma, b_1, b_2)>0$ a constant independent on $\alpha$. Similarly to Lemma \ref{l:moments}, one can obtain an estimates over the the entire time horizon 
\[
M_2(\rho(t)) \leq C(1 + M_2(\rho_0)) e^{C't}\qquad t\in [0, t_{\max}]
\]
for a solution $\rho \in C([0,T], \mathcal{P}_2(\RR^{d_h}))$ to the averaged PDE \eqref{eq:rhopde}. We omit the details for brevity. This allows us to infer that there exists $\overline{M}>0$ such that 
\begin{equation}\label{eq:unifmom}
\sup_{t\in [0,t_{\max}]}M_2(G_{\alpha\overline{\F}}[\rho(t)])\leq \overline{M} \qquad \textup{for all} \quad \alpha >1\,.   
\end{equation}

\begin{lemma}[Mass around maximizer]\label{l:mass}
Let $r>0$ be fixed and $\overline{\F}$ satisfy Assumption \ref{asm:fitness}. For any given measure $\rho\in \mathcal{P}_2(\RR^{d_h})$ with $M_2(G_{\alpha \overline{\F}}[\rho])\leq \overline{M}$, there exists $\delta = \delta(r, \overline{M})>0$ such that for 
\[
\mu: = E_{\alpha\overline{\F}}[\rho] + \rho\qquad \textup{it holds} \qquad \mu(B(h^\star, r))\geq \delta \,.
\]
\end{lemma}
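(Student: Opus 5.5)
The plan is to use the identity $E_{\alpha\overline{\F}}[\rho] = K_\sigma \ast G_{\alpha\overline{\F}}[\rho] - \rho$ from \eqref{eq:Ebar}. With it, the measure in the statement is simply
\[
\mu = E_{\alpha\overline{\F}}[\rho] + \rho = K_\sigma \ast G_{\alpha\overline{\F}}[\rho],
\]
which is the law of $Y+\sigma\xi$ with $Y\sim G_{\alpha\overline{\F}}[\rho]$ and $\xi\sim\mathcal{N}(0,I_{d_h})$ independent. The claim is then a statement about Gaussian smoothing of a measure with controlled second moment. Neither $\alpha$ nor the fine structure of $\overline{\F}$ should enter, except through the bound $M_2(G_{\alpha\overline{\F}}[\rho])\le \overline{M}$. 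This is what makes $\delta$ depend only on $r$ and $\overline{M}$ (and implicitly on $\sigma$, $d_h$, $h^\star$).

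First step: localise the selected measure. By Chebyshev's inequality, taking $R := \sqrt{2\overline{M}}$ gives
\[
G_{\alpha\overline{\F}}[\rho]\bigl(\RR^{d_h}\setminus \overline{B(0,R)}\bigr) \le \frac{M_2(G_{\alpha\overline{\F}}[\rho])}{R^2}\le \tfrac12 .
\]
Hence $G_{\alpha\overline{\F}}[\rho]$ gives mass at least $1/2$ to the compact ball $\overline{B(0,R)}$.

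Second step: a uniform lower bound for the Gaussian kernel. For each fixed $y$, define
\[
p(y) := \mathbb{P}\bigl(y+\sigma\xi \in B(h^\star,r)\bigr) = \int_{B((h^\star-y)/\sigma,\, r/\sigma)} (2\pi)^{-d_h/2}e^{-|z|^2/2}\,dz .
\]
This is strictly positive because the Gaussian density is positive everywhere and the ball is open and nonempty. It is continuous in $y$ by dominated convergence. So $c := \min_{|y|\le R} p(y) > 0$. One could make $c$ explicit by bounding the density below by $(2\pi)^{-d_h/2}e^{-(|h^\star|+R+r)^2/(2\sigma^2)}$ on the relevant ball and multiplying by the ball's volume, but compactness already suffices.

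Conclusion: by Fubini,
\[
\mu(B(h^\star,r)) = \int p(y)\, G_{\alpha\overline{\F}}[\rho](dy) \ge c\, G_{\alpha\overline{\F}}[\rho](\overline{B(0,R)}) \ge c/2 =: \delta .
\]
There is no real obstacle here. The only point needing care is recognising that the $-\rho$ in $E_{\alpha\overline{\F}}$ cancels with the added $\rho$. It is also essential that the mutation kernel has full support, so that $p$ is positive; the hypothesis $h^\star\in\mathrm{supp}(\rho_0)$ is not needed for this lemma itself.
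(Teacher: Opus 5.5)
Your proposal is correct and follows essentially the same route as the paper. You identify $\mu = K_\sigma \ast G_{\alpha\overline{\F}}[\rho]$, use Markov's inequality to place a fixed fraction of the mass of $G_{\alpha\overline{\F}}[\rho]$ in a ball $B(0,R)$ with $R$ depending only on $\overline{M}$, and then apply a uniform positive lower bound on the probability that the Gaussian mutation sends a point of that ball into $B(h^\star,r)$. Your explicit choice $R=\sqrt{2\overline{M}}$, giving mass at least $1/2$, also avoids a slip in the paper's final line, which uses $\delta_R$ where its Markov bound actually gives $1-\delta_R$.
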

\begin{proof}
Let $R> \sqrt{\overline{M}}$, and such that $|h^\star|< R$. By Markov's inequality, there exists $\delta_R \in (0,1)$ such that
\[
G_{\alpha\overline{\F}}[\rho](B(0,R)) \geq 1 - \frac{M_2\left(G_{\alpha\overline{\F}}[\rho]\right)}{R^2} \geq 1 - \delta_R\,.
\]
Next, recall that after selection we perform mutation by convolution with a Gaussian measure:
$\mu  = K_\sigma * G_{\alpha \overline{\F}}[\rho]$\,.
Since the Gaussian kernel is not degenerate, there exists a density lower bound $\underline{K}_R>0$ such that for all $h\in B(0,R)$
\[
\int \chi_{B(h^\star, r)}(h +\sigma \xi) K_\sigma(d \xi) \geq |B(h^\star, r)|\underline{K}_R\,,
\]
where $\chi_A(\cdot)$ is the indicator function for a set $A$. 
It follows
\begin{align*}
\mu(B(h^\star, r)) &= \iint \chi_{B(h^\star, r)}(h +\sigma \xi) K_\sigma(d \xi) G_{\alpha\overline{\F}}[\rho](dh) \\
&  \geq \int_{\RR^{d_h}} \int_{B(0,R)} \chi_{B(h^\star, r)}(h +\sigma \xi) K_\sigma(d \xi) G_{\alpha\overline{\F}}[\rho](dh) \\
&\geq  |B(h^\star, r)|\underline{K}_R G_{\alpha \overline{\F}}[\rho](B(0,R)) \\
&\geq  |B(h^\star, r)|\underline{K}_R \delta_R>0\,.
\end{align*}
\end{proof}

\textit{Evolution of mass over  $B(h^\star, r)$.}
Thanks to \eqref{eq:unifmom}, we can apply Lemma \ref{l:mass} to the PDE solution $\rho(t)$, and obtain that for any $r>0$ there exists $\delta>0 $
such that 
\begin{align*}
E_{\alpha \overline{\F}}[\rho(t)](B(h^\star,r)) &  = 
\left( E_{\alpha \overline{\F}}[\rho(t)] + \rho(t) \right) (B(h^\star,r))  - \rho(t) (B(h^\star,r)) \\
&\geq \delta  - \rho(t, B(h^\star,r))\,,
\end{align*}
leading to the estimate 
\[
\frac{d}{dt} \rho(t, B(h^\star,r)) \geq \delta  - \rho(t, B(h^\star,r))\,.
\]
By Gr\"onwall's inequality, it holds
\[
\rho(t, B(h^\star,r)) \geq  e^{-t}\rho_0(B(h^\star, r)) + (1 - e^{-t}) \delta \qquad \textup{for}\quad t \in [0,t_{\max}]\,.
\]
In particular, the mass over $B(h^\star,r)$ is strictly positive during the entire interval $[0,t_{\max}]$

\textit{Quantitative Laplace principle.}
In the following, we denote the weighted mean as  $\mup^\alpha[\rho]: = \mup\left(G_{\alpha \overline{\F}} [\rho(t)]\right)$ for simplicity.
Let $tol>0$ be fixed. By the quantitative Laplace principle \eqref{eq:laplace}, for any $\rho\in \mathcal{P}_2(\RR^{d_h})$, under the inverse continuity Assumption \ref{asm:fitness} (iii) we have that there exists $r_{tol}>0$ and $q_{tol}>0$ sufficiently small such that 
\begin{equation*}
|\mup^\alpha[\rho(t)] - h^\star| \leq \sqrt{\frac{{tol}}8} + \frac{\exp(-\alpha q_{tol})}{\rho(t, B(h^\star, r_{tol}))}\int |h - h^*| \rho(dh)\,.
\end{equation*}
Now, we apply the estimate for our solution $\rho(t)$, together with the previous estimates for the mass around the fittest solution, and energy bounds. In particular, we use that $M_2(\rho(t))\leq \overline{M}  = C(1 + M_2(\rho_0))e^{Ct_{\max}}$ for $t\in [0,t_{\max}]$, and Lemma \ref{l:mass} with $r_{tol}$ and $\overline{M}$. We get
\begin{align*}
|\mup^\alpha[\rho(t)] - h^\star| &\leq \sqrt{\frac{{tol}}8} + \frac{\exp(-\alpha q_{tol})}{\rho(t, B(h^\star, r_{tol}))}\int |h - h^*| \rho(t,dh)\\
& \leq \sqrt{\frac{{tol}}8}  + \frac{\exp(-\alpha q_{tol})}{\rho_0(B(h^\star, r_{tol}))\wedge  \delta }2\left(|h^\star| + \sqrt{\overline{M}}\right)\,.
\end{align*}
Since $q_{tol}>0$, this estimate allows as to take $\alpha\gg 1$ sufficiently large such that the second term is of order $\sqrt{{tol}}$, and obtain 
\[
|\mup^\alpha[\rho(t)] - h^\star| \leq \sqrt{\frac{{tol}}8} + \sqrt{\frac{{tol}}8}  = \sqrt{\frac{{tol}}2}  \qquad \textup{for all} \quad t\in [0,t_{\max}]\,.
\]
By plugging this in the mean evolution, we get
\begin{align*}
\frac{d}{dt}|\mup[\rho(t)] - h^\star|^2 & = 
2(\mup[\rho(t)] - h^\star)\cdot (\mup^\alpha[\rho(t)] - \mup[\rho(t)]) \\
& = - 2|\mup[\rho(t)] - h^*|^2 + 2(\mup[\rho(t)] - h^*)\cdot (\mup^\alpha[\rho(t)] - \mup[\rho(t)]) \\
& \leq - |\mup[\rho(t)] - h^*|^2 + |\mup^\alpha[\rho(t)] - h^\star|^2\\
& \leq - |\mup[\rho(t)] - h^*|^2 + \frac{{tol}}2\,.
\end{align*}
By Gr\"ownall's inequality, we obtain the desired upper bound
\[
|\mup[\rho(t)] - h^\star|^2 \leq e^{-t}|\mup[\rho_0] - h^\star|^2 + (1 - e^{-t})\frac{{tol}}2\,.
\]
By this estimate, one can directly see that if $t_{\max}$ is sufficiently large, as $t^\star$ in Theorem \ref{t:convergence}, then both terms in the right-hand-side can be made of order $tol/2$.

\subsection{Additional proofs}
\label{app:addproofs}

\begin{proof}(Lemma \ref{l:penalization})
Fix $h\in H$. 
Under Assumption~\ref{asm:laplace}, the function
\[
\theta \longmapsto \LL(\theta,h)-\LL(\theta^\star(h),h)
\]
satisfies the assumptions of \citep[Assumption 3.1]{hasenpflug2024wasserstein} with $p=1$ and reference density $\pi_0\equiv 1$. 
The particular case where the solution is unique is discussed in \citep[Remark 3.10]{hasenpflug2024wasserstein}.
Recall $\W_1$ denotes the $L^1$-Wasserstein distance. By  \citep[Theorem 3.8]{hasenpflug2024wasserstein}
\[
\W_1\bigl(\mu^\infty_\LL(\cdot|h),\delta_{\theta^\star(h)}\bigr)
\le C \frac{1}{\sqrt{\beta}}.
\]
The constant can be be independent on $h$ thanks to the compactness of the hyperparameter space $H$ and the uniform estimate on the $\nabla^2\LL(\theta^\star(h),h)$.
Next, by Assumption~\ref{asm:chaos}, the fitness $\F$ is bounded and globally Lipschitz on $\RR^{d_\theta}\times H$, and so is $e^{\F(\theta,h)}$.
By Kantorovich--Rubinstein duality \cite[Theorem 1.14]{villani2009}, we have for some constant $C_1>0$ independent of $h$
\begin{align*}
\left|\int e^{\F(\theta,h)}\,\mu^\infty_\LL(d\theta|h)-e^{\F(\theta^\star(h),h)}\right|
&=
\left|\int e^{\F(\theta,h)}\,\mu^\infty_\LL(d\theta|h)-\int e^{\F(\theta,h)}\,\delta_{\theta^\star(h)}(d\theta)\right| \\
&\le \|e^{\F(\cdot,h)}\|_{\textup{Lip}}\,
\W_1\bigl(\mu^\infty_\LL(\cdot|h),\delta_{\theta^\star(h)}\bigr) \\
&\le C_1\frac{1}{\sqrt{\beta}}.
\end{align*}

By Assumption \ref{asm:chaos}, we have $\inf\F>0$ and so 
$
\int e^{\F(\theta,h)}\,\mu^\infty_\LL(d\theta|h)\ge e^{\inf\F}$
and
$
e^{\F(\theta^\star(h),h)}\ge e^{\inf\F}.
$
Since the $\log$ function is globally Lipschitz on $[e^{{\inf\F}},\infty)$, 
we obtain
\begin{align*}
|\overline{\F}_\beta(h)-\F(\theta^\star(h),h)|
&=
\left|
\log\int e^{\F(\theta,h)}\,\mu^\infty_\LL(d\theta|h)
-\log e^{\F(\theta^\star(h),h)}
\right| \\
&\le e^{-{\inf\F}}
\left|
\int e^{\F(\theta,h)}\,\mu^\infty_\LL(d\theta|h)
-e^{\F(\theta^\star(h),h)}
\right| \\
&\le C_2\frac{1}{\sqrt{\beta}},
\end{align*}
for some constant $C_2>0$ independent of $h$.
\end{proof}

\section{Details of the Reinforcement Learning experiment}
\label{app:numerics}

In value-based reinforcement learning, an approach to maximise the expected cumulative reward is \emph{Q-learning}
\citep{watkins1992Qlearning}. It aims to estimate the optimal
action--value function $Q^\ast$, defined as
\begin{equation}
    Q^\ast(s,a)
    =
    \mathbb{E}\!\left[
        \sum_{k=0}^{\infty} \gamma^{k} R_{t+k+1}
        \,\middle|\,
        s_t = s,\; a_t = a
    \right],
\end{equation}
where $\gamma \in (0,1)$ is a discount factor controlling the relative
importance of future rewards.

The optimal action--value function satisfies the \emph{Bellman optimality
equation}
\begin{equation}
    Q^\ast(s,a)
    =
    \mathbb{E}_{s' \sim \mathbb{P}(\cdot \mid s,a)}
    \!\left[
        R(s,a)
        + \gamma \max_{a'} Q^\ast(s',a')
    \right],
\end{equation}
which characterises $Q^\ast$ as the unique fixed point of the Bellman
optimality operator.

Q-learning approximates this fixed point via an iterative, sample-based
procedure, replacing the expectation in the Bellman equation with
single-step transitions observed from the environment. The algorithm
proceeds as follows:
\begin{enumerate}
    \item At each time step, select an action according to a policy
    derived from the current estimate $Q$. A commonly used exploration
    strategy is the $p$-greedy policy: with probability
    $p$, a random action is selected (exploration), and with
    probability $1-p$ the greedy action
    \begin{equation}
        a(s) = \operatorname*{arg\,max}_{a'} Q(s,a')
    \end{equation}
    is chosen (exploitation). The exploration rate $p$ is
    typically annealed from an initial value
    $p_{\mathrm{start}}$ to a final value
    $p_{\mathrm{end}}$, for example using an exponential
    schedule:
    \begin{equation}\label{eq:edecay}
        p_{\mathrm{next}}
        =
        p_{\mathrm{end}}
        +
        (p_{\mathrm{start}} - p_{\mathrm{end}})
        \exp\!\left(
            -\frac{\mathrm{total\ steps}}{p_{\mathrm{decay}}}
        \right).
    \end{equation}
    We set $p_{\mathrm{start}}=1, p_{\mathrm{end}}=0.01$, and usually it starts with larger values of $p_{\mathrm{decay}}$ and follows an annealing schedule, so that in the beginning of the training it explores then exploits.

    \item Execute the selected action and observe the reward $r$ and the
    next state $s'$.

    \item Update the action--value estimate according to
    \begin{equation}\label{eq:q-learning}
        Q(s,a)
        \leftarrow
        Q(s,a)+\alpha \Bigl[ r+ \gamma \max_{a} Q(s',a) - Q(s,a) \Bigr],
    \end{equation}
    where $\alpha \in (0,1]$ is a learning rate, typically set to be decreased over
    time.

    \item Set the current state to $s'$ and repeat until a terminal
    condition is reached.
\end{enumerate}

Under suitable conditions on exploration and learning rates, Q-learning
converges to $Q^\ast$ for finite Markov decision processes.

For problems with large or continuous state spaces, tabular Q-learning becomes infeasible due to the need to store
$Q(s,a)$ for all state--action pairs. This motivates the use of function
approximation, leading to the \emph{Deep Q-Network} (DQN) algorithm
\citep{mnih2015Humanlevel}. In DQN, a neural network parameterised by
$\theta$ is used to approximate the action--value function,
\[
    Q(s,a;\theta) \approx Q^\ast(s,a).
\]

The network parameters are optimised by minimising the mean-squared
temporal-difference error. At iteration $i$, the loss function is given
by
\begin{equation}
    L_i(\theta_i)=\mathbb{E}_{(s,a,r,s') \sim U(D)}\left[ \left( r+\gamma \max_{a'} Q(s',a';\theta_i^-)-Q(s,a;\theta_i)\right)^2 \right],
\end{equation}
where $\theta_i^-$ denotes the parameters of a \emph{target network},
which is held fixed for several iterations to stabilise training and discount factor $\gamma$ was set to $0.99$ throughout. The
expectation is approximated by sampling uniformly from a replay buffer
$D = \{ e_1, \dots, e_t \}$, where each experience
$e_t = (s_t, a_t, r_t, s_{t+1})$ is stored during interaction with the
environment. This mechanism, known as \emph{experience replay}, reduces temporal correlations between samples and improves data efficiency, enabling stable training of deep value-based reinforcement learning algorithms.

We chose three hyperparameters, namely, learning rate of the Adam optimiser, the $p$-decay parameter $p_{\mathrm{decay}}$ from \eqref{eq:edecay}, and the experience replay batch size. Their respective ranges were set to $(10^{-5}, 10^{-2})$, $(500, 5000)$, and $(32, 128)$. During mutation, we scaled these domains to $(-1, 1)$ before adding noise with the noise scale $\sigma=0.1$. This differs from the original PBT method and the Ray library, which typically perturb values by a multiplicative factor of $1.2$ or $0.8$.

\end{document}